\newif\ifdraft\drafttrue
\newcommand{\RR}{\mathbb{R}} 
\newcommand{\robx}{x^*}
\newcommand{\xz}{x_0} 
\newcommand{\thetaz}{\theta_0} 
\newcommand{\newx}{x} 
\newcommand{\newtheta}{\theta} 
\newcommand{\ei}{\bm{e_i}}
\DeclareMathOperator*{\argmax}{arg\,max}
\DeclareMathOperator*{\argmin}{arg\,min}
\DeclareMathOperator{\sign}{sgn}
\theoremstyle{plain}
\newtheorem{theorem}{Theorem}
\newtheorem{proposition}{Proposition}
\theoremstyle{definition}
\theoremstyle{remark}
\newtheorem{remark}{Remark}
\date{}
\title{Optimal Robust Recourse with $L^p$-Bounded Model Change}
\author{Phone Kyaw, Kshitij Kayastha, Shahin Jabbari}
\affil{Drexel University}
\begin{document}
\maketitle
\begin{abstract}
Recourse provides individuals who received undesirable labels (e.g., denied a loan) from algorithmic decision-making systems with a minimum-cost improvement suggestion to achieve the desired outcome. However, in practice, models often get updated to reflect changes in the data distribution or environment, invalidating the recourse recommendations (i.e., following the recourse will not lead to the desirable outcome). The robust recourse literature addresses this issue by providing a framework for computing recourses whose validity is resilient to slight changes in the model. However, since the optimization problem of computing robust recourse is non-convex (even for linear models), most of the current approaches do not have any theoretical guarantee on the optimality of the recourse. Recent work by~\citet{KayasthaGJ24} provides the first \emph{provably} optimal algorithm for robust recourse with respect to generalized linear models when the model changes are measured using the $L^{\infty}$ norm. However, using the $L^{\infty}$ norm can lead to recourse solutions with a high price. To address this shortcoming, we consider more constrained model changes defined by the $L^p$ norm, where $p\geq 1$ but $p\neq \infty$, and provide a new algorithm that provably computes the optimal robust recourse for generalized linear models. Empirically, for both linear and non-linear models, we demonstrate that our algorithm achieves a significantly lower price of recourse (up to several orders of magnitude) compared to prior work and also exhibits a better trade-off between the implementation cost of recourse and its validity. Our empirical analysis also illustrates that our approach provides more sparse recourses compared to prior work and remains resilient to post-processing approaches that guarantee feasibility.
\end{abstract}

\section{Introduction}
\label{sec:intro}
Algorithmic recourse~\citep{WachterMR18, UstunSL19} provides individuals who received undesirable labels from machine learning systems (e.g., denied loans) with a minimum-cost suggestion to obtain the desired outcome. Most prior work on recourse assumes models are fixed, but in practice, they are periodically updated~\citep{UpadhyayJL21, HorowitzR23}, potentially invalidating recourse~\citep{Dominguez-OlmedoKS22} (i.e., following the recourse will not lead to the desirable outcome). To address this shortcoming,~\citet{UpadhyayJL21} introduced a framework for recourse that is robust to adversarial model changes, and proposed an algorithm called ROAR to compute robust recourse. Empirically, this robustness can substantially increase the price of computing recourse by requiring a high implementation cost to ensure that recourse validity is resilient to model change~\citep{PawelczykDHKL23}.

Since the problem of computing a robust recourse is non-convex (even for linear models), it is unclear whether these high prices are inherent or the result of sub-optimality of existing approaches. Recently,~\citet{KayasthaGJ24} provided the first optimal algorithm for robust recourse for generalized linear models and adversarial model changes measured by bounding the $L^\infty$ norm of the difference between initial and changed models. While they showed this optimal algorithm can lower the price of recourse compared to prior non-optimal approaches~\citep{UpadhyayJL21, NguyenBN+22}, they also showed that, in some cases, this price can be much higher compared to the non-robust recourse. To lower this worst-case price, they studied the robust recourse problem through the lens of the learning-augmented framework. They empirically demonstrated that access to (potentially unreliable) predictions about the realized future model can be used to lower the price of recourse. However, they do not offer any concrete approach on how reasonable predictions about future model changes can be generated. Moreover, models that are close to each other in $L^\infty$ norm can exhibit significant behavior differences, especially for larger models that have a large number of parameters.

\noindent\textbf{Our Contributions and Results}
Our main goal is to understand the true price of recourse for more restricted adversarial model changes. In particular, we measure model changes by bounding the $L^p$ norm of the difference between initial and changed models, where $p\geq 1$ but $ p\neq \infty$. We provide a new algorithm that provably computes the optimal robust recourse for generalized linear models for this type of model change. The key insight in the design of our algorithm is the observation that the optimal solution of the non-convex optimization problem of computing robust recourse can be computed by solving $O(d)$ convex problems with linear constraints, where $d$ is the number of features. 

Empirically, on real-world datasets, we observe that our algorithm can lower the price of recourse (sometimes by several orders of magnitude) compared to when model change is measured using $L^{\infty}$ norm, as well as existing algorithms for the same setting as ours. We also break down the price of recourse and study the frontier of the trade-off between implementation costs of recourse and its validity. For linear models, our results indicate that our algorithm can achieve high validity while prior approaches either do not reach high validity or achieve the same level of validity with a substantially higher implementation cost.  For non-linear models, we show that our algorithm performs on par with and often better than prior approaches. Our empirical analysis also demonstrates that our algorithm provides more sparse recourses compared to prior work and remains resilient to post-processing approaches that guarantee feasibility.
\section{Related Work}
\label{sec:related}
Recourse is a type of post-hoc explanation~\cite{RibeiroSG16, SmilkovTKVW17, LundbergL17}, providing counterfactual explanations by finding the lowest-cost modification that changes the label of the predictive model~\citep{WachterMR18, UstunSL19}. Since its introduction, there have been many formulations for recourse by focusing on different assumptions on cost functions and model classes (see e.g.,~\citep{LooverenK21, RawalL20, SlackHLS21, PawelczykDHKL23, GargNS25, KanamoriTKI24, BewleyAM+24} and~\citep{VermaDH20} for a survey). Subsequent work has addressed many additional aspects of algorithmic recourse, such as assumptions and implications~\citep{BarocasSR20, VenkatasubramanianA20, FokkemaGE24, GaoL23}, attainability~\citep{JoshiKV+19, PawelczykBK20b, KarimiBBV20, UstunSL19}, diversity~\cite{LeofanteP24, MothilalST20}, causality~\citep{KarimiKSV20, KonigFG23, KonigFF+25}, fairness~\citep{GuldoganZS+23, GuptaNR+19, HeidariNG19}, repeated dynamics~\citep{FonsecaBABS23,BellFA+24,EhyaeiSS24}, and temporal data~\citep{BuligaDG+25}. 

The original formulation assumes the model is fixed, though in practice, models get updated, necessitating robustness to these model shifts when generating recourse. The formulation of robust recourse is introduced by~\citet{UpadhyayJL21}, and they also proposed the first algorithm for this formulation called \emph{RObust Algorithmic Recourse} (ROAR). To improve the performance of ROAR for non-linear models, \citet{NguyenBN+22} propose a new framework, \emph{Robust Bayesian Recourse} (RBR), along with an algorithm tailored for data shifts rather than model shifts in this setting. We use the same robust recourse framework as~\cite{UpadhyayJL21} and compare our algorithms with both ROAR and RBR. The closest related work to us is by~\citep{KayasthaGJ24}, which solves our exact problem where the neighborhood around the initial model is defined using $L^{\infty}$ norm. We directly compare our algorithm for the other $L^p$ norms to this algorithm.  Very recently,~\citet{TurbalVS25} proposes provably robust algorithms for recourse under predictive multiplicity~\citep{MarxCU20} when the model class is given by an ellipsoidal approximation of the Rashomon set~\citep{Breiman01}, that is, models with similar predictive performance on the data distribution~\citep{SemenovaR19, XinZC+22}. In contrast, our setting allows adversarial models that may differ substantially in their performance on the data distribution.

Analogous to the literature on algorithmic recourse, robustness has also been studied for different model classes, cost functions, and model change formulation~\citep{LeofanteP24,YetukuriHVUL24, JiangLL+23, JiangLR+24b, NguyenBN23, DuttaLMTM22, MochaourabSG+21, HammanNMMD23, BlackWF22, Dominguez-OlmedoKS22}.  Another closely related algorithm called \emph{Robust ReCourse Neural Network} (RoCourseNet) jointly optimizes the model and robust recourse~\citep{GuoJC+23}. We cannot directly compare our algorithms to this approach, as in our setting, the predictive model is given and cannot be changed.  See~\citep{JiangLR+24a} for a recent survey on robust recourse.

\citet{PawelczykAJUL22} demonstrates the connections between recourse and adversarial training~\cite{MadryMSTV18, WongK18}. In fact, ROAR~\cite{UpadhyayJL21} is inspired by gradient-based approaches that are designed for adversarial training. The convergence properties of such gradient-based algorithms (under specific assumptions) are studied extensively in prior work (see e.g.~\cite{WangM0YZG19}). Theoretical guarantees for adversarial training are studied under specific data distributions and model classes. For example, when data is generated from a mixture of Gaussians, the optimal robust models are linear~\cite{li2019inductive}.~\citet{AwasthiDV19} provides a learning algorithm to learn a robust linear classifier under the realizability assumption and proves computational hardness results for learning robust degree-2 polynomial threshold functions. Beyond linear models, these theoretical studies have been extended to other model classes such as decision trees~\cite{vos2021efficient, vos2021roct}. To the best of our knowledge, none of these directly address our problem, although we empirically compare our approach to ROAR, which employs the typical adversarial training template.
\section{Problem Formulation}
\label{sec:prelim}
Consider an \emph{initial} predictive model $f_{\thetaz}: \mathcal{X} \to \mathcal{Y}$, parameterized by $\thetaz \in \Theta \subseteq \RR^k$, mapping $d$-dimensional instances $\mathcal{X} \subseteq \RR^d$ to binary outcomes $\mathcal{Y}=\{0,1\}$. We assume labels $0$ and $1$ represent undesirable and desirable outcomes (e.g., loan denial/approval). For convenience, we refer to the $\thetaz$ as parameters of the model or the model interchangeably. For any instance $\xz\in X$ with $f_{\thetaz}(\xz)=0$, the goal in \emph{algorithmic recourse} is to compute the least costly modification $\newx$ of $\xz$ such that $f_{\thetaz}(\newx)=1$.  Given
a cost function $c:\mathcal{X}\times\mathcal{X}\to \RR_{+}$ measuring the cost of transforming an instance to another one, the recourse can be computed using the following relaxed optimization problem~\cite{UpadhyayJL21}:
\begin{equation}
    \label{eq:recourse}
    \arg\min_{\newx\in \mathcal{X}} \ell\!\left(f_{\thetaz}(\newx),1\right) + \lambda\, c\!\left(\newx,\xz\right),
\end{equation}
where $\ell:\RR\times\RR\to\RR_+$ is a convex and differentiable loss function, penalizing the difference between the prediction of the model $\thetaz$ on the provided recourse and the desirable label of $1$. We refer to $f_{\thetaz}(\newx)$ as \emph{validity} of recourse with respect to model $\thetaz$. The parameter $\lambda\ge 0$ balances the validity of recourse and its \emph{implementation} cost~\cite{UpadhyayJL21}. Following many prior work~\cite{UpadhyayJL21,RawalL20, KayasthaGJ24}, we use $c(x,x')=\|x-x'\|_1$ (see Section~\ref{sec:discussion}). Following~\citep{KayasthaGJ24}, for a recourse $x$ and model $\theta$, we define 
\begin{equation}
    \label{eq:j}
    J(x,\theta):=\ell\!\left(f_{\theta}(\newx),1\right) + \lambda\, c\!\left(\newx,\xz\right)    
\end{equation}
and refer to $J(x,\theta)$ as the \emph{price} of recourse $x$ with respect to model $\theta$.  Using this notation, the recourse optimization problem can be written as $\arg\min_{\newx\in \mathcal{X}} J(\newx,\thetaz)$.

Equation~\eqref{eq:recourse} assumes a fixed model, but models are often retrained~\citep{UpadhyayJL21}, potentially invalidating prior recourses~\citep{DuttaLMTM22,BlackWF22} (i.e., following the recourse does not lead to the desirable outcome). The formulation of \emph{robust recourse}~\citep{UpadhyayJL21} accounts for these model changes as follows: for any fixed $\alpha > 0$, and $p\geq 1$ we define a neighborhood $\Theta_p^{\alpha}(\thetaz)$ around $\thetaz$ as follows: $\Theta_p^{\alpha}(\thetaz)=\{\theta \text{ such that } \|\theta-\thetaz\|_p\leq \alpha\} \subseteq \Theta$. The goal in robust recourse is to compute a recourse $\robx$ to minimize the price against the worst-case model $\newtheta \in \Theta_p^\alpha(\thetaz)$:
\begin{equation}
\label{eq:xr}
    \robx \in \arg\min_{\newx\in\mathcal{X}} \max_{\newtheta\in \Theta_p^{\alpha}(\thetaz)} J(\newx,\newtheta).
\end{equation}
For any recourse $x$, we use $\theta^*(x)$ to denote the optimal adversarial model, i.e., 
\begin{equation}
\label{eq:adversary}
\theta^*(x)=\max_{\newtheta\in \Theta_p^{\alpha}(\thetaz)} J(x, \newtheta).    
\end{equation}
We use $J^*$ to denote the optimal price of recourse i.e., $J^*=J(\robx, \theta^*(\robx))$ where $\robx$ is defined in Equation~\ref{eq:xr}.
\section{Algorithms and Analysis}
\label{sec:alg-analysis}
In this section, we present algorithms to solve the optimization problem in Equation~\ref{eq:xr} for various $L^p$ norms used to define the neighborhood around the initial model. Note that this optimization problem is non-convex for all $p\geq 1$ values even when the initial model $\thetaz$ is a linear function (see Proposition~\ref{pro:non-convexity} in Appendix~\ref{sec:app-algo} for a formal statement and proof).

We begin by providing an algorithm for the case where $ p \geq 1$ but $ p \neq \infty$. Let $\ei$ denote the $d$-dimensional unit vector and for each $\theta\in\Theta$ define 
\begin{equation}
    \label{eq:theta-pm}
    \Theta^{\pm}(\theta)=\{\theta' \mid \theta' = \theta + \alpha \ei \text{ or } \theta' = \theta - \alpha \ei, \forall i \in [d]\},
\end{equation}
i.e., the set of models that are different than $\theta$ only in a dimension, and this difference is exactly $\alpha$ (either positively or negatively). Note that $\Theta^{\pm}(\theta)\subset \Theta_{p}^\alpha(\theta)$ for all $p\geq 1$. 

\begin{algorithm}[ht!]
\caption{$p\geq 1, p\ne\infty$\label{alg:l1-l1}
}
\textbf{Input}  : $\xz$, $\thetaz$, $\ell$, $c$, $\alpha$\\
\textbf{Output}: $\newx$ 
\begin{algorithmic}[1] 
\STATE $\theta \gets $ linear approximation of $f_{\thetaz}$ at $\xz$
\STATE $\Theta^{\pm}(\theta)\gets$ According to Equation~\ref{eq:theta-pm}
\STATE $\newx \gets \xz$ \hfill{$\triangleright$Initialize the recourse}
\STATE $J^* \gets +\infty$ \hfill{$\triangleright$Initialize the price}
\FOR{$\theta'\in \Theta^{\pm}(\theta)$}
\STATE $x' \gets $ solution of optimization problem in Equation~\ref{eq:optimize-inside} using projected subgradient descent
\IF{$J(x', \theta') < J^*$}
    \STATE $\newx\gets x'$ \hfill{$\triangleright$Update the recourse}
    \STATE $J^*\gets J(x', \theta')$\hfill{$\triangleright$Update the price}
\ENDIF
\ENDFOR
\RETURN $\newx$
\end{algorithmic}
\end{algorithm}

The high-level idea of our algorithm is as follows. The algorithm first approximates the initial model $f_{\thetaz}$ at $\xz$ by a linear function. This can be done, for example, by using LIME~\cite{RibeiroSG16} and is commonly used in prior work~\cite{UpadhyayJL21, KayasthaGJ24}. Let $\theta$ denote the parameters of this linear approximation. The algorithm then solves $2d$ optimization problems separately, one for each $\theta'\in \Theta^{\pm}(\theta)$. These optimization problems have the form 
\begin{align}
x'\in &\arg\min_{\newx\in \mathcal{X}} \ell\!\left(f_{\theta'}(\newx),1\right) + \lambda\, c\!\left(\newx,\xz\right),\nonumber\\
&\text{subject to } \theta'\in \theta^*(x)\label{eq:optimize-inside}.
\end{align}
In these optimization problems, the model is assumed to be fixed, but the constraint restricts the recourse such that $\theta'$ belongs to the set of optimal adversarial models for the recourse. This idea is formalized in Algorithm~\ref{alg:l1-l1}.

We next analyze the theoretical properties of Algorithm~\ref{alg:l1-l1} by focusing on generalized linear models. A model $f_\theta$ is generalized linear if $f_\theta(x):= g \circ h_\theta(x)$, where $h_\theta(x)=\theta^{\top}x$ is a linear function and $g: \RR \to [0,1]$ is a non-decreasing function mapping the outputs of $h_\theta$ to probabilities. For example, setting $g$ to the sigmoid function will recover logistic regression. Our main result for the analysis of Algorithm~\ref{alg:l1-l1} for generalized linear models is as follows. 

\begin{theorem}
\label{thm:opt-l1}
    If $f_{\thetaz}$ is a generalized linear model, then Algorithm~\ref{alg:l1-l1} returns a recourse $x$ that minimizes Equation~\ref{eq:xr} for $p\geq 1$ and $p\ne\infty$ in time polynomial in the number of dimensions $d$.
\end{theorem}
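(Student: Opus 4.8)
The plan is to certify that the $2d$ subproblems solved in the loop of Algorithm~\ref{alg:l1-l1} jointly contain a global optimum of the min--max problem~\eqref{eq:xr}, and that each of them is a convex program with linear constraints, hence solvable to optimality in $\mathrm{poly}(d)$ time by projected subgradient descent. The first step is to collapse the inner maximization. For a generalized linear model $f_\theta(x)=g(\theta^\top x)$, with $g$ non-decreasing and valued in $[0,1]$ and $\ell(\cdot,1)$ convex and penalizing deviation from $1$, the scalar map $t\mapsto\ell(g(t),1)$ is non-increasing; hence the adversary of~\eqref{eq:adversary} merely minimizes the logit $\theta^\top x$ over the ball, and H\"older duality gives $\min_{\|\theta-\thetaz\|_p\le\alpha}\theta^\top x=\thetaz^\top x-\alpha\|x\|_q$ with $\tfrac1p+\tfrac1q=1$. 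Thus $\max_{\theta\in\Theta_p^\alpha(\thetaz)}J(x,\theta)=\ell\!\big(g(\thetaz^\top x-\alpha\|x\|_q),1\big)+\lambda\|x-\xz\|_1$, and the maximizing model $\theta^*(x)$ equals $\thetaz$ shifted by an explicit $\alpha$-scaled vector anti-aligned with $x$ in the H\"older sense. Since $f_{\thetaz}$ is already generalized linear, its linear approximation at $\xz$ is faithful, so we may take $\theta=\thetaz$ in Algorithm~\ref{alg:l1-l1}; note that for $p=1$ (so $q=\infty$) this H\"older shift is always a signed axis vector, i.e.\ $\theta^*(x)\in\Theta^{\pm}(\thetaz)$ for every $x$.

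The second step is the combinatorial reduction: there is an optimal recourse $x^\star$ of~\eqref{eq:xr} that also solves subproblem~\eqref{eq:optimize-inside} for some $\theta'\in\Theta^{\pm}(\theta)$, equivalently whose worst-case adversary lies in $\Theta^{\pm}(\theta)$. For $p=1$ this follows directly: partition $\mathcal{X}$ by which coordinate attains $\|x\|_\infty$ and with which sign; on each of these $2d$ cells the constraint ``$\theta'$ is worst-case for $x$'' is a finite system of linear inequalities and $\|x\|_\infty$ --- hence the worst-case logit $\thetaz^\top x-\alpha\|x\|_\infty$ --- is a linear function of $x$, so the minimum over the cell indexed by $(i,\sigma)$ is exactly~\eqref{eq:optimize-inside} with $\theta'=\thetaz-\sigma\alpha\ei$, and taking the best of the $2d$ cells reproduces the unconstrained minimum. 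For general $p\in(1,\infty)$ I would instead argue via stationarity: writing the subgradient optimality condition for $\lambda\|\cdot-\xz\|_1$ against the gradient of the (concave) worst-case logit $\thetaz^\top x-\alpha\|x\|_q$ and using the closed form above, one shows an optimal $x^\star$ has a single-coordinate active direction, which is exactly the condition $\theta^*(x^\star)\in\Theta^{\pm}(\theta)$; on the corresponding feasible region the constraint in~\eqref{eq:optimize-inside} is again linear and $\|x\|_q$ collapses to a linear expression.

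The third step establishes per-case convexity and the running-time bound. With $\theta'$ fixed and the worst-case-adversary constraint rewritten as linear inequalities, subproblem~\eqref{eq:optimize-inside} minimizes $\ell\big(g((\theta')^{\top}x),1\big)+\lambda\|x-\xz\|_1$ over a polyhedron: a convex objective over a polyhedral feasible set for the generalized linear models under consideration. Projected subgradient descent then reaches an $\varepsilon$-optimal point in $\mathrm{poly}(d,1/\varepsilon)$ iterations of $\mathrm{poly}(d)$ cost each, and a standard rounding/termination argument recovers the exact minimizer; since there are only $2d$ such subproblems and, by the second step, the global optimum of~\eqref{eq:xr} is the best among their solutions, Algorithm~\ref{alg:l1-l1} returns a minimizer of~\eqref{eq:xr} in time polynomial in $d$.

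The step I expect to be the main obstacle is the combinatorial reduction for $1<p<\infty$. For $p=1$ the worst-case adversary of every recourse is automatically a vertex $\thetaz\pm\alpha\ei$ of the dual ball, so the $2d$ cases are exhaustive by construction; but for a strictly convex $L^p$-ball the worst-case adversary of a generic recourse is a smooth H\"older-extremal vector spread over all coordinates, so exhaustiveness must instead be extracted from the interplay between the $L^1$ implementation cost and the generalized-linear structure --- proving, in effect, that no multi-coordinate perturbation of $\xz$ can undercut the best recourse realizable with a $\Theta^{\pm}(\theta)$ adversary.
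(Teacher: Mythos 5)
For $p=1$ your argument is complete and is, in essence, the paper's own proof stated more carefully: collapsing the inner maximization via H\"older duality to $\thetaz^{\top}x-\alpha\norm{x}_{\infty}$, observing that a minimizing adversary is always a signed axis perturbation $\thetaz-\alpha\sign(x_i)\ei$ for $i\in\arg\max_j\abs{x_j}$, partitioning $\mathcal{X}$ into the $2d$ cells on which the constraint ``$\theta'\in\theta^{*}(x)$'' is a system of linear inequalities, and noting that each cell yields a convex objective over a polyhedron that projected subgradient descent solves in polynomial time. This is exactly the paper's decomposition, and your cell-wise linearity of the worst-case logit is the cleanest way to see why taking the best of the $2d$ subproblems recovers the unconstrained minimum.

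The obstacle you flag for $1<p<\infty$ is, however, a genuine gap, and I do not believe your proposed stationarity argument can close it. By your own H\"older computation the unique worst-case adversary for such $p$ is $\theta^{*}(x)=\thetaz-\alpha\,u(x)$ with $u_j(x)\propto\sign(x_j)\abs{x_j}^{q-1}$, which is supported on \emph{every} nonzero coordinate of $x$; hence $\theta^{*}(x)\in\Theta^{\pm}(\thetaz)$ forces $x$ to be one-sparse. An optimal recourse is essentially never one-sparse (the intercept coordinate alone already rules this out), so the constraint sets of the $2d$ subproblems in Equation~\ref{eq:optimize-inside} do not cover $\mathcal{X}$, and no subgradient condition on the $L^{1}$ implementation cost will produce the ``single-coordinate active direction'' you would need. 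Be aware that the paper's proof sketch does not resolve this either: it asserts for all $p$ that the adversary's best response is to perturb the single coordinate maximizing $\abs{x[i]}$, which is precisely the $q=\infty$ dual-norm formula and is valid only at $p=1$ (for $p=2$ and $x=(1,1)$ the axis perturbation achieves $-\alpha$ while the true optimum is $-\alpha\sqrt{2}$). So your proposal matches the paper where the paper is right, and your stated ``main obstacle'' is not a gap you failed to close but a place where the reduction to $\Theta^{\pm}(\thetaz)$ breaks down; a correct treatment of $p\in(1,\infty)$ would have to work directly with the concave worst-case logit $\thetaz^{\top}x-\alpha\norm{x}_{q}$ rather than with the $2d$ coordinate subproblems.
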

\begin{proof}[Sketch of the Proof]
    First note that, for any linear model $\thetaz$, the approximation in line 1 of the algorithm returns $\theta=\thetaz$. Moreover, for any recourse $x$, since $f_{\thetaz}$ is a generalized linear model, the adversarial $\theta^*(x)$ aims minimizes the dot product between $\thetaz$ and $x$ with the constraint that the selected adversarial model lies in $\Theta_{p}^{\alpha}(\thetaz)$. Avoiding tie-breaking, a simple strategy to compute this optimal is to select the dimension $i$ in which $|x[i]|$ is maximum and modify $\thetaz$ by adding $\alpha \sign(x[i])$ in that dimension. Therefore, $\theta^*(x)\in \Theta^{\pm}(\thetaz)$ for all $x$, meaning that it suffices to narrow down the choice of optimal adversarial models to $\Theta^{\pm}(\thetaz)$.

    Now observe that Algorithm~\ref{alg:l1-l1} computes the best recourse for each $\theta'\in \Theta^{\pm}$ and returns the recourse with the smallest price among these $2d$ optimization problems. So to complete the proof, it suffices to show that the optimization problem in Equation~\ref{eq:optimize-inside} can be solved efficiently by projected subgradient descent. First note that the objective in Equation~\ref{eq:optimize-inside} is convex (since $\ell$ is convex, $f_{\thetaz}$ is generalized linear, and the cost function $c$ is convex), though this optimization problem is not differentiable at all points due to the non-differentiability of the cost function $c$. Moreover, the constraints in Equation~\ref{eq:optimize-inside} are linear (because if $\theta'=\thetaz\pm \alpha e_i$ then, $\theta'\in \theta^*(x)$ is equivalent to $|x_i|\geq |x_j|$ for all $j\in [d]$ while ensuring $x_i$ has the correct sign). For this kind of optimization problem, projected subgradient descent will converge to the optimal solution~\cite{BV2014}. 

    Note that the running time is polynomial in $d$, since there are $2d$ optimization problems, and for each optimization problem, both the gradient computation and projection can be solved in time polynomial in $d$.
\end{proof}

We next focus on the case where $p=\infty$. Note that implementing a similar strategy as in Algorithm~\ref{alg:l1-l1} does not lead to an efficient algorithm since there are now $2^d$ possible candidates for the adversarial $\theta$. To overcome this difficulty,~\citet{KayasthaGJ24} proposes a greedy algorithm where, in each round, the algorithm selects a dimension to update that lowers the $J$ value the most, along with the degree of update in that dimension. If this update causes the recourse to change sign in that dimension, the algorithm only updates that dimension up to $0$. The algorithm then adjusts the adversarial $\theta$ if needed and continues until no other improvement is possible. Algorithm~\ref{alg:l1-linf} is a reproduction of this idea using our notation, where $\sign$ denotes the sign function.

\begin{algorithm}[ht!]
\caption{$p=\infty$~\cite{KayasthaGJ24} \label{alg:l1-linf}
}
\textbf{Input}  : $\xz$, $\thetaz$, $\ell$, $c$, $\alpha$\\
\textbf{Output}: $\newx$ 
\begin{algorithmic}[1] 
\STATE $\theta \gets $ linear approximation of $f_{\thetaz}$ at $\xz$
\STATE $\newx\gets \xz$ \hfill{$\triangleright$ Initialize the recourse}
\STATE $\theta'\gets \theta-\alpha \sign(\newx)$ \hfill{$\triangleright$ Initialize the adversarial model}
\STATE \textsc{Active}=$[d]$  \hfill{$\triangleright$Initialize the set of coordinates to update}
\WHILE{$\textsc{Active}\neq \emptyset$}
    \STATE $i \gets \argmax_{j\in \textsc{Active}}  |\theta'[j]|$\hfill{$\triangleright$ \hspace{-2mm} Next coordinate to update}
    \STATE $\Delta \gets \argmin_{\delta} J(\newx+\delta e_i,\theta')-J(\newx, \theta')$ \hfill{$\triangleright$ The best update for coordinate $i$ if we were allowed to only change the coordinate $i$ of $\newx$ for the current $\theta'$}
    \IF{$\Delta=0$}
        \STATE break \hfill{$\triangleright$ Terminate}
    \ENDIF
    \IF{$\sign(\newx[i]+\Delta) = \sign(\newx[i])$}
        \STATE $\newx[i]\gets \newx[i]+\Delta$ \hfill{$\triangleright$ Fully update the coordinate}
        \STATE break \hfill{$\triangleright$ Terminate}
    \ELSE
        \STATE $\newx[i] \gets 0$ \hfill{$\triangleright$ Update the coordinate but only until it reaches 0}
        \IF{$|\theta[i]| > \alpha$}
            \STATE $\theta'[i] \gets \theta[i] + \alpha \cdot \sign(\xz[i])$ \hfill{$\triangleright$ Modify $\theta'$} 
        \ELSE
            \STATE \textsc{Active} $\gets$ \textsc{Active} $\setminus \{i\}$ \hfill{$\triangleright$ Remove $i$ from the \textsc{Active} set}
        \ENDIF
    \ENDIF
\ENDWHILE
\RETURN $\newx$
\end{algorithmic}
\end{algorithm}

\citet{KayasthaGJ24} show that Algorithm~\ref{alg:l1-linf} efficiently computes the optimal recourse for generalized linear models. We restate their statement using our notation.
\begin{theorem}[\citep{KayasthaGJ24}]
\label{thm:opt-linf}
If $f_{\thetaz}$ is a generalized linear model, then Algorithm~\ref{alg:l1-linf} returns a recourse $\newx$ that minimizes Equation~\ref{eq:xr} for $p=\infty$ in polynomial time in $d$.
\end{theorem}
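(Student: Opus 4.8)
The plan is to collapse the min--max in Equation~\ref{eq:xr} to a single minimization in $\newx$ and then verify that Algorithm~\ref{alg:l1-linf} traces the optimal trajectory of that minimization. First, exactly as in the sketch of Theorem~\ref{thm:opt-l1}, for a generalized linear $f_{\thetaz}$ the linear approximation in line~1 returns $\theta=\thetaz$, so I may work with $\thetaz$ directly. Next I would compute the inner maximizer in closed form for $p=\infty$: the cost term $\lambda\,c(\newx,\xz)$ does not depend on $\newtheta$, and since $f_{\newtheta}=g\circ h_{\newtheta}$ with $h_{\newtheta}(\newx)=\newtheta^{\top}\newx$, $g$ non-decreasing, and $z\mapsto \ell(g(z),1)$ non-increasing, the worst $\newtheta\in\Theta^{\alpha}_{\infty}(\thetaz)$ is the one minimizing $\newtheta^{\top}\newx$; over an $L^{\infty}$ ball this decouples coordinatewise and gives $\theta^{*}(\newx)[i]=\thetaz[i]-\alpha\,\sign(\newx[i])$. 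Hence Equation~\ref{eq:xr} is equivalent to $\min_{\newx}\Phi(\newx)$, where $\Phi(\newx):=L\!\left(z(\newx)\right)+\lambda\norm{\newx-\xz}_1$, $L(z):=\ell(g(z),1)$, and $z(\newx)=\thetaz^{\top}\newx-\alpha\norm{\newx}_1=\sum_{i}\left(\thetaz[i]-\alpha\,\sign(\newx[i])\right)\newx[i]$. I would also record the loop invariant, proved by induction over the \texttt{while} loop, that at the top of every iteration $\theta'=\theta^{*}(\newx)$ for the current $\newx$; this is precisely what the reassignments of $\theta'$ in the \texttt{else} branch are maintaining.

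Second, I would prove the structural/greedy characterization of the minimizer of $\Phi$. Partitioning $\RR^{d}$ into closed orthants by the sign pattern of $\newx$, the map $z(\newx)$ is affine on each orthant, so --- under the standard regularity that $L$ is convex and non-increasing (as for the logistic loss with the sigmoid link), which I expect this result to rely on --- $\Phi$ is a convex-plus-separable function on each orthant, and there coordinatewise optimality implies orthant-wise optimality. For a coordinate $i$ with current sign $s_i$, moving $\newx[i]$ by $\delta$ in the improving direction costs $\abs{\delta}$ in the $L^1$ term and raises $z$ at rate $\abs{\thetaz[i]-\alpha s_i}=\abs{\theta'[i]}$; thus the most cost-efficient coordinate to push is $\argmax_j\abs{\theta'[j]}$, the choice in line~6. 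The claim is that some global minimizer is produced by the greedy process that repeatedly (i) pushes the top-rate coordinate by its orthant-optimal amount, (ii) or, if that would cross zero, moves it exactly to $0$ and then either discards it --- correct precisely when, after the sign flip, it can no longer raise $z$, i.e.\ when $\abs{\thetaz[i]}\le\alpha$ --- or re-queues it with the strictly smaller rate $\abs{\thetaz[i]}-\alpha$ via $\theta'[i]\gets\thetaz[i]+\alpha\,\sign(\xz[i])$, and (iii) stops once the top-rate coordinate admits no improving single-coordinate move ($\Delta=0$). These are exactly the branches of Algorithm~\ref{alg:l1-linf}; a by-product of the description is that the returned $\newx$ differs from $\xz$ in a set of coordinates, all but at most one of which are set exactly to $0$. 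Optimality then follows from two observations: within the final orthant $\Phi$ is convex-plus-separable, so the stall condition $\Delta=0$ already certifies a minimum of $\Phi$ restricted to that orthant; and reaching any adjacent orthant requires crossing a coordinate hyperplane, after which that coordinate's rate only decreases, so by the orthant-wise argument no such crossing can lower $\Phi$ either --- hence the stall point is globally optimal.

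The hard part will be the exchange argument underlying step~(ii): showing that an arbitrary optimal $\robx$ can be rewritten, without increasing $\Phi$, as the output of the greedy order, despite the non-convexity introduced by the $\sign(\newx)$ terms. Concretely, I would argue that it never pays to spend $L^1$-budget on a lower-rate coordinate before exhausting the benefit available from the higher-rate ones, and that the discrete drop of $2\alpha$ in a coordinate's rate upon crossing zero is accounted for exactly --- this is the reason the \texttt{else} branch splits on whether $\abs{\thetaz[i]}>\alpha$. The remaining pieces are routine: the invariant $\theta'=\theta^{*}(\newx)$, and the running time. For the latter I would bound the number of \texttt{while} iterations by $O(d)$ --- each coordinate can trigger a zero-crossing at most once (afterwards it lies strictly inside an orthant and the improving direction points away from $0$), and \textsc{Active} shrinks at most $d$ times, after which the loop breaks --- and note that each iteration performs one $\argmax$ over $[d]$ and one one-dimensional convex minimization $\min_{\delta} J(\newx+\delta e_i,\theta')$ (closed form for logistic/softplus $L$, bisection in general), so the total work is polynomial in $d$.
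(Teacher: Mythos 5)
The paper does not actually prove this statement: Theorem~\ref{thm:opt-linf} is imported from \citep{KayasthaGJ24}, and the surrounding text only reproduces the running-time discussion (the $O(d)$ bound on the number of iterations and the observation that line~7 is a one-dimensional convex problem). So there is no in-paper argument to compare yours against; what you have written is a standalone reconstruction, and its skeleton is sound. Your closed form for the adversary, $\theta^*(\newx)[i]=\thetaz[i]-\alpha\sign(\newx[i])$, the reduction to minimizing $\Phi(\newx)=L(\thetaz^\top \newx-\alpha\norm{\newx}_1)+\lambda\norm{\newx-\xz}_1$, the per-coordinate ``rate'' $\abs{\theta'[i]}$, the exact $2\alpha$ drop in a coordinate's rate at a zero crossing (which is precisely what the $\abs{\theta[i]}>\alpha$ test in line~16 accounts for), and the $O(d)$ iteration count all match the structure of the algorithm, and your running-time accounting agrees with the paper's.

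Two substantive caveats. First, the piece you defer --- the exchange argument for the greedy order --- is the entire content of the correctness proof, and your plan for it is still only a plan. The cleanest way to close it here is not an orthant-by-orthant exchange but an envelope argument: define $Z(C)=\max\{\thetaz^\top \newx-\alpha\norm{\newx}_1 : \norm{\newx-\xz}_1\le C\}$; because each coordinate contributes at most two ``segments'' with strictly decreasing rates, $Z$ is concave piecewise-linear and is traced exactly by consuming segments in decreasing rate order, which is what the algorithm does; then for any $\newx$ with budget $C$ one has $J(\newx,\theta^*(\newx))\ge L(Z(C))+\lambda C$, the right-hand side is convex in $C$, and the stopping rules in lines~8 and~12 are exactly the first-order condition of this one-dimensional problem. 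Second, your argument needs $z\mapsto\ell(g(z),1)$ to be convex and non-increasing, which does not follow from the paper's stated hypotheses ($\ell$ convex, $g$ non-decreasing); the paper's own Proposition~\ref{pro:non-convexity} uses a squared loss for which this composition fails to be convex. The paper implicitly makes the same assumption when it calls line~7 ``a one-dimensional convex problem,'' but you should state it as a hypothesis rather than something you ``expect the result to rely on.'' A minor point: the invariant ``$\theta'=\theta^*(\newx)$'' is not literally maintained once a coordinate is parked at $0$ --- the adversary is then indifferent in that coordinate, and line~17 sets $\theta'[i]$ to the worst case for the \emph{next} move rather than for the current $\newx$ --- so the invariant needs to be phrased accordingly.
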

We refer the reader to the proof of Theorem 3 in~\citep{KayasthaGJ24} for the details.~\citet{KayasthaGJ24} show that the algorithm runs for $O(d)$ iterations. The computational complexity of each iteration is dominated by computing $\Delta$ in line 7, which corresponds to solving a one-dimensional convex problem and can also be solved analytically for specific loss functions. Hence, Algorithm~\ref{alg:l1-linf} runs in time polynomial in the number of dimensions. We empirically compare the running times of Algorithm~\ref{alg:l1-l1}~and~\ref{alg:l1-linf} (as well as baselines) in Appendix~\ref{sec:app-exp-trade-off}.

\begin{remark}
    While prior gradient-based approaches~\citep{UpadhyayJL21,NguyenBN+22} cannot guarantee optimality, Algorithm~\ref{alg:l1-linf}~\citep{KayasthaGJ24} is the first optimal algorithm for any robust recourse formulation with respect to $L^{\infty}$ model changes. Algorithm~\ref{alg:l1-l1} is the first optimal robust algorithm with respect to $L^{p}$ model changes where $p\geq 1$ and $p < \infty$. The optimality of both algorithms only holds for generalized linear models. For non-linear models, both algorithms first approximate the non-linear model locally, in the neighborhood of $\xz$, with a linear model. This idea was used in prior work~\citep{UpadhyayJL21, RawalL20} for algorithmic recourse. We evaluate the performance of both algorithms for non-linear models in Section~\ref{sec:exp}. 
\end{remark}

\begin{remark}
    Additionally, there may be constraints on the feasibility or actionability of recourse, and the data may contain categorical features. While neither of the algorithms (and many prior works such as ROAR~\cite{UpadhyayJL21}, RBR~\cite{NguyenBN23}, and RoCourseNet~\citep{GuoJC+23}) directly handle these constraints, the recourse output by these algorithms can be post-processed (e.g., by projecting them to the set of feasible values) to guarantee feasibility or valid categorical values. We study the effect of these post-processing approaches on the quality of recourse in Section~\ref{sec:exp-feasibility}.
\end{remark}
\section{Experiments}
\label{sec:exp}
\noindent\textbf{Datasets: }
We experimented on two real-world datasets: the Small Business Administration dataset~\citep{sba} and the German Credit dataset~\citep{germanuci}. The Small Business Administration (SBA) dataset contains the small business loans approved by the State of California from 1989 to 2004. The dataset includes 1159 data points, each with 28 features (such as business category, zip code, and number of jobs created by the business). The German Credit dataset contains information about loan applicants and binary labels to determine creditworthiness. The dataset consists of 1000 data points, each with 7 features (such as age, marital status, income, and credit duration).  

\noindent\textbf{Implementation Details: }
We used 5-fold cross-validation in all experiments and reported average values. 4 folds were used to train the initial model $\thetaz$, and the last fold was used to compute recourse (only for instances with label 0 under $\thetaz$). We trained two models as $\thetaz$: logistic regression (LR) and a 3-level neural network (NN) with 50, 100, and 200 nodes in each successive layer (which is the same architecture as ROAR~\citep{UpadhyayJL21}).

\renewcommand{\arraystretch}{1.6}
\begin{table*}[t]
\centering
\begin{adjustbox}{max width=\linewidth}
\huge{
\begin{tabular}{|c|cccc|cccc|}
    \hline
  \multicolumn{1}{|c|}{} & \multicolumn{4}{c|}{German (LR)} & \multicolumn{4}{c|}{Small Business Administration (LR)} \\ \hline
     \multicolumn{1}{|c}{$\alpha$} & \multicolumn{2}{|c}{$0.1$} & \multicolumn{2}{|c}{$0.5$} & \multicolumn{2}{|c}{$0.1$} &\multicolumn{2}{|c|}{$0.5$} \\ \hline
     \multicolumn{1}{|c}{$\lambda$} & \multicolumn{1}{|c}{$0.1$} & \multicolumn{1}{|c}{$0.01$} & \multicolumn{1}{|c}{$0.1$} & \multicolumn{1}{|c}{$0.01$} & \multicolumn{1}{|c}{$0.1$} & \multicolumn{1}{|c}{$0.01$} & \multicolumn{1}{|c}{$0.1$} & \multicolumn{1}{|c|}{$0.01$} \\ \hline
    Alg1 ($L^1$) 
        & $\mathbf{0.68} \pm 0.07$ 
        & $\mathbf{0.14} \pm 0.03$ 
        & $\mathbf{0.85} \pm 0.09$ 
        & $\mathbf{0.20} \pm 0.05$ 
        & $\mathbf{0.27} \pm 0.04$ 
        & $\mathbf{0.03} \pm 0.00$ 
        & $\mathbf{0.32} \pm 0.10$ 
        & $\mathbf{0.04} \pm 0.01$  \\ \cline{1-1}
    ROAR ($L^1$)
        & \begin{tabular}{c}$0.83 \pm 0.10$ \\ (+22.1\%)\end{tabular}
        & \begin{tabular}{c}$0.28 \pm 0.04$ \\ (+100.0\%)\end{tabular}
        & \begin{tabular}{c}$1.03 \pm 0.11$ \\ (+21.2\%)\end{tabular}
        & \begin{tabular}{c}$0.40 \pm 0.06$ \\ (+100.0\%)\end{tabular}
        & \begin{tabular}{c}$0.44 \pm 0.06$ \\ (+63.0\%)\end{tabular}
        & \begin{tabular}{c}$0.26 \pm 0.04$ \\ (+766.7\%)\end{tabular}
        & \begin{tabular}{c}$1.25 \pm 1.76$ \\ (+290.6\%)\end{tabular}
        & \begin{tabular}{c}$1.04 \pm 1.75$ \\ (+2500.0\%)\end{tabular} \\ \cline{1-1} 
    Alg2 ($L^\infty$) 
        & \begin{tabular}{c}$0.80 \pm 0.08$ \\ (+17.6\%)\end{tabular}
        & \begin{tabular}{c}$0.16 \pm 0.03$ \\ (+14.3\%)\end{tabular}
        & \begin{tabular}{c}$1.12 \pm 0.05$ \\ (31.7\%)\end{tabular}
        & \begin{tabular}{c}$0.62 \pm 0.03$ \\ (+210.0\%)\end{tabular}
        & \begin{tabular}{c}$0.31 \pm 0.04$ \\ (+14.8\%)\end{tabular}
        & \begin{tabular}{c}$0.04 \pm 0.00$ \\ (+33.3\%)\end{tabular}
        & \begin{tabular}{c}$0.53 \pm 0.06$ \\ (+65.6\%)\end{tabular}
        & \begin{tabular}{c}$0.06 \pm 0.01$ \\ (+50.0\%)\end{tabular} \\ \cline{1-1}
    ROAR ($L^\infty)$ 
        & \begin{tabular}{c}$0.99 \pm 0.11$ \\ (+45.6\%)\end{tabular}
        & \begin{tabular}{c}$0.33 \pm 0.05$ \\ (+135.7\%)\end{tabular}
        & \begin{tabular}{c}$1.54 \pm 0.46$ \\ (+81.2\%)\end{tabular}
        & \begin{tabular}{c}$0.65 \pm 0.01$ \\ (+225.0\%)\end{tabular}
        & \begin{tabular}{c}$0.62 \pm 0.08$ \\ (+129.6\%)\end{tabular}
        & \begin{tabular}{c}$ 0.25 \pm 0.03$ \\ (733.3\%)\end{tabular}
        & \begin{tabular}{c}$2.10 \pm 2.23$ \\ (+556.2\%)\end{tabular}
        & \begin{tabular}{c}$1.28 \pm 2.28$ \\ (+3100.0\%)\end{tabular} \\    
        \cline{1-1}  \hline 
\end{tabular}
}
\end{adjustbox}
\caption{The price of recourse for logistic regression models. The columns correspond to combinations of $\alpha$ and $\lambda$ for each of the datasets. Each row represents the price of recourse returned by each of the algorithms, averaged over all the test instances in each dataset. The smallest price is shown in bold in each column, and percentages indicate the increase in price compared to the smallest value.  \label{table:price_lr}}
\end{table*}
\renewcommand{\arraystretch}{1}

\renewcommand{\arraystretch}{2.0}
\begin{table*}[t]
\centering
\begin{adjustbox}{max width=\linewidth}
\huge{\begin{tabular}{|c|cccc|cccc|}
    \hline
  \multicolumn{1}{|c|}{} & \multicolumn{4}{c|}{German (NN)} & \multicolumn{4}{c|}{Small Business Administration (NN)} \\ \hline
     \multicolumn{1}{|c}{$\alpha$} & \multicolumn{2}{|c}{$0.1$} & \multicolumn{2}{|c}{$0.5$} & \multicolumn{2}{|c}{$0.1$} &\multicolumn{2}{|c|}{$0.5$} \\ \hline
     \multicolumn{1}{|c}{$\lambda$} & \multicolumn{1}{|c}{$0.7$} & \multicolumn{1}{|c}{$0.3$} & \multicolumn{1}{|c}{$0.7$} & \multicolumn{1}{|c}{$0.3$} & \multicolumn{1}{|c}{$0.1$} & \multicolumn{1}{|c}{$0.01$} & \multicolumn{1}{|c}{$0.1$} & \multicolumn{1}{|c|}{$0.01$} \\ \hline 
    Alg1 ($L^1$) 
        & $\mathbf{1.60} \pm 0.19$ 
        & $\mathbf{1.33} \pm 0.18$ 
        & $\mathbf{4.25} \pm 0.39$ 
        & $\mathbf{3.71} \pm 0.28$ 
        & $\mathbf{1.12} \pm 1.86$ 
        & $\mathbf{0.19} \pm 0.36$ 
        & $\mathbf{3.17} \pm 1.94$ 
        & $\mathbf{0.66} \pm 1.03$  \\ \cline{1-1}
    ROAR ($L^1$)
        & \begin{tabular}{c}$1.61 \pm 0.19$ \\ (+0.6\%)\end{tabular}
        & \begin{tabular}{c}$1.41 \pm 0.17$ \\ (+6.0\%)\end{tabular}
        & \begin{tabular}{c}$4.51 \pm 0.45$ \\ (+6.1\%)\end{tabular}
        & \begin{tabular}{c}$4.03 \pm 0.40$ \\ (+8.6\%)\end{tabular}
        & \begin{tabular}{c}$1.90 \pm 2.02$ \\ (+69.6\%)\end{tabular}
        & \begin{tabular}{c}$1.12 \pm 1.96$ \\ (+489.5\%)\end{tabular}
        & \begin{tabular}{c}$5.73 \pm 1.60$ \\ (+80.8\%)\end{tabular}
        & \begin{tabular}{c}$3.79 \pm 1.90$ \\ (+474.2\%)\end{tabular} \\ \cline{1-1} 
    Alg2 ($L^\infty$) 
        & \begin{tabular}{c}$65.52 \pm 4.90$ \\ (+3995.0\%)\end{tabular}
        & \begin{tabular}{c}$63.97 \pm 3.69$ \\ (+4709.8\%)\end{tabular}
        & \begin{tabular}{c}$83.85 \pm 5.56$ \\ (+1872.9\%)\end{tabular}
        & \begin{tabular}{c}$84.74 \pm 1.94$ \\ (+2184.1\%)\end{tabular}
        & \begin{tabular}{c}$86.46 \pm 2.76$ \\ (+7619.6\%)\end{tabular}
        & \begin{tabular}{c}$85.66 \pm 2.59$ \\ (+44984.2\%)\end{tabular}
        & \begin{tabular}{c}$94.64 \pm 5.60$ \\ (+2885.5\%)\end{tabular}
        & \begin{tabular}{c}$95.35 \pm 4.38$ \\ (+14347.0\%)\end{tabular} \\ \cline{1-1}
    ROAR ($L^\infty)$ 
        & \begin{tabular}{c}$64.26 \pm 3.13$ \\ (+3916.3\%)\end{tabular}
        & \begin{tabular}{c}$62.67 \pm 2.79$ \\ (+4612.0\%)\end{tabular}
        & \begin{tabular}{c}$86.95 \pm 6.55$ \\ (+1945.9\%)\end{tabular}
        & \begin{tabular}{c}$90.72 \pm 5.51$ \\ (+2345.3\%)\end{tabular}
        & \begin{tabular}{c}$81.41 \pm 11.74$ \\ (+7168.7\%)\end{tabular}
        & \begin{tabular}{c}$84.42 \pm 10.67$ \\ (+44331.6\%)\end{tabular}
        & \begin{tabular}{c}$100.88 \pm 0.10$ \\ (+3082.3\%)\end{tabular}
        & \begin{tabular}{c}$98.14 \pm 4.44$ \\ (+14769.7\%)\end{tabular} 
    \\\cline{1-1}  \hline 
\end{tabular}
}
\end{adjustbox}
\caption{The price of recourse for neural network models using LIME approximation. The columns correspond to combinations of $\alpha$ and $\lambda$ for each of the datasets. Each row represents the price of recourse returned by each of the algorithms, averaged over all the test instances in each dataset. The smallest price is shown in bold in each column, and percentages indicate the increase in price compared to the smallest value. \label{table:price_nn}}
\end{table*}
\renewcommand{\arraystretch}{1}

To generate \emph{optimal} recourse, we implemented Algorithms~\ref{alg:l1-l1}~and~\ref{alg:l1-linf}. For the latter, we used code from~\cite{KayasthaGJ24}. We used the code from~\citep{UpadhyayJL21} for ROAR, and~\cite{NguyenBN+22} for RBR as baselines. For ROAR, we use two variants: one for which the model change is measured using $L^1$ norm and another where it is measured with $L^{\infty}$ norm. We refer to these variants as ROAR ($L^1$) and ROAR ($L^{\infty}$), respectively. For neural network models, all approaches except RBR first approximate the non-linear models locally with LIME~\citep{RibeiroSG16}.\footnote{For differentiable models, local linearization can be done using SmoothGrad~\cite{SmilkovTKVW17}. We provided analysis using this linearization in Appendix~\ref{sec:app-exp-smoothgrad}, which is consistent with our findings when using LIME.} We set $\ell$ in Equation~\ref{eq:j} to binary cross-entropy. We use different $\alpha$ and $\lambda$ values in different experiments. We demonstrate these choices and additional parameters (if applicable) for each experiment. Our code is available at \url{https://github.com/PMyatKyaw/Optimal-Robust-Recourse}. See Appendix~\ref{sec:app-exp-details} for additional details.

\subsection{Analysis of Price of Recourse}
\label{sec:exp-price}
We start by studying how different types of $L^p$ norms to define the neighborhood for model change can affect the price of recourse. In particular, for any given $\alpha$, $\lambda$ pair, using Algorithms~\ref{alg:l1-l1}~and~\ref{alg:l1-linf} as well as two ROAR variants~\cite{UpadhyayJL21}, we compute recourse for each of the test instances in our datasets, using both logistic regression and neural network models.  For each computed recourse $x$, we then compute the optimal adversarial model $\theta^*(x)$ using Equation~\ref{eq:adversary}. The price of recourse can be computed as $J(x, \theta^*(x))$.

The results are summarized in Tables~\ref{table:price_lr} (logistic regression models)~and~\ref{table:price_nn} (neural network models). In each table, each column corresponds to a pair of $\alpha$ and $\lambda$ combination (that can vary across datasets and models), and each row corresponds to the average price of recourse computed by each algorithm for each dataset. For each combination of $\alpha$ and $\lambda$, the smallest price of recourse is represented in bold. For all other algorithms, we also include the percentage increase in their price compared to the smallest price.

As suggested by our theory in Section~\ref{sec:alg-analysis}, for linear models, Algorithm~\ref{alg:l1-l1} has the smallest price. So in Table~\ref{table:price_lr}, Algorithm~\ref{alg:l1-l1} has the smallest price for all combinations. Comparing Algorithms~\ref{alg:l1-l1}~and~\ref{alg:l1-linf}, Table~\ref{table:price_lr} indicates that defining the neighborhood for model change using $L^{\infty}$ can increase the price by 14-210\% depending on the values of $\alpha$ and $\lambda$. More strikingly, these price increases are steeper when model change is measured using the $L^1$ norm, but ROAR is used instead of the optimal algorithm. We also observe that the percentage of price increase is generally higher for the Small Business dataset compared to the German Credit dataset.

Even for non-linear models, Table~\ref{table:price_nn} shows that Algorithm~\ref{alg:l1-l1} continues to have the smallest price across all combinations of $\alpha$ and $\lambda$ values for both datasets. Comparing Algorithms~\ref{alg:l1-l1}~and~\ref{alg:l1-linf}, Table~\ref{table:price_nn} indicates that defining the neighborhood for model change using $L^{\infty}$ norm instead of $L^{1}$ norm can increase the price much higher for non-linear models (minimum 18x) compared to linear models (maximum 2.1x).

For non-linear models, Table~\ref{table:price_nn} shows that ROAR ($L^1$) compares more favorably with Algorithm~\ref{alg:l1-l1} compared to linear models; although, even in this case, the sub-optimality in price can be as high as 470+\% in the Small Business dataset. Finally, Table~\ref{table:price_nn} shows that while  Algorithm~\ref{alg:l1-l1} always computes a recourse with a smaller price compared to ROAR ($L^1$), this is not the case when comparing Algorithm~\ref{alg:l1-linf} with ROAR ($L^\infty$) suggesting that the optimal algorithm is more resilient to approximation errors when the neighborhood for model change is defined with respect to the $L^1$ norm.

\begin{figure*}[t!]
    \centering
    \begin{subfigure}[b]{0.45\textwidth}
        \centering
        \includegraphics[width=0.85\textwidth]{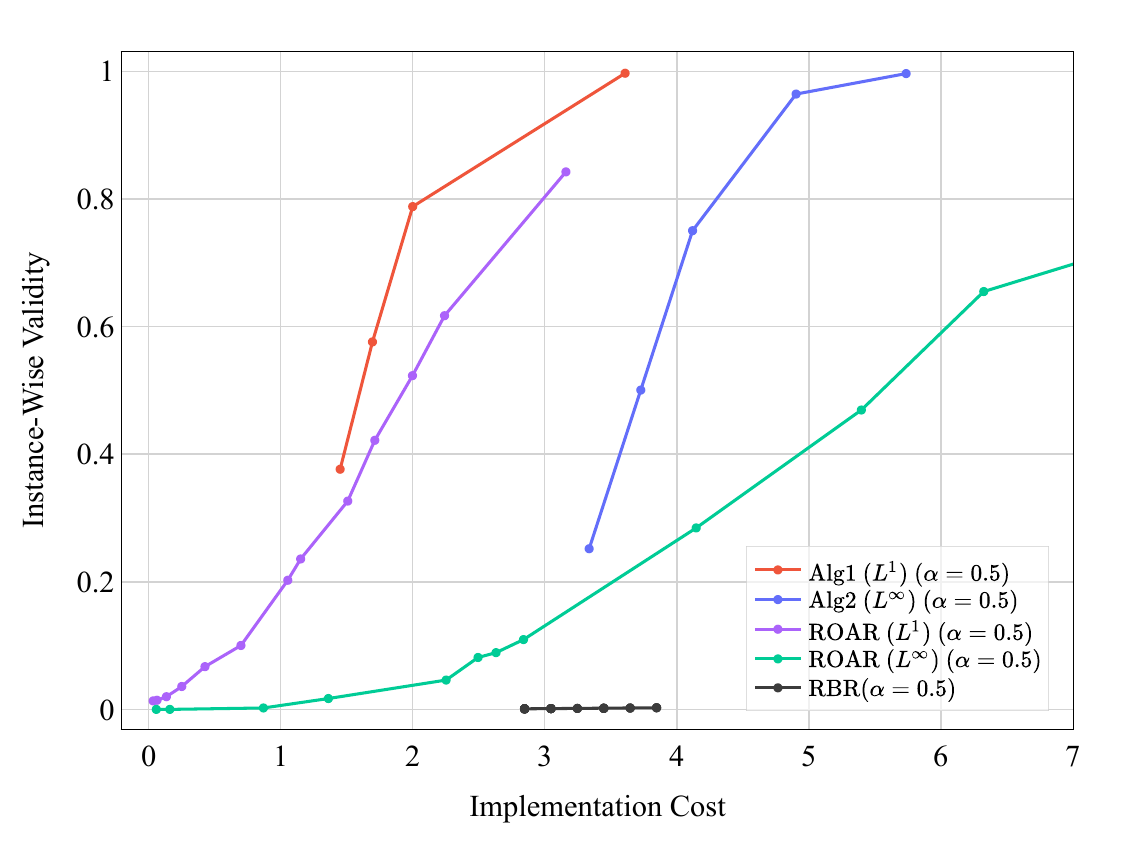}
        \caption{Instance-Wise Validity, Logistic Regression}
        \label{fig:cost_validity_tradeoff_lr_sba_instance_alpha_0.5}
    \end{subfigure}
    \begin{subfigure}[b]{0.45\textwidth}
        \centering
        \includegraphics[width=0.85\textwidth]{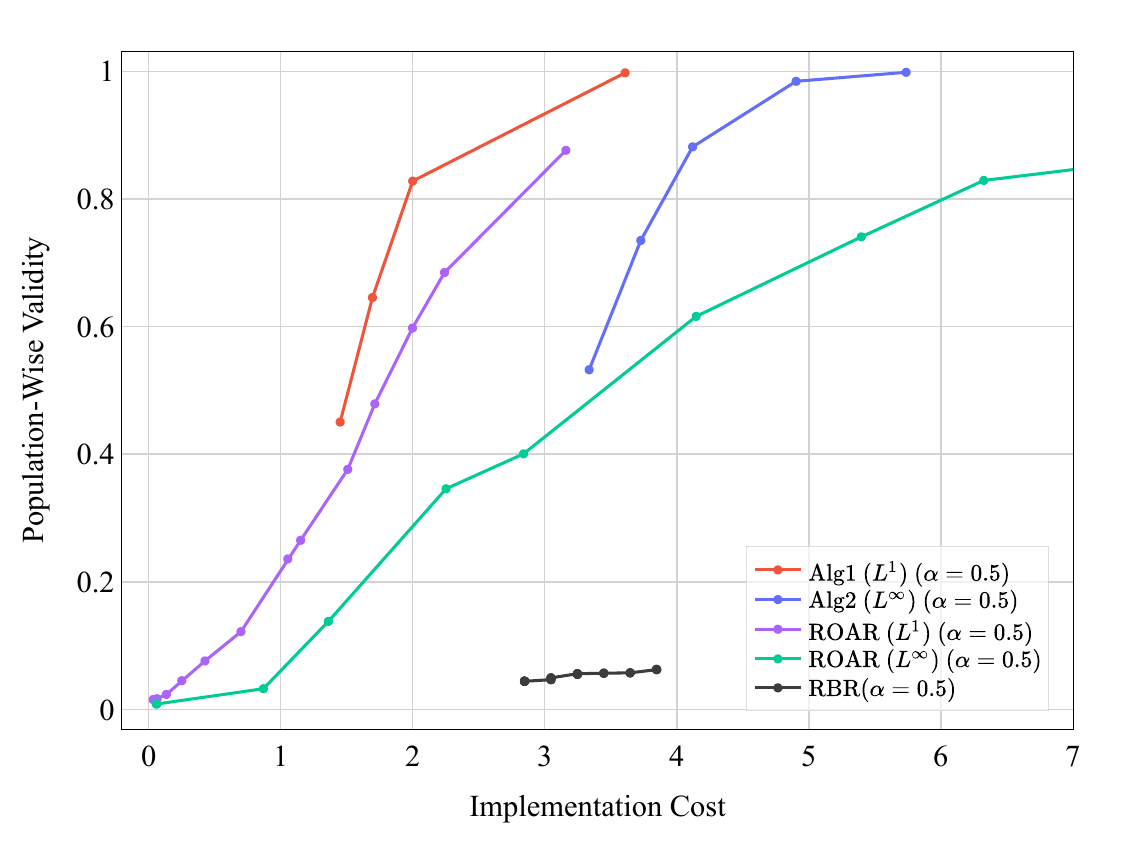}
        \caption{Population-Wise Validity, Logistic Regression}
        \label{fig:cost_validity_tradeoff_lr_sba_population_alpha_0.5}
    \end{subfigure}
    \begin{subfigure}[b]{0.45\textwidth}
        \centering
        \includegraphics[width=0.85\textwidth]{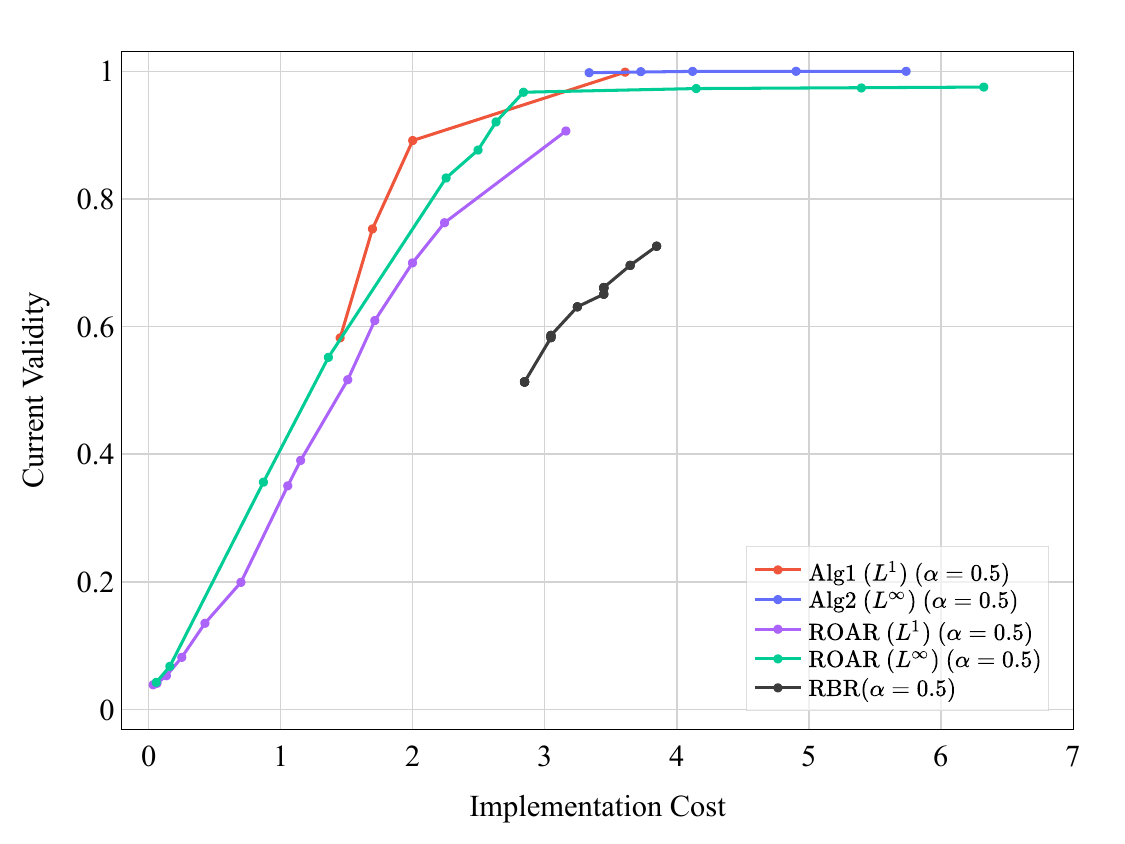}
        \caption{Current Validity, Logistic Regression}
        \label{fig:cost_validity_tradeoff_lr_sba_current_alpha_0.5}
    \end{subfigure}
    \begin{subfigure}[b]{0.45\textwidth}
        \centering
        \includegraphics[width=0.85\textwidth]{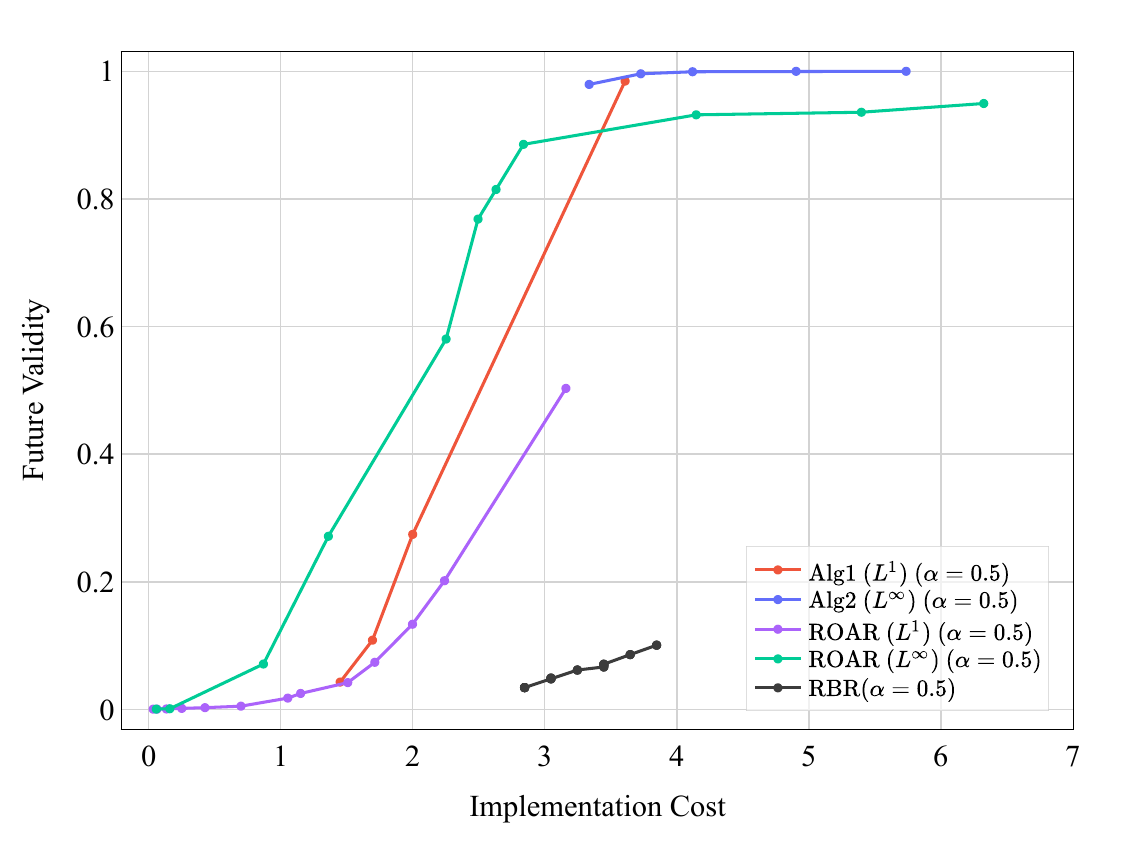}
        \caption{Future Validity, Logistic Regression}
        \label{fig:cost_validity_tradeoff_lr_sba_future_alpha_0.5}
    \end{subfigure}
    \caption{The frontier of the trade-off between validity and implementation cost on the Small Business Administration dataset and logistic regression models with $\alpha=0.5$. Each subfigure corresponds to a different measure of validity. In each subfigure, curves show the trade-off for different algorithms. 
    \label{fig:cost_validity_tradeoff_sba_alpha_0.5_LR}}
\end{figure*}

\begin{figure*}[t!]
    \centering
    \begin{subfigure}[b]{0.45\textwidth}
        \centering
        \includegraphics[width=0.85\textwidth]{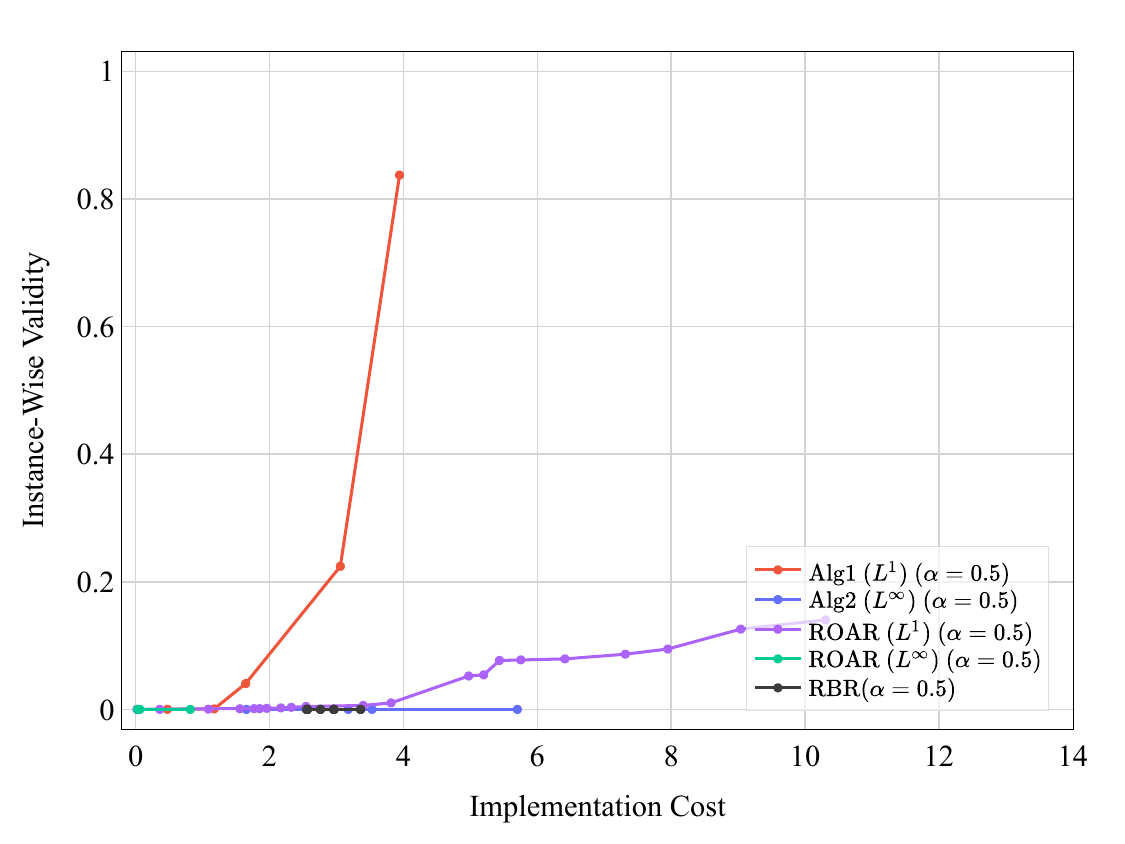}
        \caption{Instance-Wise Validity, Neural Network}
        \label{fig:cost_validity_tradeoff_nn_sba_instance_alpha_0.5}
    \end{subfigure}
    \begin{subfigure}[b]{0.45\textwidth}
        \centering
        \includegraphics[width=0.85\textwidth]{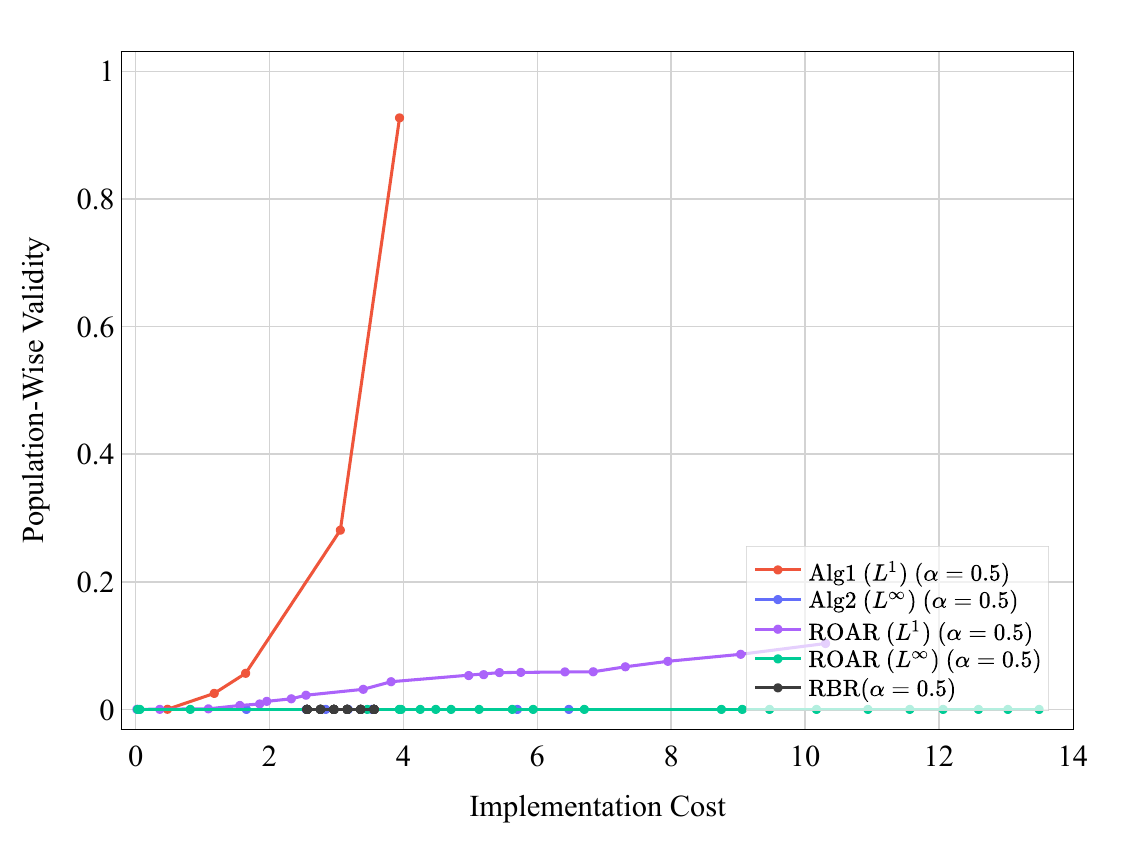}
        \caption{Population-Wise Validity, Neural Network}
        \label{fig:cost_validity_tradeoff_nn_sba_population_alpha_0.5}
    \end{subfigure}
    \begin{subfigure}[b]{0.45\textwidth}
        \centering
        \includegraphics[width=0.85\textwidth]{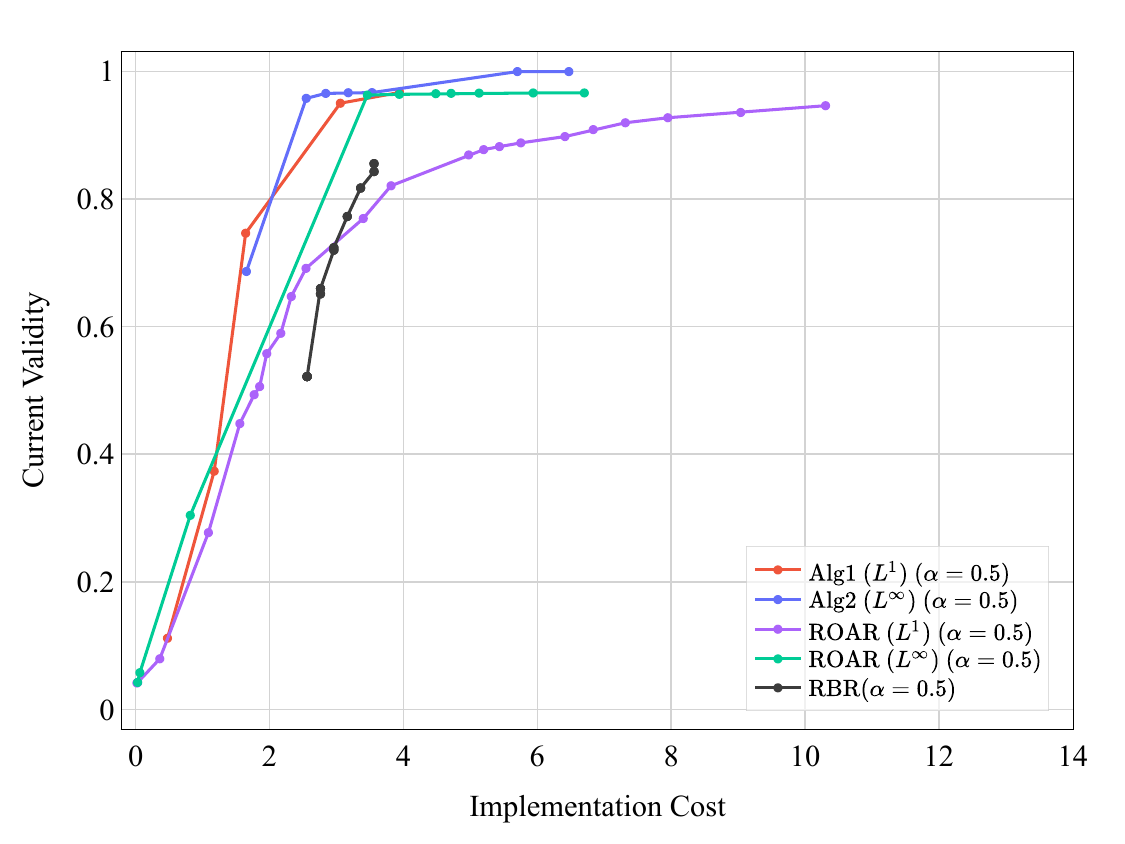}
        \caption{Current Validity, Neural Network}
        \label{fig:cost_validity_tradeoff_nn_sba_current_alpha_0.5}
    \end{subfigure}
    \begin{subfigure}[b]{0.45\textwidth}
        \centering
        \includegraphics[width=0.85\textwidth]{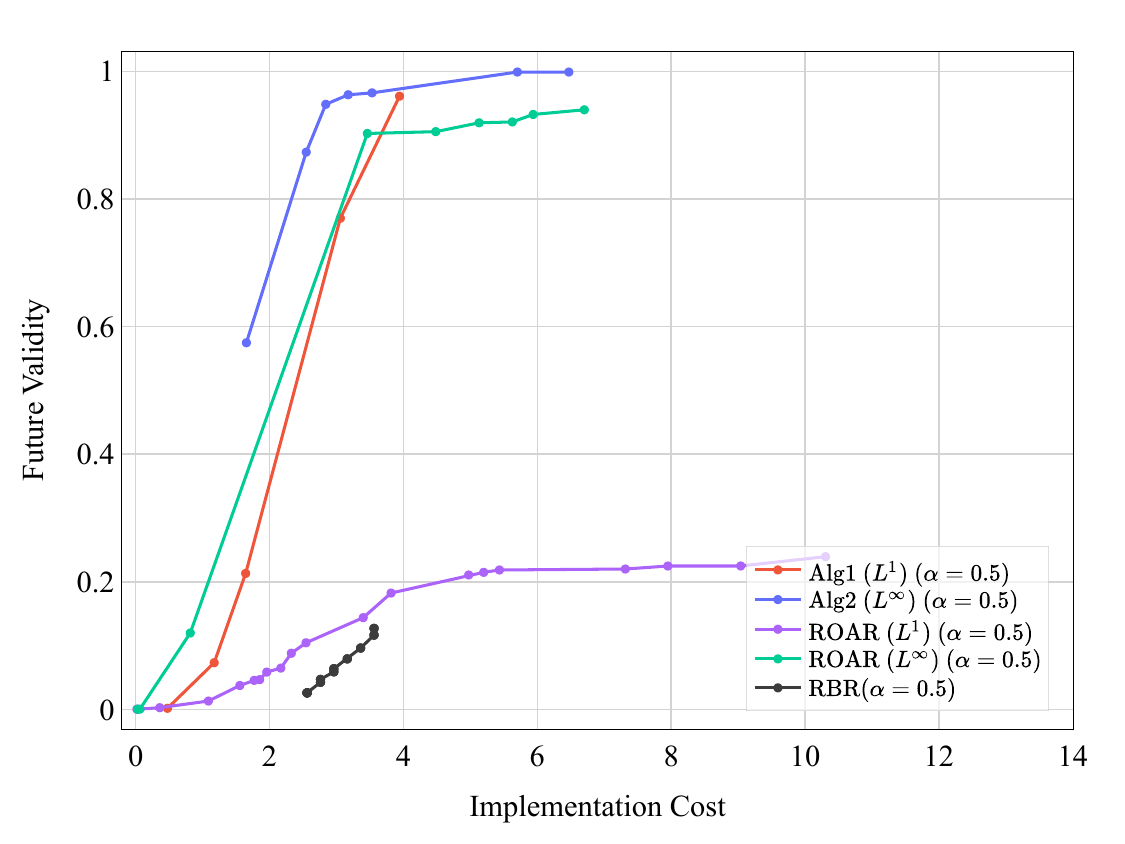}
        \caption{Future Validity, Neural Network}
        \label{fig:cost_validity_tradeoff_nn_sba_future_alpha_0.5}
    \end{subfigure}
    \caption{The frontier of the trade-off between validity and implementation cost on the German Credit dataset and logistic regression models with $\alpha=0.5$. Each subfigure corresponds to a different measure of validity. In each subfigure, curves show the trade-off for different algorithms. 
    \label{fig:cost_validity_tradeoff_sba_alpha_0.5_NN}}
\end{figure*}

\subsection{Trade-off Between Implementation Cost and Validity}
\label{sec:exp-trade-off}

While in Section~\ref{sec:exp-price}, we compared our algorithms and baselines by focusing on the price of recourses provided at specific combinations of $\alpha$ and $\lambda$, in this section, we aim to break down the price of recourse and study the achievable trade-off between validity and implementation cost of recourse for each of the algorithms. In Equation~\eqref{eq:j} for the price, the first term is a proxy for validity measured with respect to a given model $\theta$ and the second term is the implementation cost of recourse, i.e., the cost of modifying $\xz$ to $\newx$. 

To generate the trade-offs for each dataset and model combination, we vary $\lambda$ in the same range for all algorithms, and compute the recourses for all test instances using both $\alpha=0.1$ and $\alpha=0.5$ (details about the exact range of $\lambda$ used can be found in Appendix~\ref{sec:app-exp-trade-off}). We also compute recourses with RBR~\cite{NguyenBN+22}. While the formulation of RBR does not have the same parameters as us, we replicate their experiments by setting the ambiguity sizes to $\epsilon_0, \epsilon_1 \in [0, 1]$ with increments of 0.5, and the maximum recourse cost $\delta = \|\xz - \robx\|_1 + \delta_+$ to $\delta_+ \in [0, 1]$ with increments of 0.2. These choices are the same as their paper, and we also use the same datasets as RBR in our experiments.

Once recourse is computed, we can break down the quality of recourse along two axes. One is the implementation cost measured as the $L^1$ difference between the initial instance and the provided recourse. The other is validity, which we define as the probability that the recourse leads to the desirable outcome, and it can be measured with respect to several models as is done in prior work~\cite{UpadhyayJL21, NguyenBN+22}. One choice is to compute the worst-case model for each instance (defined as the model $\theta$ maximizing $J(x,\theta)$ for any given recourse $x$). This is in line with our theoretical results. We refer to this quantity as \emph{instance-wise validity}. As a less powerful adversarial model, instead of computing worst-case models per instance, we can compute a single worst-case model per dataset. This is defined as the model $\theta$ maximizing $\Sigma_{x} J(x,\theta)$ where the sum is over all instances for which recourse is provided. We refer to this quantity as \emph{population-wise validity}. We can also measure validity with respect to the initial model, assuming no model change. We refer to this quantity as \emph{current validity}. Finally, motivated by reasons for model shift and given access to different versions of the datasets, prior work also computes a model on the alternate shifted version of each of the datasets~\cite{UpadhyayJL21}. This shifted dataset for the German Credit dataset is the result of a correction shift. For the Small Business Administration dataset, the shifted dataset is the result of a temporal shift. We refer to this quantity as \emph{future validity}. When the worst-case model cannot be computed analytically, we use projected gradient ascent to compute it. 

Figures~\ref{fig:cost_validity_tradeoff_sba_alpha_0.5_LR}~and~\ref{fig:cost_validity_tradeoff_sba_alpha_0.5_NN} depict the frontier of the trade-off between different types of validity (Y-axis) and implementation cost (X-axis) for the Small Business Administration dataset for Logistic Regression and neural networks models with $\alpha = 0.5$. Each subfigure corresponds to a different measure of validity, and curves show the frontier of the trade-off between implementation cost and validity for different algorithms.

For logistic regression models in Figure~\ref{fig:cost_validity_tradeoff_sba_alpha_0.5_LR}, we observe that our algorithm Pareto dominates other algorithms for instance-wise validity as suggested by theory. Surprisingly, this dominance continues for population-wise and current validity as well. For future validity, while our algorithm outperforms at higher validity regimes, in lower validity regimes, ROAR ($L^{\infty}$) returns the best trade-off. One reason can be that the model used to measure feature validity is much further away from the original model compared to the $\alpha$ values we use for our algorithm. Moreover, except for Algorithm~\ref{alg:l1-linf}, no ROAR variant reaches the validity of 1, while our algorithm is always able to reach perfect validity. In all experiments, RBR is dominated by all other algorithms.

The comparison between the models becomes more complex for neural network models as depicted in Figure~\ref{fig:cost_validity_tradeoff_sba_alpha_0.5_NN}. First of all, for the most stringent measures of validity (instance-wise and population-wise), the validity of neither of the algorithms reaches 1, though our algorithm achieves much higher validity compared to all others. Not surprisingly, not only does the validity of all models drop when using neural network models compared to logistic regression models, but also the implementation cost of recourse increases (compare the X axis in Figures~\ref{fig:cost_validity_tradeoff_sba_alpha_0.5_LR}~and~\ref{fig:cost_validity_tradeoff_sba_alpha_0.5_NN}). For less stringent adversaries (current and future), the validity of all algorithms improves. Even in these cases, our algorithm performs only slightly worse than the best-performing approach (Algorithm~\ref{alg:l1-linf}) and on par with or better than other baselines.

In Appendix~\ref{sec:app-exp-trade-off}, we provide results for $\alpha=0.1$ for the Small Business dataset for both logistic regression and neural network models (Figure~\ref{fig:cost_validity_tradeoff_sba_alpha_0.1_app}). In the same section, we also provide results for the German Credit dataset for both models using $\alpha=0.1$ (Figure~\ref{fig:cost_validity_tradeoff_german_alpha_0.1_app}) and $\alpha=0.5$ (Figure~\ref{fig:cost_validity_tradeoff_german_alpha_0.5_app}). While the observations presented in this section generally hold for the German Credit dataset, compared to the Small Business Administration dataset, the validity of all algorithms is lower.
\begin{figure*}[ht!]
    \centering
    \begin{subfigure}[b]{0.46\textwidth}
        \centering
        \includegraphics[width=0.85\textwidth]{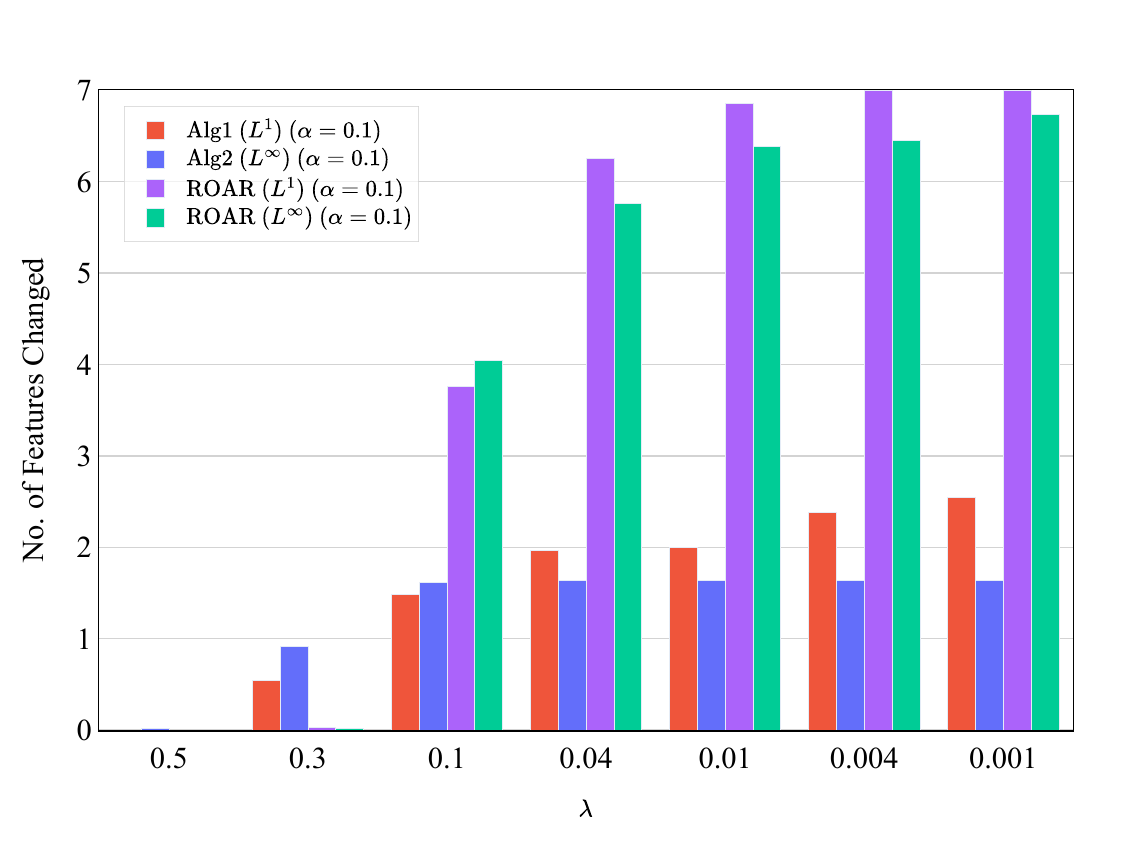}
        \caption{German Credit dataset, Logistic Regression, $\alpha = 0.1$}
        \label{fig:sparsity_add_lr_german_0.1}
    \end{subfigure}    
    \begin{subfigure}[b]{0.46\textwidth}
        \centering
        \includegraphics[width=0.85\textwidth]{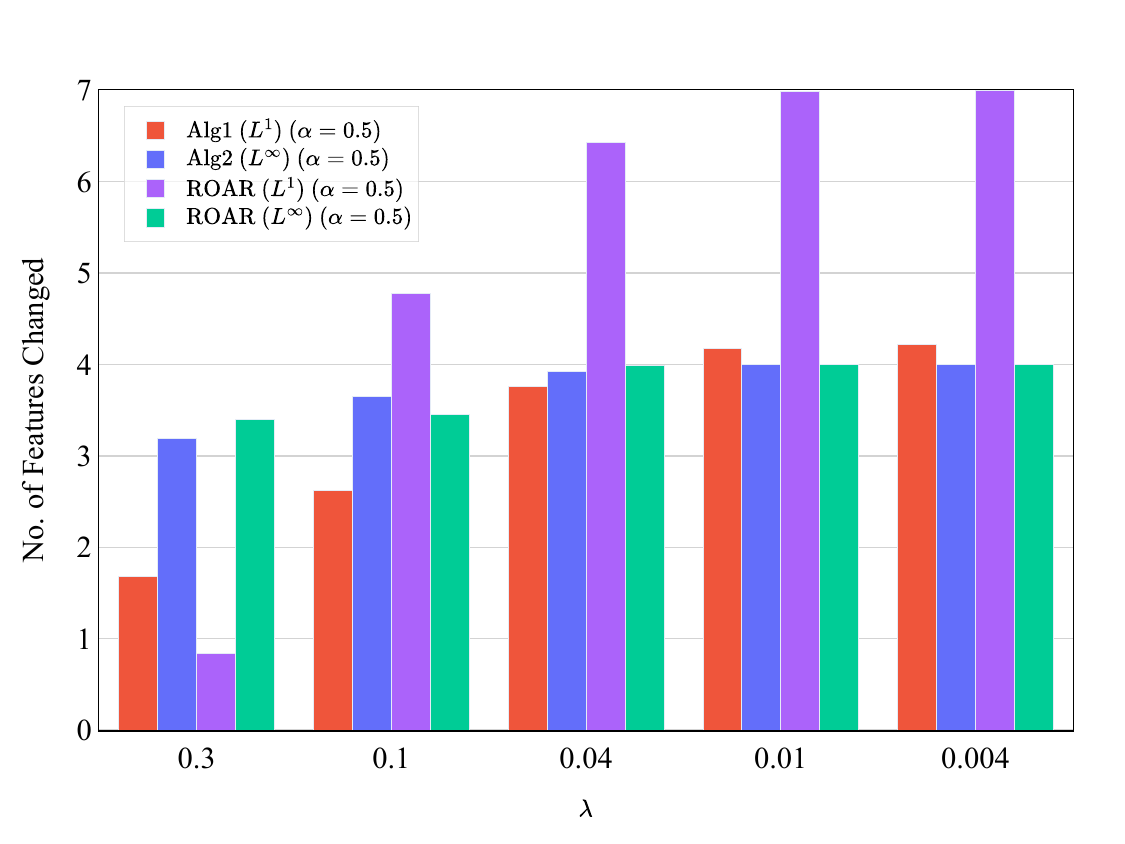}
        \caption{German Credit dataset, Logistic Regression, $\alpha = 0.5$}
        \label{fig:sparsity_add_lr_german_0.5}
    \end{subfigure}
    \begin{subfigure}[b]{0.46\textwidth}
        \centering
        \includegraphics[width=0.85\textwidth]{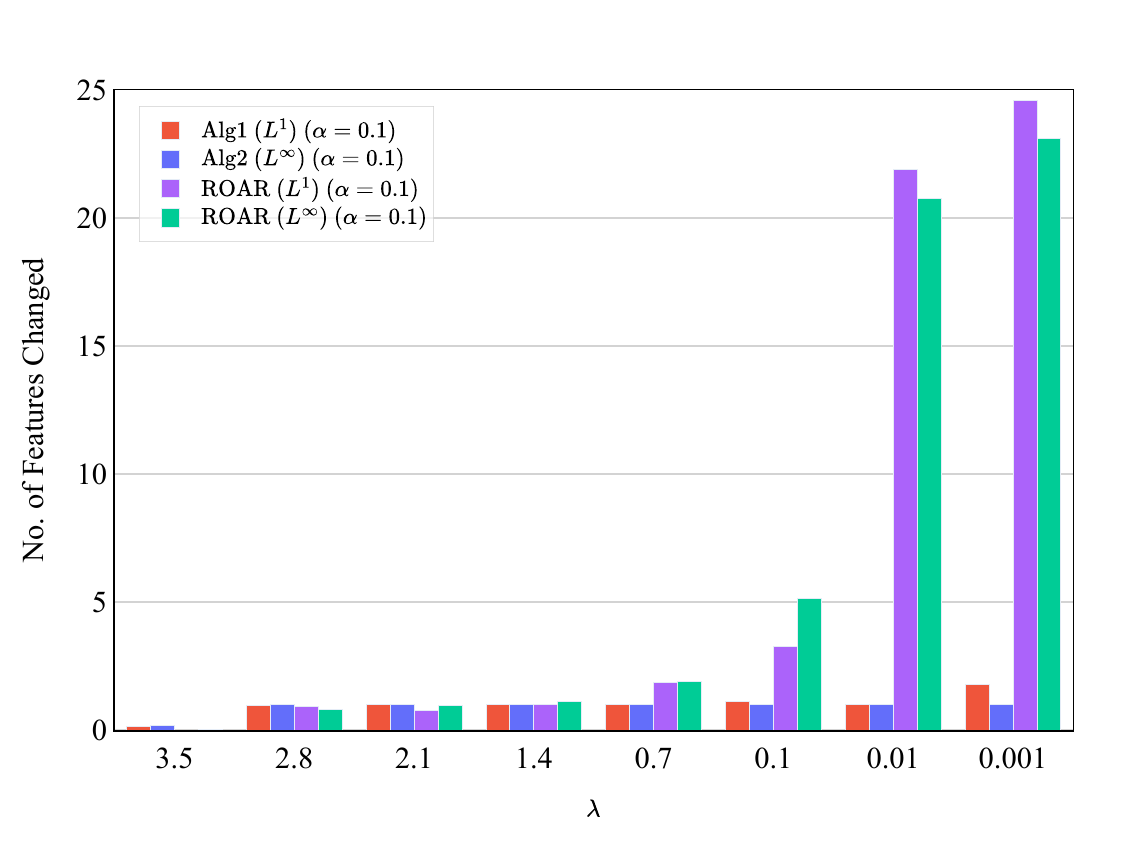}
        \caption{SBA Dataset, Logistic Regression, $\alpha = 0.1$}
        \label{fig:sparsity_add_lr_sba_0.1}
    \end{subfigure}
    \begin{subfigure}[b]{0.46\textwidth}
        \centering
        \includegraphics[width=0.85\textwidth]{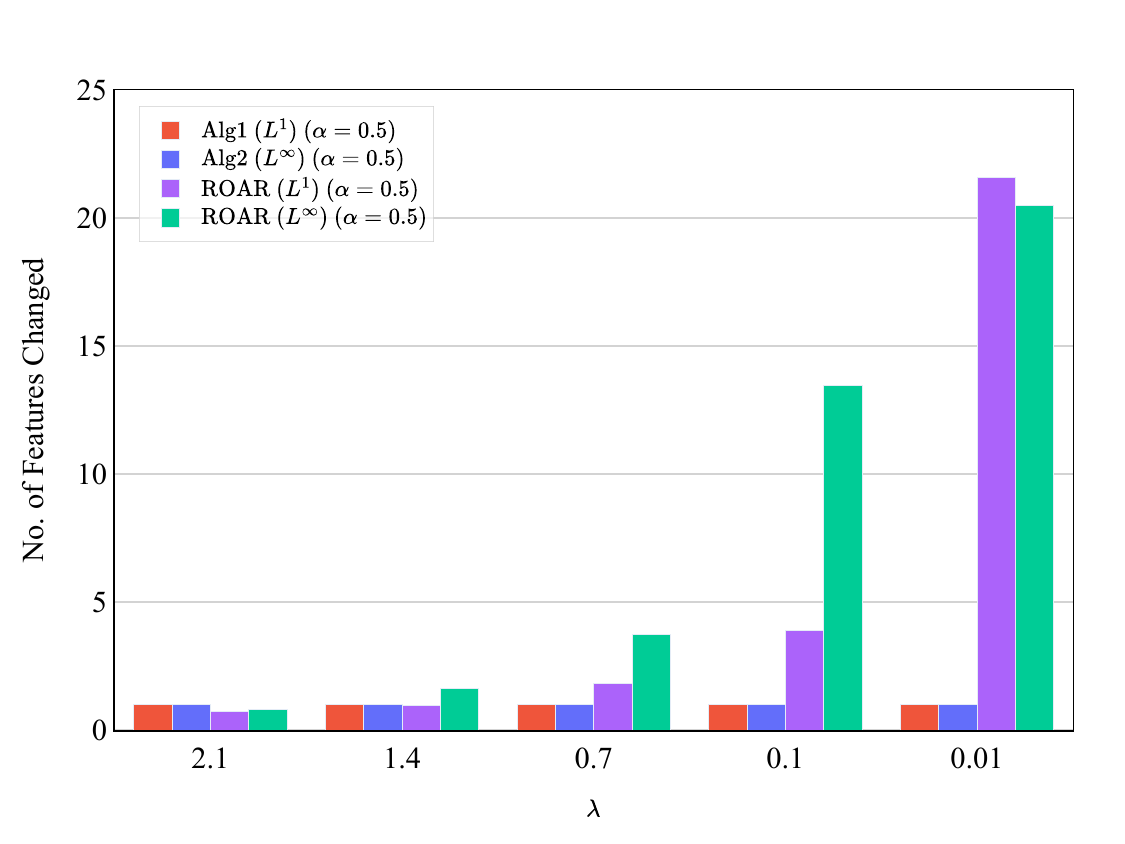}
        \caption{SBA Dataset, Logistic Regression, $\alpha=0.5$}
        \label{fig:sparsity_add_lr_sba_0.5}
    \end{subfigure}
    \caption{Number of changed features for the German and Small Business Datasets for logistic regression models. Left and right columns correspond to $\alpha=0.1$ and $\alpha=0.5$, respectively. The top row corresponds to the German Credit dataset, while the bottom row corresponds to the Small Business Administration dataset. In each subfigure, bars depict the number of changed features for each of the algorithms at different $\lambda$ values.\label{fig:sparsity}}
\end{figure*}

\subsection{Sparsity}
\label{sec:exp-sparsity}
Sparsity is a desirable property of counterfactual explanations, such as recourse~\cite{VermaSBDS23, TominagaYK24, KanchetiVMB25}, requiring the recourse to change a small number of features. Prior works encourage sparsity by regularization. However, neither our approaches nor the baselines use regularization. Nonetheless, we next aim to compare the sparsity of the provided recourse by each of these approaches. 

\begin{figure*}[ht!]
    \centering
    \begin{subfigure}[b]{0.48\textwidth}
        \centering
        \includegraphics[width=\textwidth]{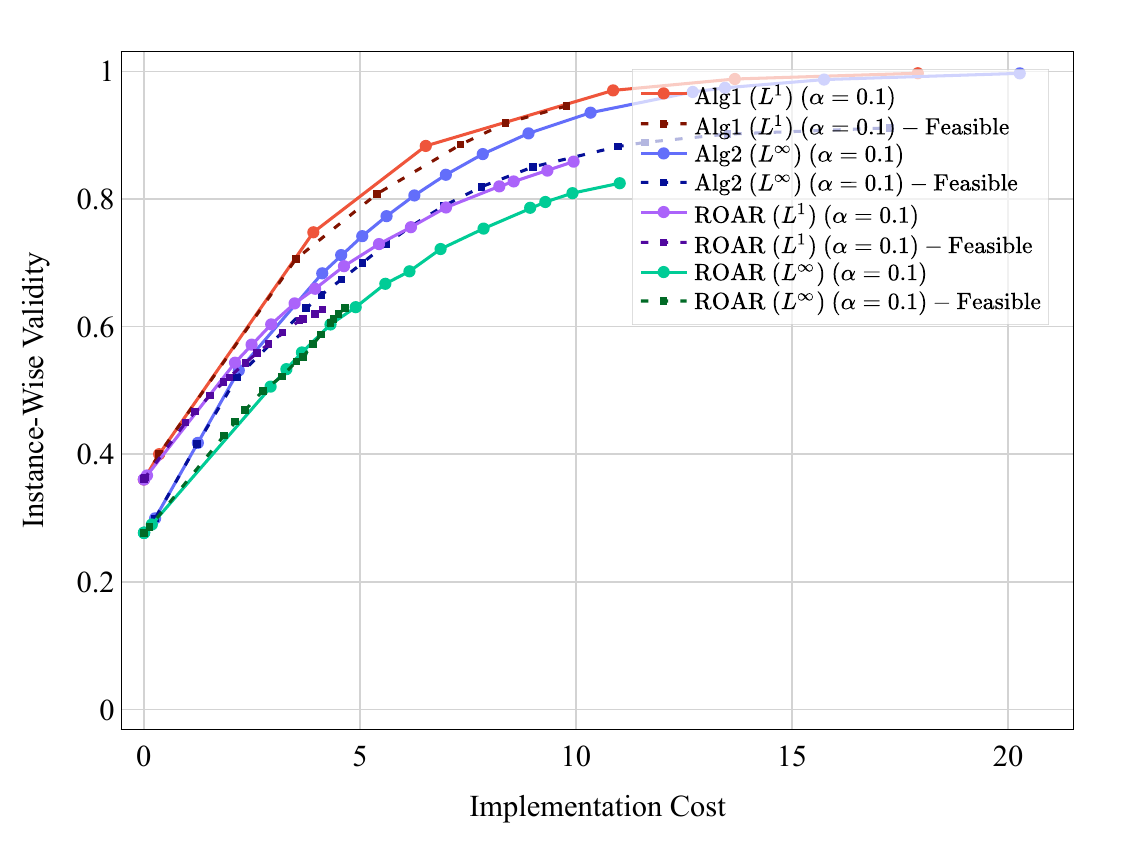}
        \caption{German Credit dataset, Logistic Regression, $\alpha = 0.1$}
        \label{fig:feasibility_lr_german_alpha_0.1}
    \end{subfigure}
    \begin{subfigure}[b]{0.48\textwidth}
        \centering
        \includegraphics[width=\textwidth]{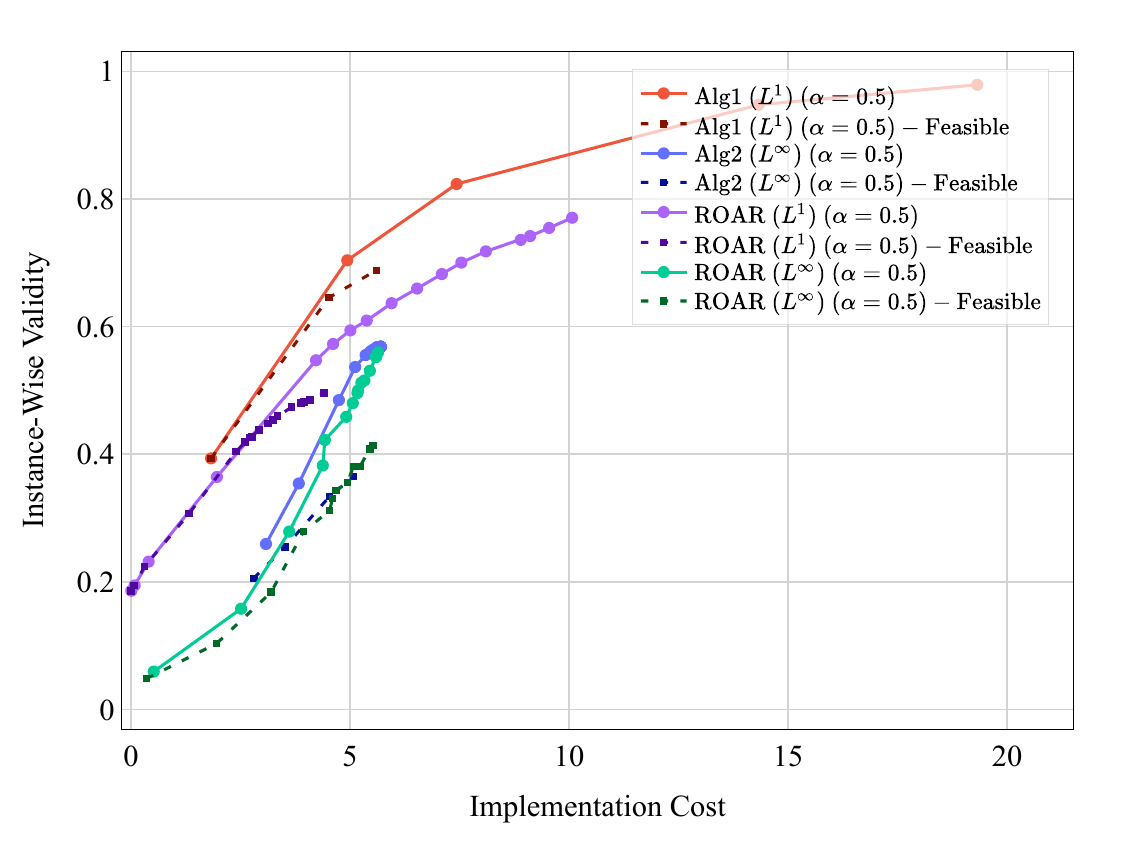}
        \caption{German Credit dataset, Logistic Regression, $\alpha = 0.5$}
        \label{fig:feasibility_lr_german_alpha_0.5}
    \end{subfigure}
    \caption{The frontier of the trade-off between validity and implementation cost on the Small Business Administration dataset and logistic regression models after post-processing. The left and right columns correspond to $\alpha=0.1$ and $\alpha=0.5$. In each subfigure, curves show the trade-off for different algorithms. For each algorithm, solid and dashed lines depict the performance before and after hardmax post-processing is applied.\label{fig:feasibility}}
\end{figure*}

In particular, for Algorithms~\ref{alg:l1-l1}~and~\ref{alg:l1-linf} and the two variants of ROAR, we first compute the recourse and then measure the number of features that are changed in the recourse compared to the original instance. Since gradient-based methods can add small perturbations to the features, we consider a feature to be changed when the value of the feature is modified by at least $\epsilon$ (in an additive manner). We use $\epsilon=0.01$ in all experiments.\footnote{The feature change can also be defined in a multiplicative manner, requiring the value of the feature to be modified by some predefined percentage. In our experiments, this does not change the results significantly. See Figures~\ref{fig:sparsity_alpha_0.1_app}~and~\ref{fig:sparsity_alpha_0.5_app} in Appendix~\ref{sec:app-exp-sparse} for more details}.

The results are presented in Figure~\ref{fig:sparsity}, where the top row corresponds to the German Credit dataset, while the bottom row corresponds to the Small Business dataset. The left and right columns represent $\alpha=0.1$ and $\alpha=0.5$, respectively. In each subfigure, the bars depict the average number of features changed by each of the approaches for varying $\lambda$ values (decreasing from left to right). All the results presented in Figure~\ref{fig:sparsity} are for logistic regression models. The results for neural network models are deferred to Figures~\ref{fig:sparsity_alpha_0.1_app}~and~\ref{fig:sparsity_alpha_0.5_app} in Appendix~\ref{sec:app-exp-sparse}.

In Figure~\ref{fig:sparsity}, we observe that both Algorithms~\ref{alg:l1-l1}~and~\ref{alg:l1-linf} change a much smaller number of features compared to the ROAR variants. In addition, the number of features changed for all approaches increases as $\lambda$ decreases. This is because $\lambda$ controls the weight on the implementation costs, and lower $\lambda$ values penalize higher implementation costs less. However, the increase in the number of features changed in the variants of ROAR is much more significant compared to the optimal algorithms. In particular, as $\lambda$ gets very small, ROAR appears to be changing \emph{all} the features while the number of features changed by Algorithms~\ref{alg:l1-l1}~and~\ref{alg:l1-linf} either remains the same (in the Small Business Dataset) or increases by 1 or 2 (in the German Credit dataset). Finally, the number of feature changes increases as $\alpha$ gets larger, indicating that when facing more powerful adversarial model changes, all algorithms, especially ROAR variants, rely on modifying more features as opposed to modifying the same features by bigger magnitudes.

\subsection{Feasibility}
\label{sec:exp-feasibility}

Both the German and the Small Business datasets contain categorical features, and there are constraints on the space of feasible values for each feature. For example, in the German Credit dataset, the feature Personal Status and Sex can take one of five mutually exclusive values: male and divorced/separated, female and divorced/separated/married, male and single, male and married/widowed, or female and single. In the Small Business dataset, the feature Revolving Line of Credit is binary, taking values Yes or No. Neither Algorithms~\ref{alg:l1-l1}~and~\ref{alg:l1-linf} nor the ROAR variants take into account such feasibility constraints when computing recourse. 

A common approach to enforce feasibility is to post-process the recourse by projecting it to the feasible set~\cite{UpadhyayJL21,NguyenBN23,GuoJC+23, KayasthaGJ24}. In this section, we are interested in understanding the effect of such post-processing on the quality of the recourse. More specifically, after computing the recourse, we apply a hardmax operation to the one-hot encoded categorical features so that exactly one entry is set to 1 and all others are set to 0. In the German Credit dataset, to ensure actionability, we further constrain the age feature such that it may only increase by at most two years.

In Figure~\ref{fig:feasibility}, we display the frontier of the trade-off between implementation cost and instance-wise validity of recourse before and after post-processing the recourse for logistic regression models on the German Credit dataset. The subfigures correspond to different $\alpha$ values as indicated in the caption. In each subfigure, the curves show the trade-off for different algorithms. For each algorithm, solid and dashed lines depict the performance before and after hardmax post-processing is applied.

Figure~\ref{fig:feasibility} shows that post-processing does not significantly change the frontier of the trade-off for any of the algorithms at low implementation costs, which corresponds to low to medium validity levels. However, after post-processing, points with high validity and implementation costs cannot be achieved on the frontier. However, this effect can probably be mitigated by using smaller $\lambda$ values to account for degradation in performance after post-processing.

The results for other datasets and model combinations are provided in Figures~\ref{fig:feasibility_alpha_0.1_app}~and~\ref{fig:sparsity_alpha_0.5_app} in Appendix~\ref{sec:app-exp-feasibility}. For these combinations, our results indicate that the post-processing even has a milder effect on the frontier of the trade-off between validity and implementation cost compared to the results presented in this section.
\section{Conclusion and Discussion}
\label{sec:discussion}

The literature on robust recourse provides many different formulations by considering various model classes, cost functions, and methods for formulating model changes. Each of these combinations results in distinctive optimization problems requiring different technical solutions (see~\citep{JiangLR+24a} for a survey). More explicitly, there are alternative proposals for capturing model change beyond the $L^p$ norms, such as naturally occurring model changes~\cite{DuttaLMTM22} or small changes to initial training conditions~\cite{BlackWF22}. In addition, using any $L^p$ norm to measure the implementation cost does not allow for capturing feature dependencies present in many applications. Computing optimal robust recourse by considering these different combinations is an interesting area for future work. 

Moreover, some prior works explicitly consider the feasibility of recourse by ensuring recourse solutions satisfy feature modification constraints~\citep{KarimiBBV20, JoshiKV+19, PawelczykBK20b}. While we empirically study the effect of imposing such feasibility constraints by post-processing, extending our algorithm to guarantee optimality in the presence of these constraints is left as future work. 

Finally, the main contribution of this work is to provide a better understanding of the \emph{true price} of robustness when providing recourse by studying optimal algorithms for $L^p$-bounded model changes. Studying alternative robustness frameworks such as distributionally robust optimization~\cite{distributionrobust, robusttextbook} and beyond-worst-case analysis~\cite{Roughgarden20, MitzenmacherV20} to further lower the price of recourse is an interesting direction for future work.

\section*{Acknowledgment} We would like to thank Vasilis Gkatzelis for insightful discussions on earlier stages of this work. We also thank the anonymous reviewers for their suggestion on using SmoothGrad for linearization.

\bibliographystyle{plainnat}
\bibliography{bib}
\appendix
\section{Omitted Proofs from Section~\ref{sec:alg-analysis}}
\label{sec:app-algo}

\begin{proposition}
\label{pro:non-convexity}    
The optimization problem in Equation~\ref{eq:xr} is non-convex for all $p\geq 1$ even if the model $\thetaz$ is linear.
\end{proposition}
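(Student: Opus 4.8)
The plan is to reduce the statement to showing that the outer objective $F(x):=\max_{\theta\in\Theta_p^\alpha(\thetaz)}J(x,\theta)$ is a non-convex function of $x$, and then to exhibit one concrete instance that works simultaneously for every $p\ge1$. First I would rewrite $F$: since the implementation cost $\lambda\,c(x,\xz)$ does not involve $\theta$, we have $F(x)=\lambda\|x-\xz\|_1+\max_{\theta\in\Theta_p^\alpha(\thetaz)}\ell\!\left(g(\theta^\top x),1\right)$ for a generalized linear model $f_\theta=g\circ h_\theta$. For a loss with $\ell(\cdot,1)$ non-increasing on $[0,1]$ (the natural choice for recourse, and the one used in the example below) and $g$ non-decreasing, the map $s\mapsto\ell(g(s),1)$ is non-increasing, so the inner maximum is attained by \emph{minimizing} $\theta^\top x$ over the $L^p$-ball; by Hölder duality this equals $\thetaz^\top x-\alpha\|x\|_q$ with $1/p+1/q=1$. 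Hence $F(x)=\lambda\|x-\xz\|_1+\phi\!\left(\thetaz^\top x-\alpha\|x\|_q\right)$, where $\phi(s):=\ell(g(s),1)$.

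The conceptual point I would emphasize next is that taking the worst case over $\theta$ does not restore convexity: although the argument $\psi(x):=\thetaz^\top x-\alpha\|x\|_q$ is concave, the one-dimensional function $\phi$ is itself non-convex whenever $g$ saturates (e.g. the sigmoid), and that non-convexity propagates to $F$. To make this precise I would take $d=1$, so that $\|x\|_q=|x|$ for every $q$ (this is exactly why a single instance settles all $p\ge1$ at once), with $\mathcal X=\RR$, $g$ the logistic sigmoid $\sigma$, $\ell(z,1)=(z-1)^2$ (a convex, differentiable loss), $\xz=0$, and any $\lambda\ge0$; choose $\thetaz=1$ and $\alpha=2$, so that $\Theta_p^\alpha(\thetaz)=[-1,3]$. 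For $x>0$ the worst-case $\theta$ is $\thetaz-\alpha=-1$, the logit becomes $-x$, and therefore $F(x)=\bigl(1-\sigma(-x)\bigr)^2+\lambda x=\sigma(x)^2+\lambda x$.

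It then remains to check non-convexity by a one-line second-derivative computation: $\frac{d^2}{dx^2}\sigma(x)^2=2\sigma(x)\sigma'(x)\bigl(2-3\sigma(x)\bigr)$, which is strictly negative exactly when $\sigma(x)>2/3$, i.e. for $x>\ln2$. On the interval $[1,2]$ we have $x>\ln2$ and the affine term $\lambda x$ has no kink ($\xz=0\notin(1,2)$), so $F''<0$ on $(1,2)$: $F$ is strictly concave there, hence $F(3/2)>\tfrac12\bigl(F(1)+F(2)\bigr)$, so $F$---and thus the optimization problem in Equation~\ref{eq:xr}---is non-convex. Because the construction lives in dimension one, it is valid verbatim for every $p\ge1$.

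I expect the only mildly delicate step to be the first: identifying the worst-case $\theta$ in closed form (monotonicity of the per-example loss together with Hölder duality) and then picking the parameters so that the argument $\psi$ actually lands in a region where $\phi$ fails to be convex. Once that is set up, everything else is a routine scalar calculation, and collapsing to $d=1$ means nothing has to be re-derived norm by norm.
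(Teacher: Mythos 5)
Your proof is correct and follows the same basic strategy as the paper: fix a concrete one-dimensional instance, solve the inner maximization in closed form so that the objective becomes an explicit scalar function of $x$, and observe that this function is not convex. Two points where you do better than the paper's own argument are worth noting. First, the paper verifies non-convexity only informally (``plotting this function proves its non-convexity''), whereas your second-derivative computation $\frac{d^2}{dx^2}\sigma(x)^2=2\sigma(x)\sigma'(x)\bigl(2-3\sigma(x)\bigr)<0$ for $x>\ln 2$ gives a rigorous certificate of strict concavity on $(1,2)$. Second, by working in genuine dimension one (where all $L^p$ balls coincide), your single example covers $p=\infty$ as well, while the paper handles that case separately by citation to prior work; your Hölder-duality derivation of the worst-case model, $\thetaz^\top x-\alpha\|x\|_q$, is also cleaner and more general than the paper's case analysis. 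No gaps.
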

\begin{proof}
The proof for $p=\infty$ can be found in Appendix C1 of~\cite{KayasthaGJ24}. We provide an example to demonstrate the non-convexity for $p\geq 1$ and $p\ne \infty$. Consider a one-dimensional instance $\xz = [1,1]$, where the second dimension is the unchangeable intercept. Let $\thetaz = [0,0]$, $\ell$ to be the squared loss, and set $\alpha = 0.5$, and $\lambda = 1$. For any recourse, $[x,1]$ (note that the intercept cannot change), the worst-case $\newtheta$ is of the form $[0.5 \text{sign}(x),0]$ when $|x|\geq1$ and $[0, -0.5]$ when $|x|<1$. This is because $\alpha$ is 0.5 and $\thetaz$ is 0 in both dimensions. The price of recourse can be written as a function of $x$ as follows:
$$
J(x)=\begin{cases}
			1/\left(e^{0.5x \text{sign}(x)}\right)^2 + |x-1|, & |x|\geq 1,\\
            1/\left(e^{-0.5}\right)^2 + |x-1|, & |x| < 1.
		 \end{cases}
$$
Plotting this function proves its non-convexity.
\end{proof}

\renewcommand{\arraystretch}{2.0}
\begin{table*}[ht!]
\centering
\begin{adjustbox}{max width=\linewidth}
\huge{\begin{tabular}{|c|cccc|cccc|}
    \hline
  \multicolumn{1}{|c|}{} & \multicolumn{4}{c|}{German (NN)} & \multicolumn{4}{c|}{Small Business Administration (NN)} \\ \hline
     \multicolumn{1}{|c}{$\alpha$} & \multicolumn{2}{|c}{$0.1$} & \multicolumn{2}{|c}{$0.5$} & \multicolumn{2}{|c}{$0.1$} &\multicolumn{2}{|c|}{$0.5$} \\ \hline
     \multicolumn{1}{|c}{$\lambda$} & \multicolumn{1}{|c}{$0.7$} & \multicolumn{1}{|c}{$0.3$} & \multicolumn{1}{|c}{$0.7$} & \multicolumn{1}{|c}{$0.3$} & \multicolumn{1}{|c}{$0.1$} & \multicolumn{1}{|c}{$0.01$} & \multicolumn{1}{|c}{$0.1$} & \multicolumn{1}{|c|}{$0.01$} \\ \hline 
    Alg1 ($L^1$) 
        & $\mathbf{1.62} \pm 0.34$ 
        & $\mathbf{1.29} \pm 0.34$ 
        & $\mathbf{4.23} \pm 0.65$ 
        & $\mathbf{3.75} \pm 0.42$ 
        & $\mathbf{0.30} \pm 0.10$ 
        & $\mathbf{0.03} \pm 0.02$ 
        & $\mathbf{2.91} \pm 2.04$ 
        & $\mathbf{0.42} \pm 0.56$  \\ \cline{1-1}
    ROAR ($L^1$)
        & \begin{tabular}{c}$1.62 \pm 0.32$ \\ (+0.3\%)\end{tabular}
        & \begin{tabular}{c}$1.42 \pm 0.28$ \\ (+10.8\%)\end{tabular}
        & \begin{tabular}{c}$4.50 \pm 0.72$ \\ (+6.5\%)\end{tabular}
        & \begin{tabular}{c}$3.97 \pm 0.59$ \\ (+5.9\%)\end{tabular}
        & \begin{tabular}{c}$1.19 \pm 0.25$ \\ (+299.1\%)\end{tabular}
        & \begin{tabular}{c}$0.32 \pm 0.05$ \\ (+1001.9\%)\end{tabular}
        & \begin{tabular}{c}$5.37 \pm 1.27$ \\ (+84.5\%)\end{tabular}
        & \begin{tabular}{c}$3.64 \pm 1.56$ \\ (+760.9\%)\end{tabular} \\ \cline{1-1} 
    Alg2 ($L^\infty$) 
        & \begin{tabular}{c}$64.64 \pm 5.63$ \\ (+3896.1\%)\end{tabular}
        & \begin{tabular}{c}$60.49 \pm 2.06$ \\ (+4606.6\%)\end{tabular}
        & \begin{tabular}{c}$84.91 \pm 0.48$ \\ (+1908.7\%)\end{tabular}
        & \begin{tabular}{c}$85.54 \pm 1.15$ \\ (+2183.9\%)\end{tabular}
        & \begin{tabular}{c}$87.79 \pm 4.95$ \\ (+29275.2\%)\end{tabular}
        & \begin{tabular}{c}$87.11 \pm 3.54$ \\ (+300873.4\%)\end{tabular}
        & \begin{tabular}{c}$98.33 \pm 2.93$ \\ (+3278.4\%)\end{tabular}
        & \begin{tabular}{c}$100.05 \pm 0.03$ \\ (+23547.4\%)\end{tabular} \\ \cline{1-1}
    ROAR ($L^\infty)$ 
        & \begin{tabular}{c}$61.41 \pm 0.05$ \\ (+3696.2\%)\end{tabular}
        & \begin{tabular}{c}$62.02 \pm 1.68$ \\ (+4725.8\%)\end{tabular}
        & \begin{tabular}{c}$85.26 \pm 0.84$ \\ (+1917.1\%)\end{tabular}
        & \begin{tabular}{c}$92.91 \pm 4.64$ \\ (+2380.6\%)\end{tabular}
        & \begin{tabular}{c}$88.36 \pm 2.70$ \\ (+29275.15\%)\end{tabular}
        & \begin{tabular}{c}$82.22 \pm 10.84$ \\ (+283970.1\%)\end{tabular}
        & \begin{tabular}{c}$96.96 \pm 5.58$ \\ (+3231.1\%)\end{tabular}
        & \begin{tabular}{c}$100.13 \pm 0.05$ \\ (+23567.2\%)\end{tabular} 
    \\\cline{1-1}  \hline 
\end{tabular}
}
\end{adjustbox}
\caption{The price of recourse for neural network models using Smoothgrad approximation. The columns correspond to combinations of $\alpha$ and $\lambda$ for each of the datasets. Each row represents the price of recourse returned by each of the algorithms, averaged over all the test instances in each dataset. The smallest price is shown in bold in each column, and percentages indicate the increase in price compared to the smallest value. \label{table:price_nn_sg}}
\end{table*}
\renewcommand{\arraystretch}{1}

\section{Omitted Details from Section~\ref{sec:exp}}
\label{sec:app-exp}

\subsection{Additional Experimental Details}
\label{sec:app-exp-details}
The experiments are conducted on three Apple MacBook Pro laptops. The first MacBook has an Apple M1 Max chip with 10 cores and 32 GB of memory, the second MacBook has an Apple M1 Pro chip with 8 cores and 16 GB of memory, and the third MacBook has an Apple M3 Max chip with 14 cores and 36 GB of memory. While we use all the instances for the German Credit dataset in both logistic regression and neural network models, for the Small Business Administration dataset, we subsample $24\%$ of instances in the logistic regression models and $16\%$ in the neural network models.

The average time to compute a robust recourse varies between datasets and different algorithms. In the German Credit dataset, the average run times are 188.4 seconds for Algorithm~\ref{alg:l1-l1}, 0.0001 seconds for Algorithm~\ref{alg:l1-linf}, 0.3 seconds for ROAR $(L^1)$, and 0.8 seconds for ROAR $(L^\infty)$. In the Small Business Administration dataset, the average run times are 667.5 seconds for Algorithm~\ref{alg:l1-l1}, 0.0001 seconds for Algorithm~\ref{alg:l1-linf}, 0.3 seconds for ROAR $(L^1)$, and 1.5 seconds for ROAR $(L^\infty)$. We observe a high difference in runtime for Algorithm~\ref{alg:l1-l1} due to the polynomial dependency of the running time on the number of dimensions $d$ as described in Section~\ref{sec:alg-analysis}.

\subsection{Linearization Using SmoothGrad~\cite{SmilkovTKVW17}}
\label{sec:app-exp-smoothgrad}
Instead of using LIME to compute a local surrogate for the model, we can use an alternate linearization. When the models are differentiable, one such approach is SmoothGrad, which averages out the gradients in a local neighborhood of the instance of interest. In Table~\ref{table:price_nn_sg}, we replicated the same set of experiments as in Table~\ref{table:price_nn}, only replacing the LIME approximation with SmoothGrad~\cite{SmilkovTKVW17}. We observe that \emph{(i)} Algorithm~\ref{alg:l1-l1} still outperforms the baselines and \emph{(ii)} the price of recourse barely changes after this modification.

\subsection{Additional Details About the Trade-off Between Validity and Implementation Cost}
\label{sec:app-exp-trade-off}
For the German Credit dataset with the logistic regression model, where $\alpha = 0.1$\, we use $0.001 \le \lambda \le 0.5$ bound, and with $\alpha = 0.5$, we use $0.004 \le \lambda \le 0.5$ bound. For German Credit dataset with neural network models $(\alpha=0.1, 0.5)$, we use $0.01 \le \lambda \le 3.0$ bound. For the Small Business Administration dataset, the logistic regression model uses $0.01 \le \lambda \le 2.1$ bound, and the neural network model uses $0.01 \le \lambda \le 3.5$ bound $(\alpha = 0.1, 0.5)$.

\begin{figure*}[ht!]
    \centering
    \begin{subfigure}[b]{0.45\textwidth}
        \centering
        \includegraphics[width=0.85\textwidth]{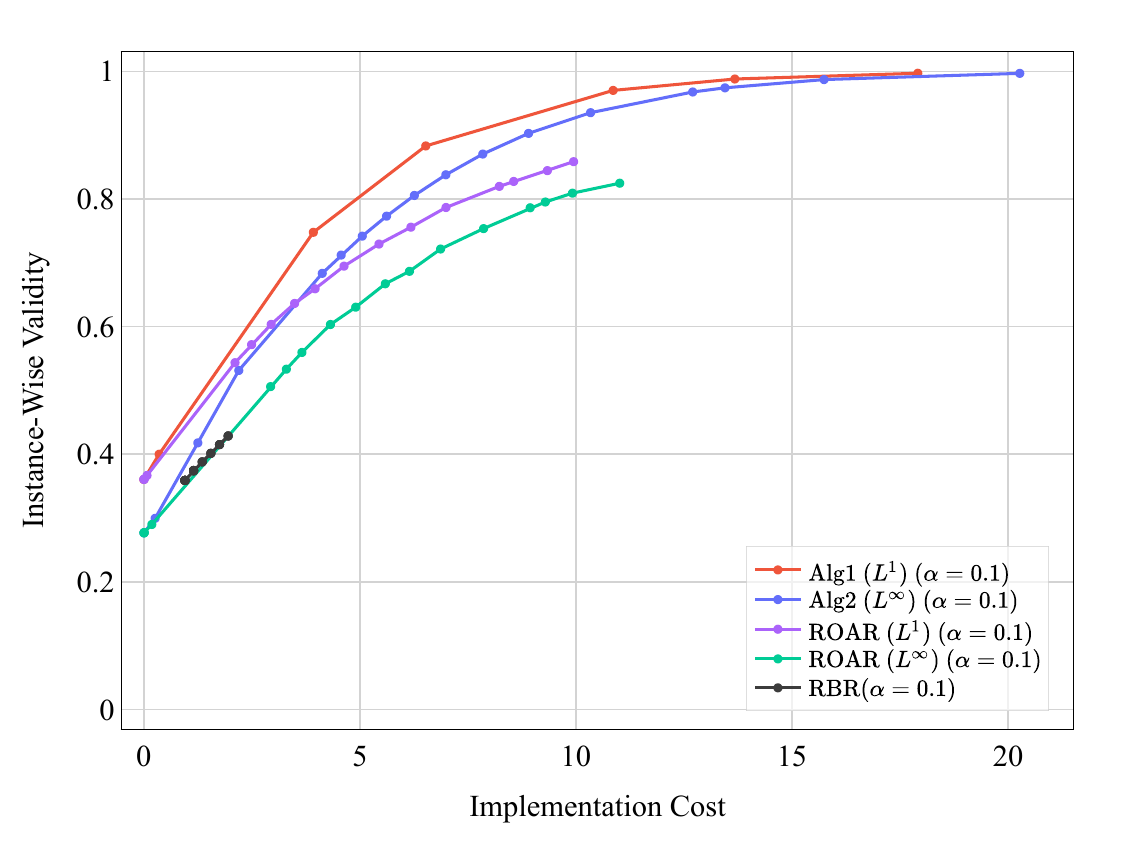}
        \caption{Instance-Wise Validity, Logistic Regression}
        \label{fig:cost_validity_tradeoff_lr_german_instance_app}
    \end{subfigure}
    \begin{subfigure}[b]{0.45\textwidth}
        \centering
        \includegraphics[width=0.85\textwidth]{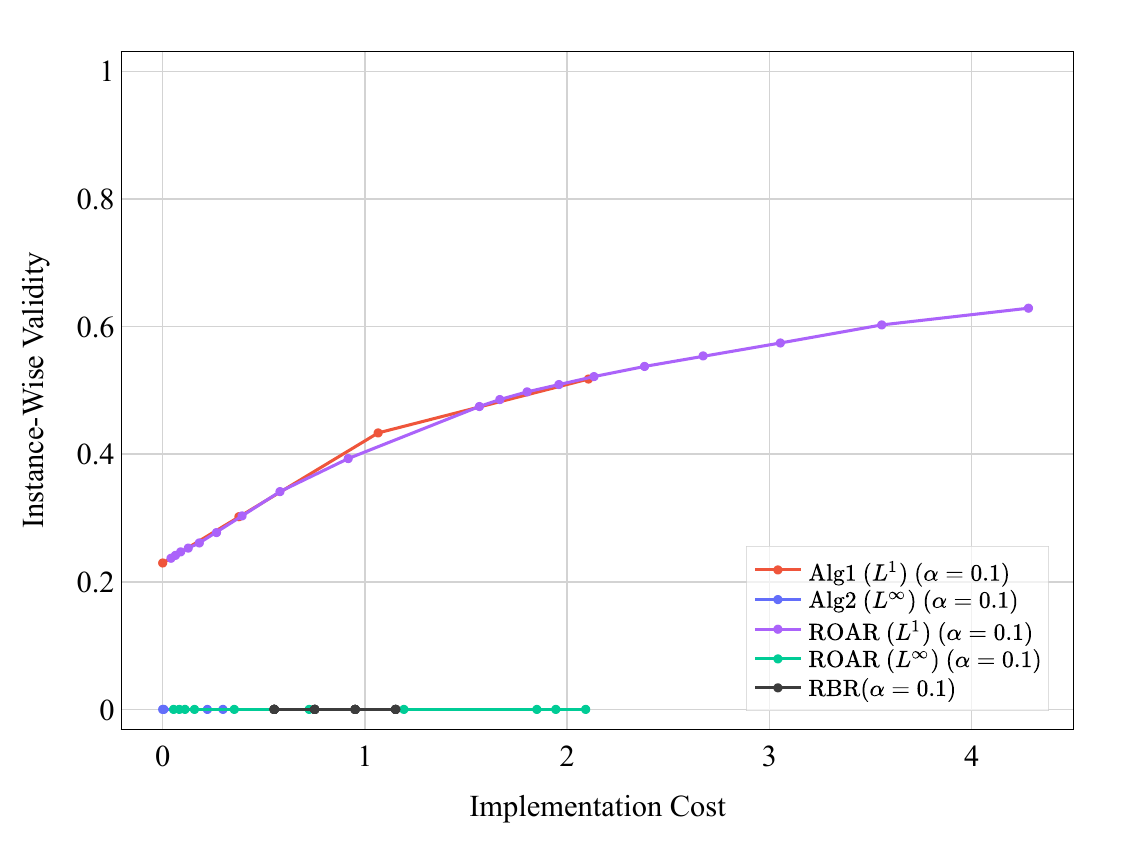}
        \caption{Instance-Wise Validity, Neural Network}
        \label{fig:cost_validity_tradeoff_nn_german_instance_app}
    \end{subfigure}
    \begin{subfigure}[b]{0.45\textwidth}
        \centering
        \includegraphics[width=0.85\textwidth]{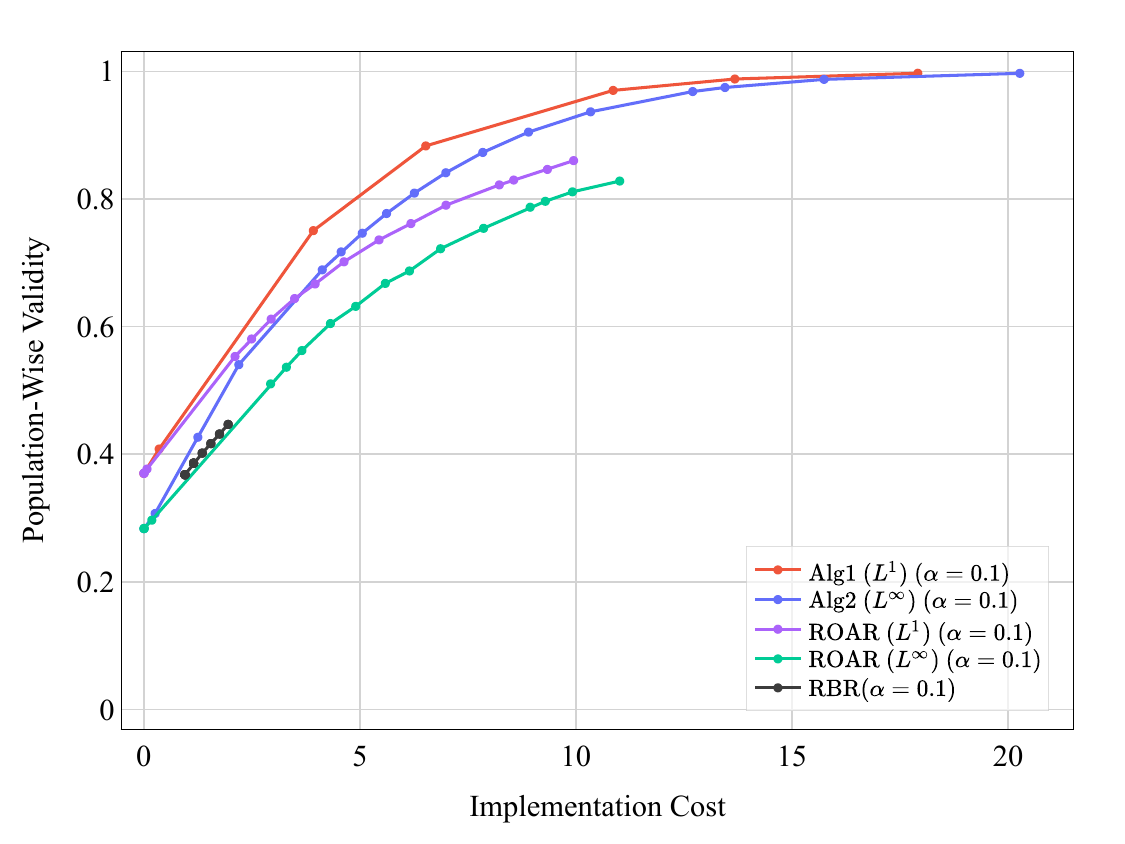}
        \caption{Population-Wise Validity, Logistic Regression}
        \label{fig:cost_validity_tradeoff_lr_german_population_app}
    \end{subfigure}
    \begin{subfigure}[b]{0.45\textwidth}
        \centering
        \includegraphics[width=0.85\textwidth]{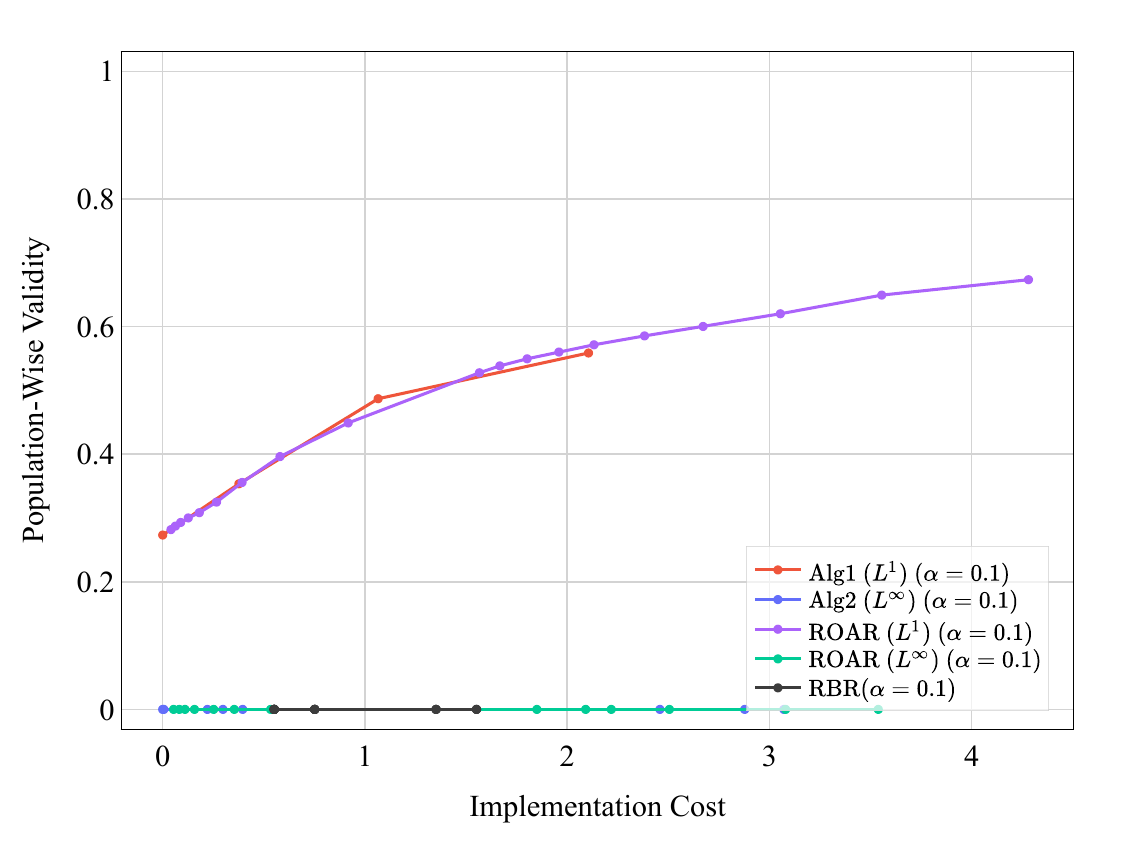}
        \caption{Population-Wise Validity, Neural Network}
        \label{fig:cost_validity_tradeoff_nn_german_population_app}
    \end{subfigure}
    \begin{subfigure}[b]{0.45\textwidth}
        \centering
        \includegraphics[width=0.85\textwidth]{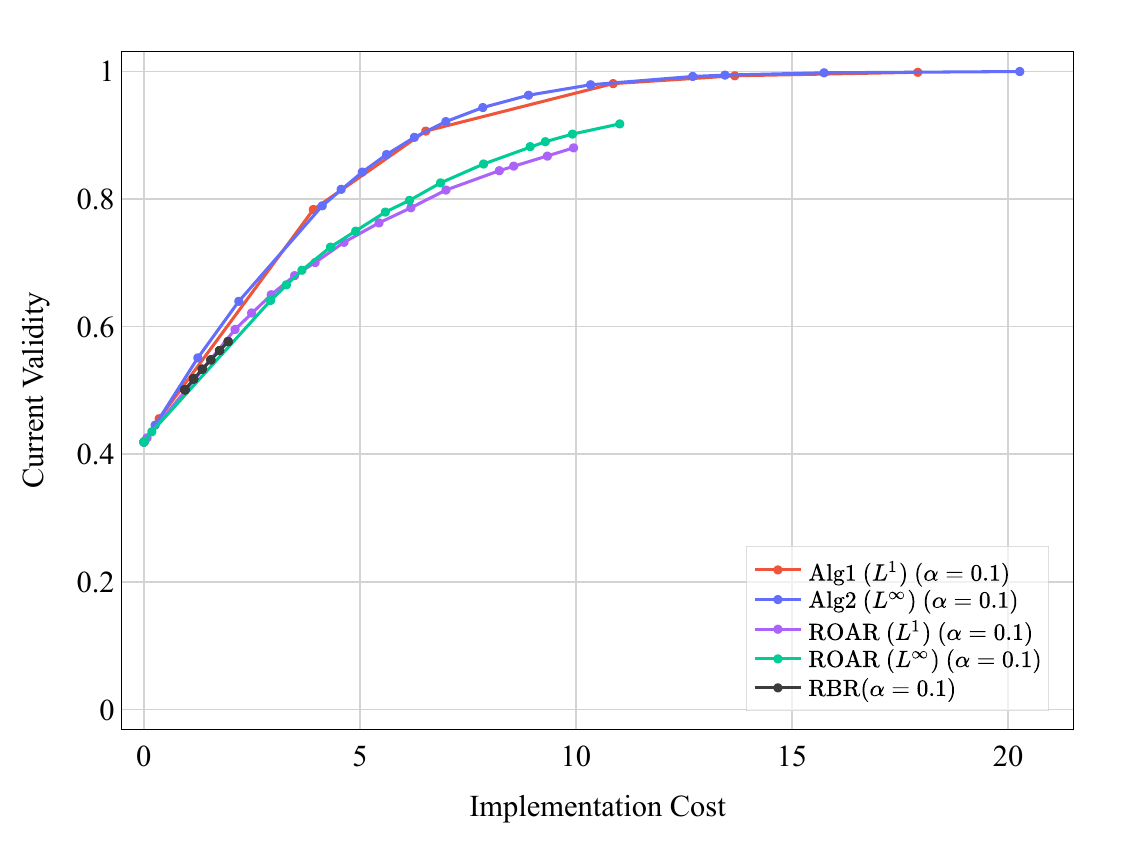}
        \caption{Current Validity, Logistic Regression}
        \label{fig:cost_validity_tradeoff_lr_german_current_app}
    \end{subfigure}
    \begin{subfigure}[b]{0.45\textwidth}
        \centering
        \includegraphics[width=0.85\textwidth]{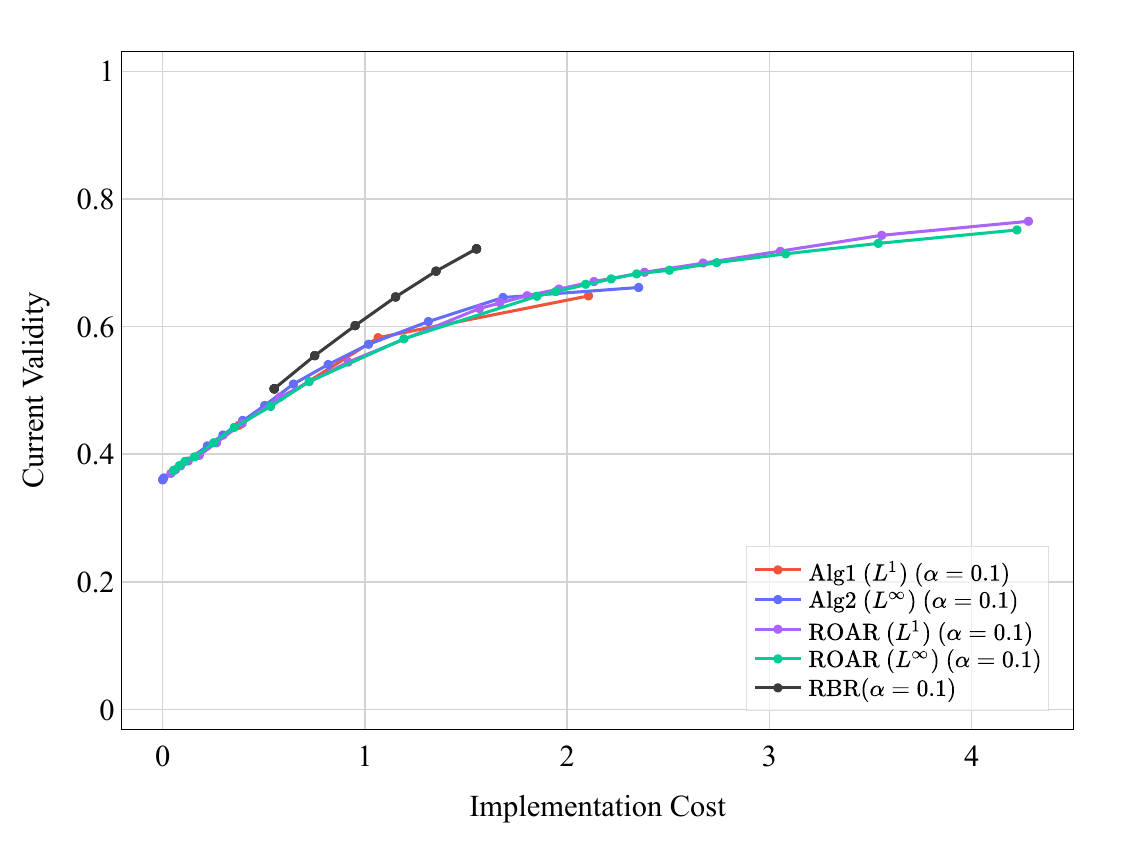}
        \caption{Current Validity, Neural Network}
        \label{fig:cost_validity_tradeoff_nn_german_current_app}
    \end{subfigure}
    \begin{subfigure}[b]{0.45\textwidth}
        \centering
        \includegraphics[width=0.85\textwidth]{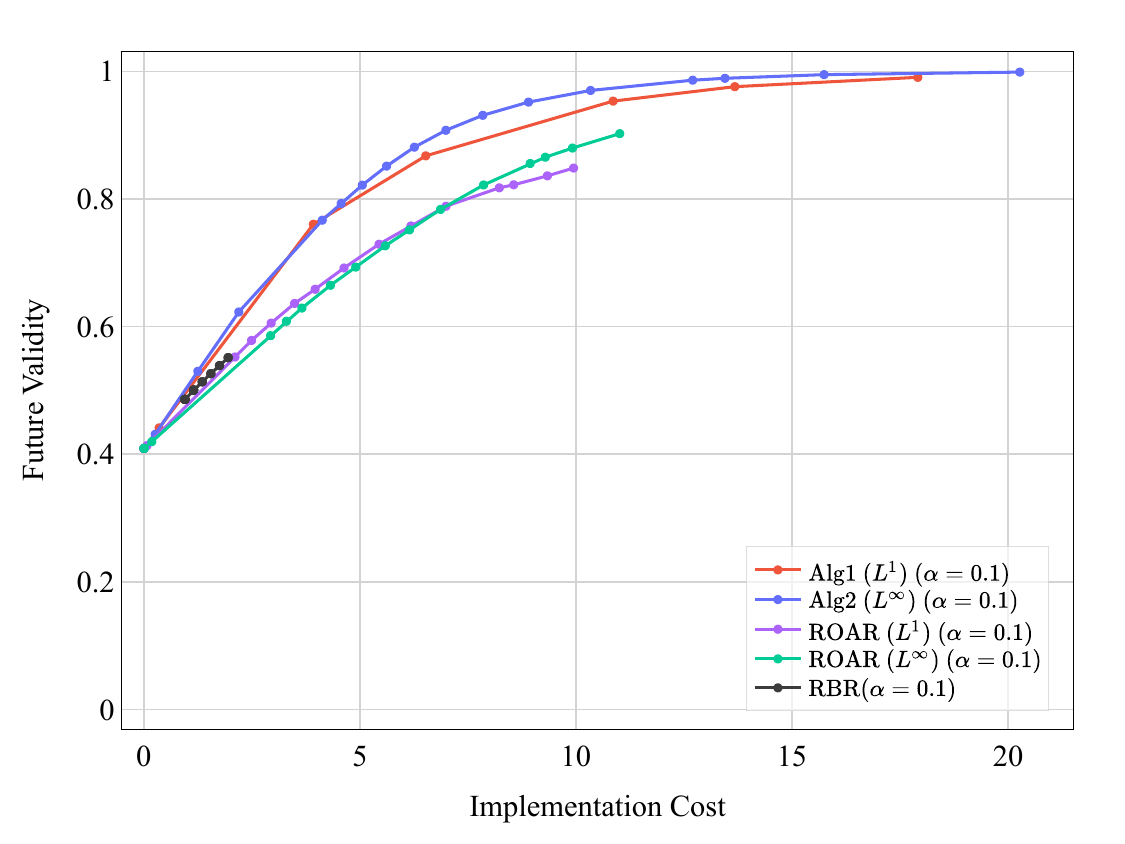}
        \caption{Future Validity, Logistic Regression}
        \label{fig:cost_validity_tradeoff_lr_german_future_app}
    \end{subfigure}
    \begin{subfigure}[b]{0.45\textwidth}
        \centering
        \includegraphics[width=0.85\textwidth]{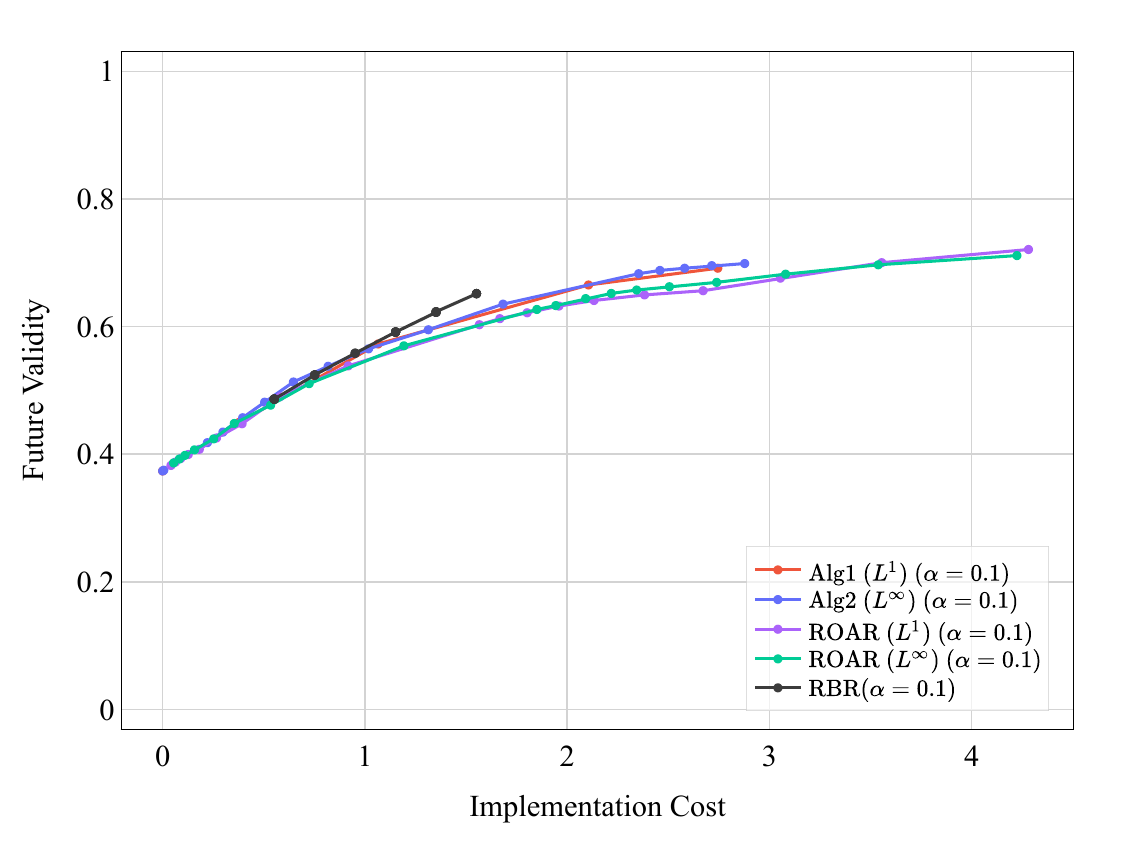}
        \caption{Future Validity, Neural Network}
        \label{fig:cost_validity_tradeoff_nn_german_future_app}
    \end{subfigure}
    \caption{The frontier of the trade-off between validity and implementation cost on the German Credit dataset with $\alpha=0.1$. The left and right columns correspond to logistic regression and neural network models. Each row corresponds to a different measure of validity. In each subfigure, curves show the trade-off for different algorithms.
    \label{fig:cost_validity_tradeoff_german_alpha_0.1_app}}
\end{figure*}

\begin{figure*}[ht!]
    \centering
    \begin{subfigure}[b]{0.45\textwidth}
        \centering
        \includegraphics[width=0.85\textwidth]{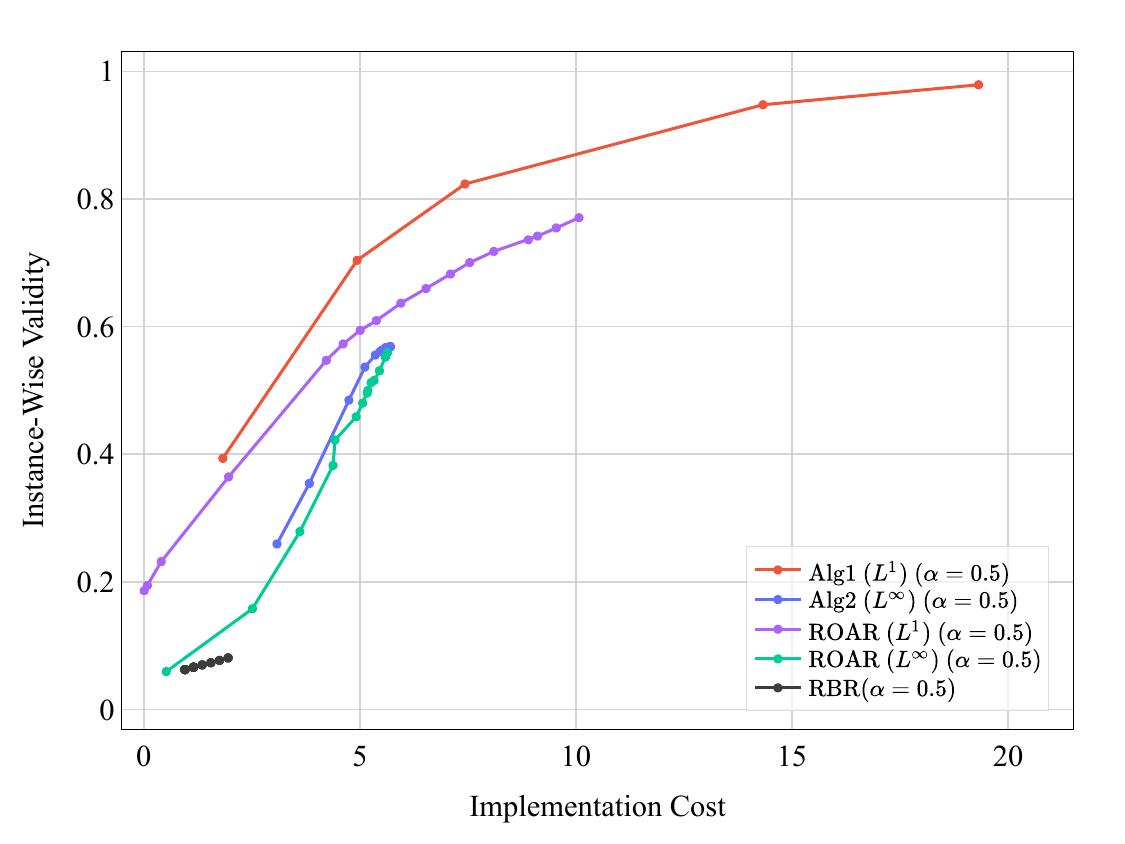}
        \caption{Instance-Wise Validity, Logistic Regression}
        \label{fig:cost_validity_tradeoff_lr_german_instance_alpha_0.5_app}
    \end{subfigure}
    \begin{subfigure}[b]{0.45\textwidth}
        \centering
        \includegraphics[width=0.85\textwidth]{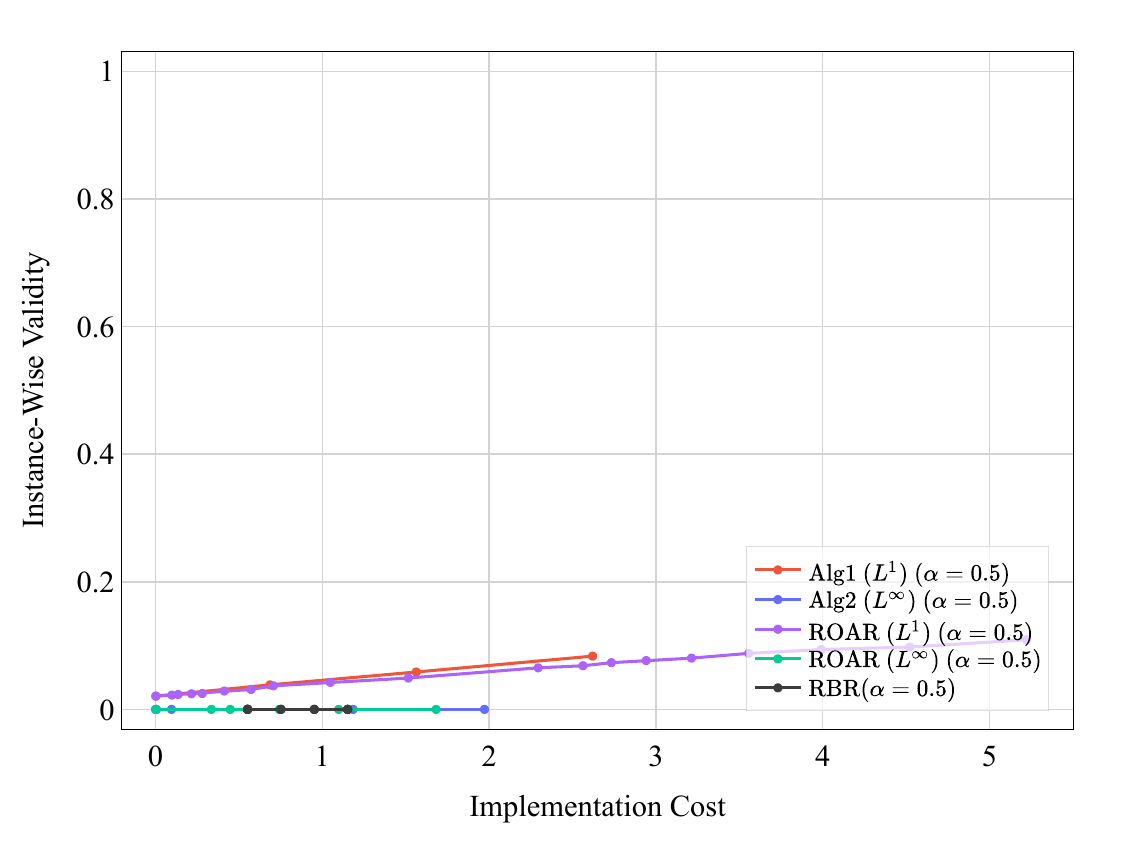}
        \caption{Instance-Wise Validity, Neural Network}
        \label{fig:cost_validity_tradeoff_nn_german_instance_alpha_0.5_app}
    \end{subfigure}
    \begin{subfigure}[b]{0.45\textwidth}
        \centering
        \includegraphics[width=0.85\textwidth]{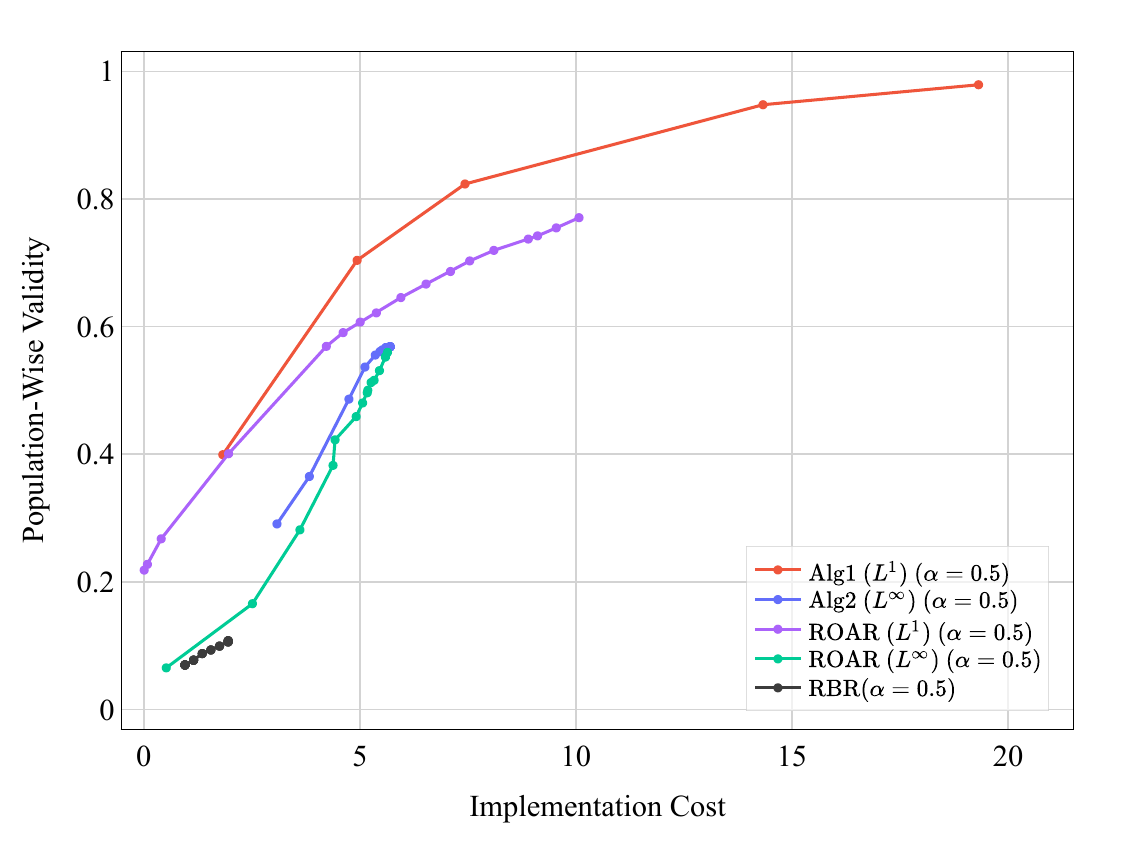}
        \caption{Population-Wise Validity, Logistic Regression}
        \label{fig:cost_validity_tradeoff_lr_german_population_alpha_0.5_app}
    \end{subfigure}
    \begin{subfigure}[b]{0.45\textwidth}
        \centering
        \includegraphics[width=0.85\textwidth]{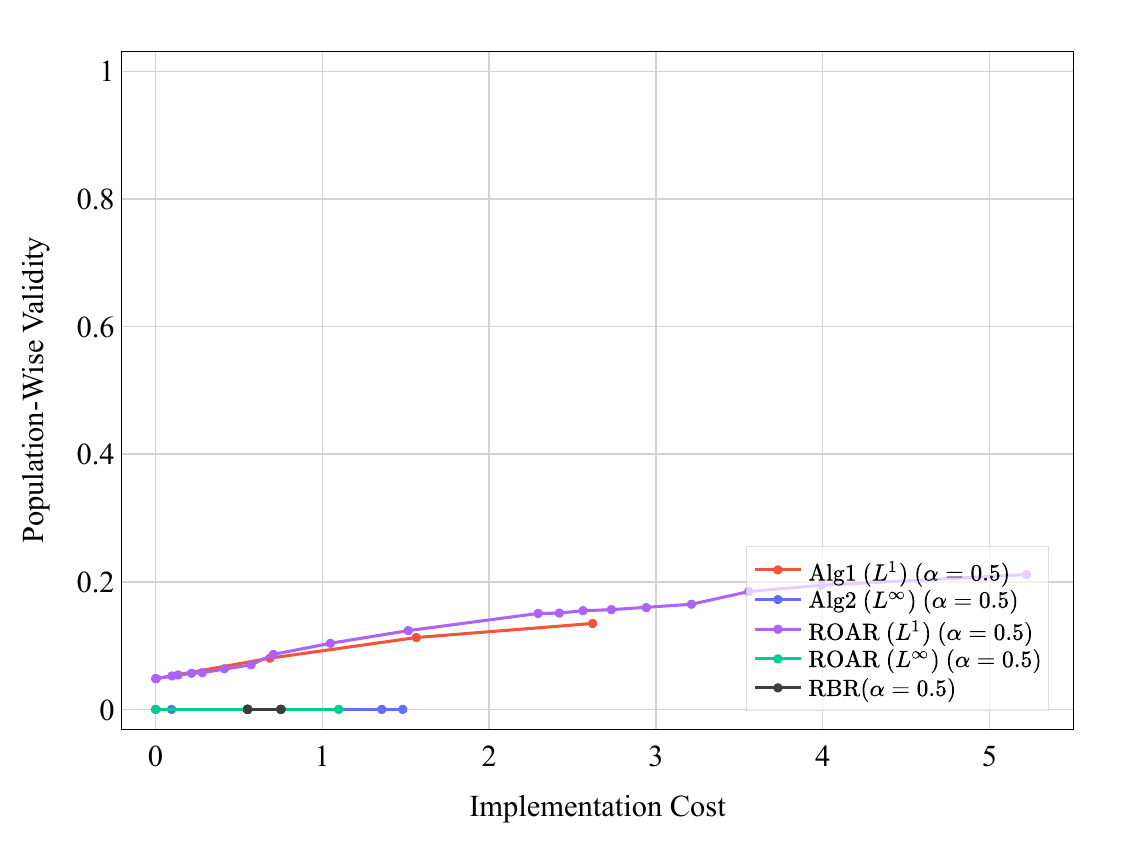}
        \caption{Population-Wise Validity, Neural Network}
        \label{fig:cost_validity_tradeoff_nn_german_population_alpha_0.5_app}
    \end{subfigure}
    \begin{subfigure}[b]{0.45\textwidth}
        \centering
        \includegraphics[width=0.85\textwidth]{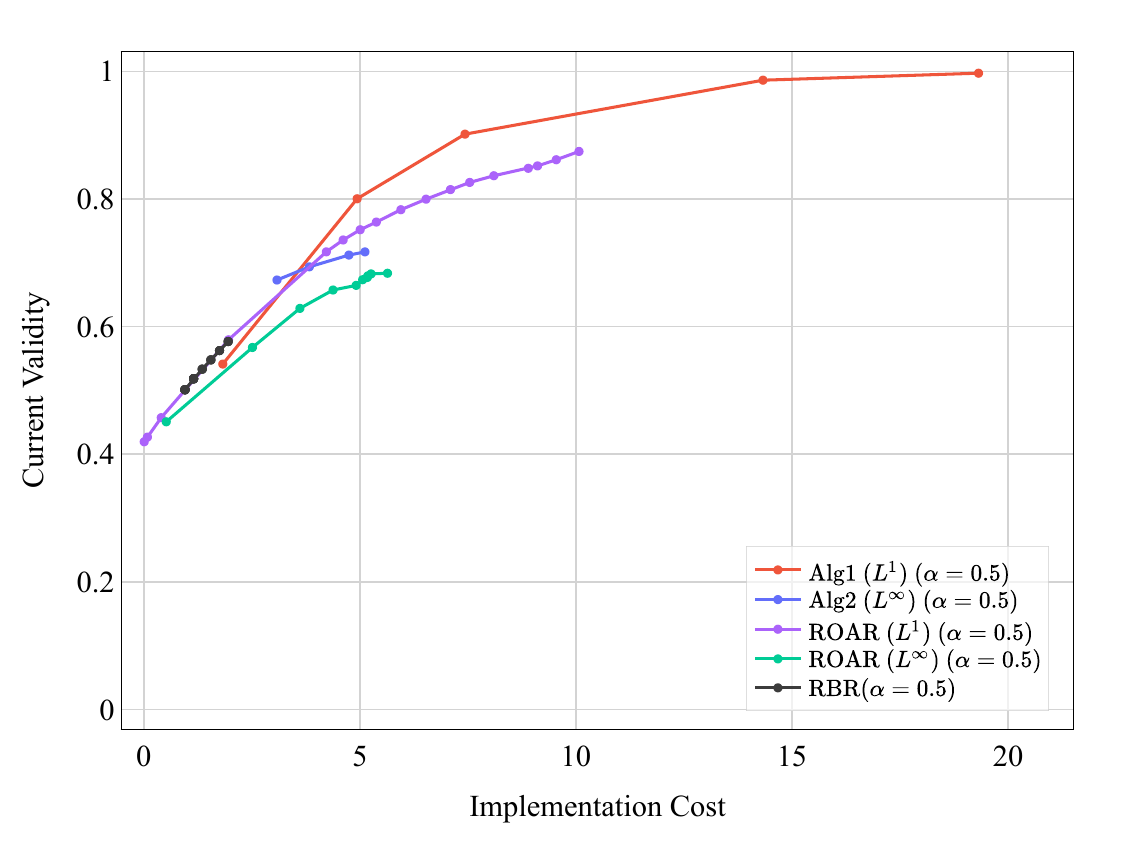}
        \caption{Current Validity, Logistic Regression}
        \label{fig:cost_validity_tradeoff_lr_german_current_alpha_0.5_app}
    \end{subfigure}
    \begin{subfigure}[b]{0.45\textwidth}
        \centering
        \includegraphics[width=0.85\textwidth]{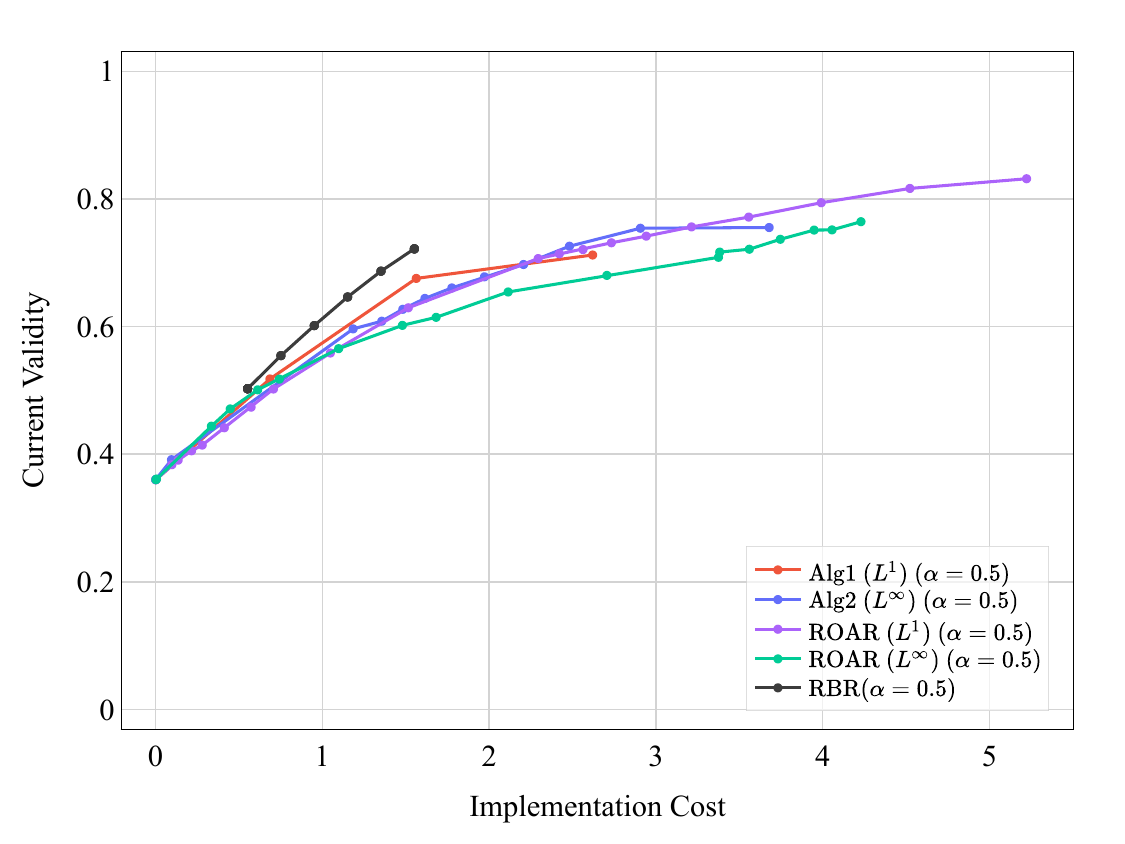}
        \caption{Current Validity, Neural Network}
        \label{fig:cost_validity_tradeoff_nn_german_current_alpha_0.5_app}
    \end{subfigure}
    \begin{subfigure}[b]{0.45\textwidth}
        \centering
        \includegraphics[width=0.85\textwidth]{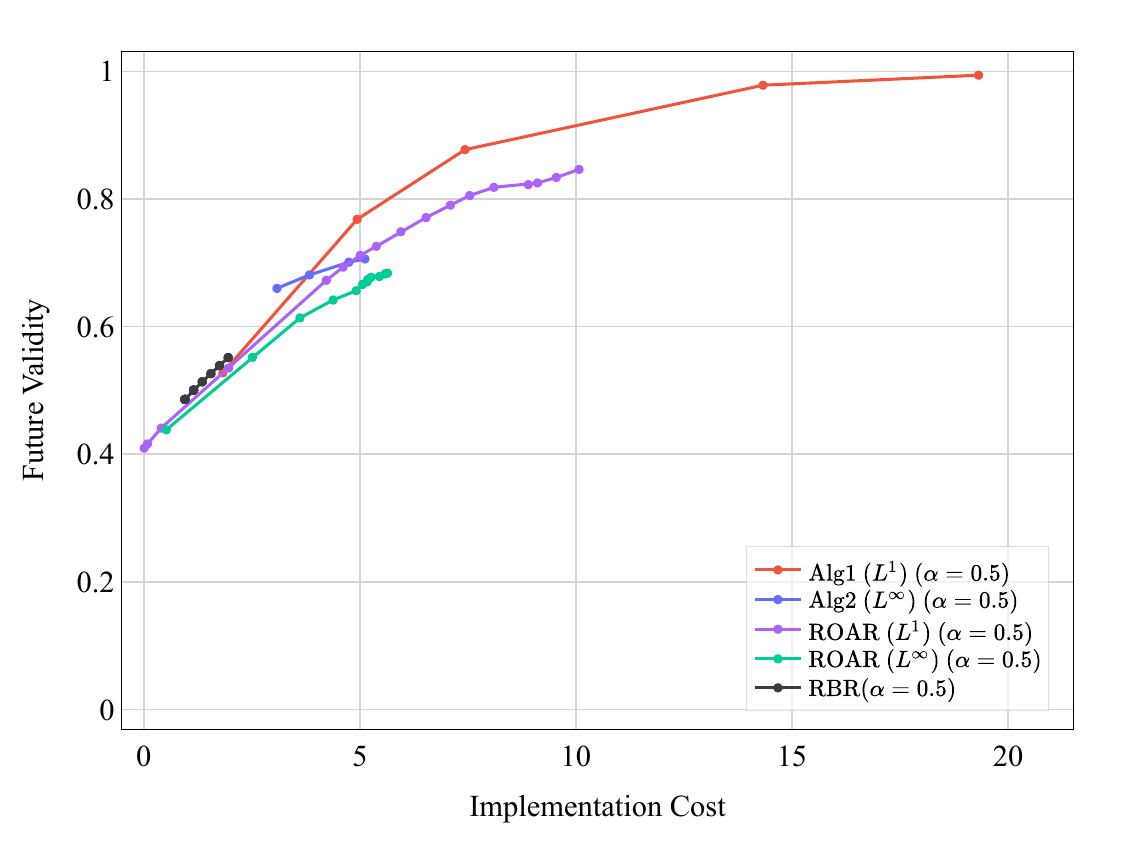}
        \caption{Future Validity, Logistic Regression}
        \label{fig:cost_validity_tradeoff_lr_german_future_alpha_0.5_app}
    \end{subfigure}
    \begin{subfigure}[b]{0.45\textwidth}
        \centering
        \includegraphics[width=0.85\textwidth]{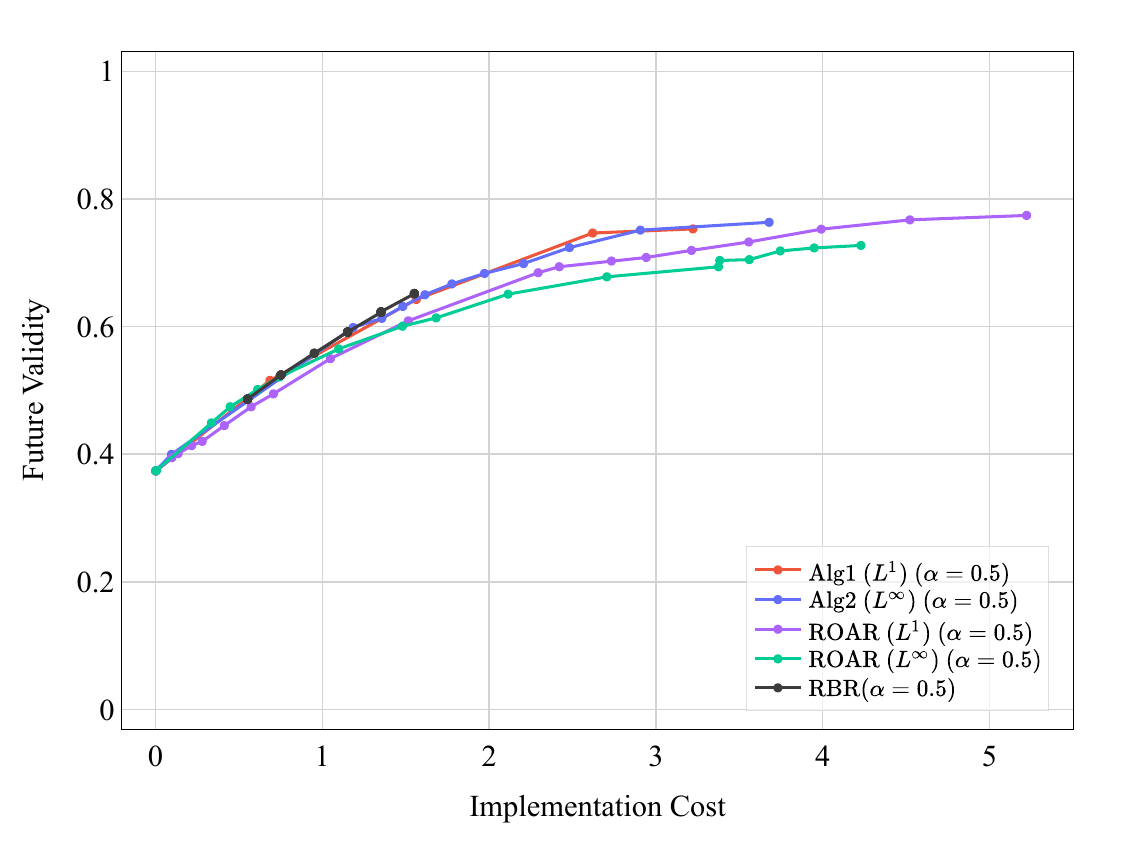}
        \caption{Future Validity, Neural Network}
        \label{fig:cost_validity_tradeoff_nn_german_future_alpha_0.5_app}
    \end{subfigure}
    \caption{The frontier of the trade-off between validity and implementation cost on the German Credit dataset with $\alpha=0.5$. The left and right columns correspond to logistic regression and neural network models. Each row corresponds to a different measure of validity. In each subfigure, curves show the trade-off for different algorithms.
    \label{fig:cost_validity_tradeoff_german_alpha_0.5_app}}
\end{figure*}

\begin{figure*}[ht!]
    \centering
    \begin{subfigure}[b]{0.45\textwidth}
        \centering
        \includegraphics[width=0.85\textwidth]{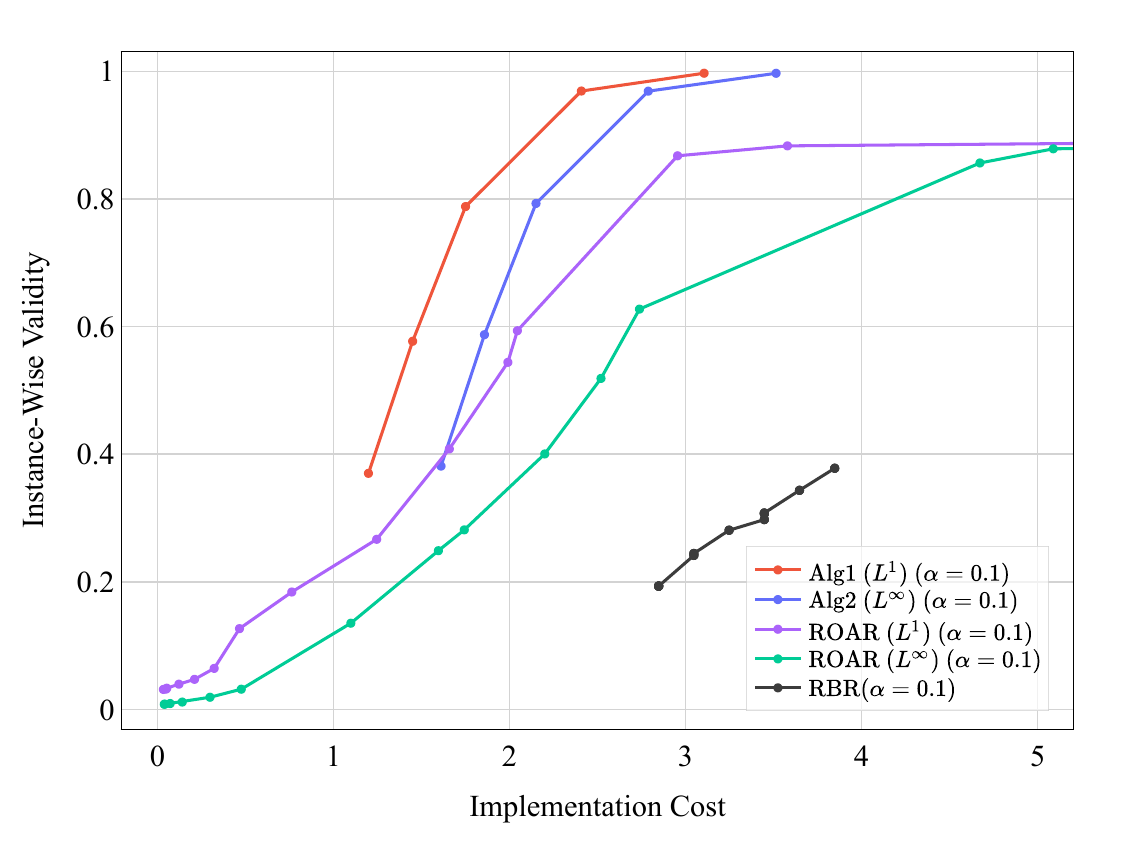}
        \caption{Instance-Wise Validity, Logistic Regression}
        \label{fig:cost_validity_tradeoff_lr_sba_instance_app}
    \end{subfigure}
    \begin{subfigure}[b]{0.45\textwidth}
        \centering
        \includegraphics[width=0.85\textwidth]{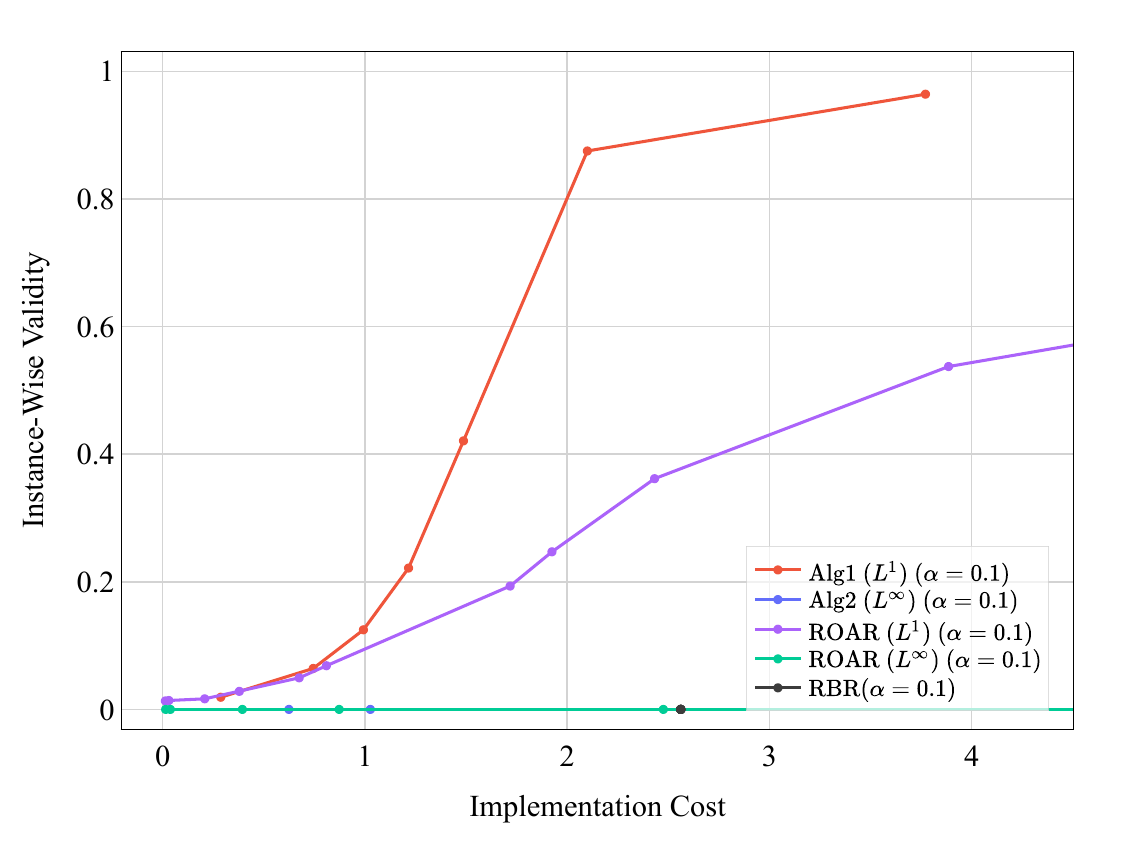}
        \caption{Instance-Wise Validity, Neural Network}
        \label{fig:cost_validity_tradeoff_nn_sba_instance_app}
    \end{subfigure}
    \begin{subfigure}[b]{0.45\textwidth}
        \centering
        \includegraphics[width=0.85\textwidth]{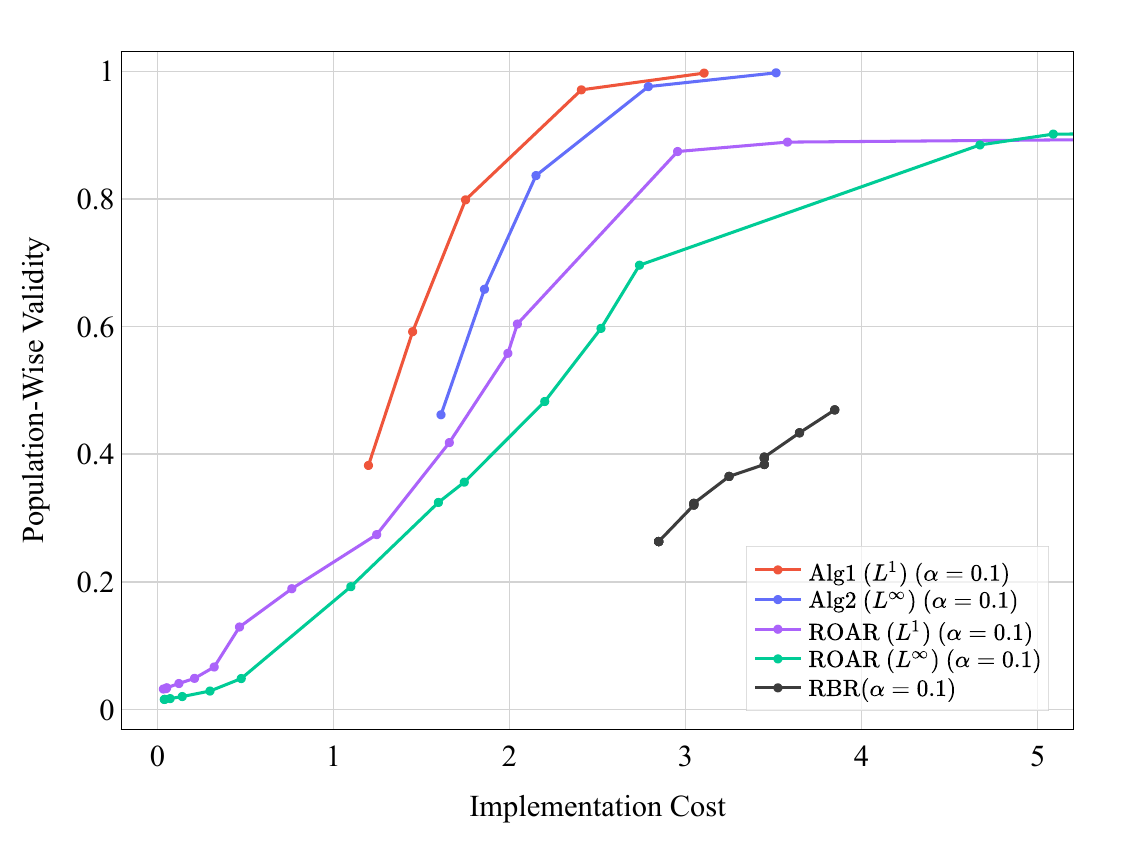}
        \caption{Population-Wise Validity, Logistic Regression}
        \label{fig:cost_validity_tradeoff_lr_sba_population_app}
    \end{subfigure}
    \begin{subfigure}[b]{0.45\textwidth}
        \centering
        \includegraphics[width=0.85\textwidth]{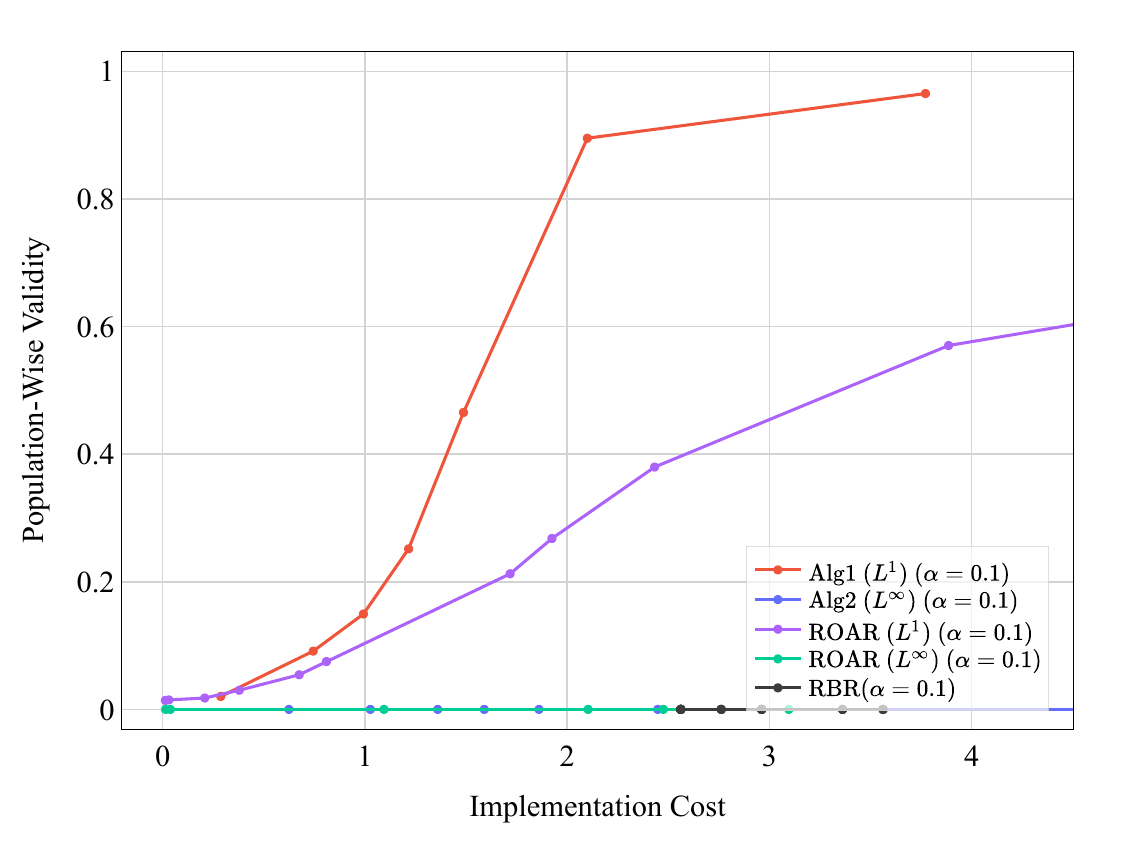}
        \caption{Population-Wise Validity, Neural Network}
        \label{fig:cost_validity_tradeoff_nn_sba_population_app}
    \end{subfigure}
    \begin{subfigure}[b]{0.45\textwidth}
        \centering
        \includegraphics[width=0.85\textwidth]{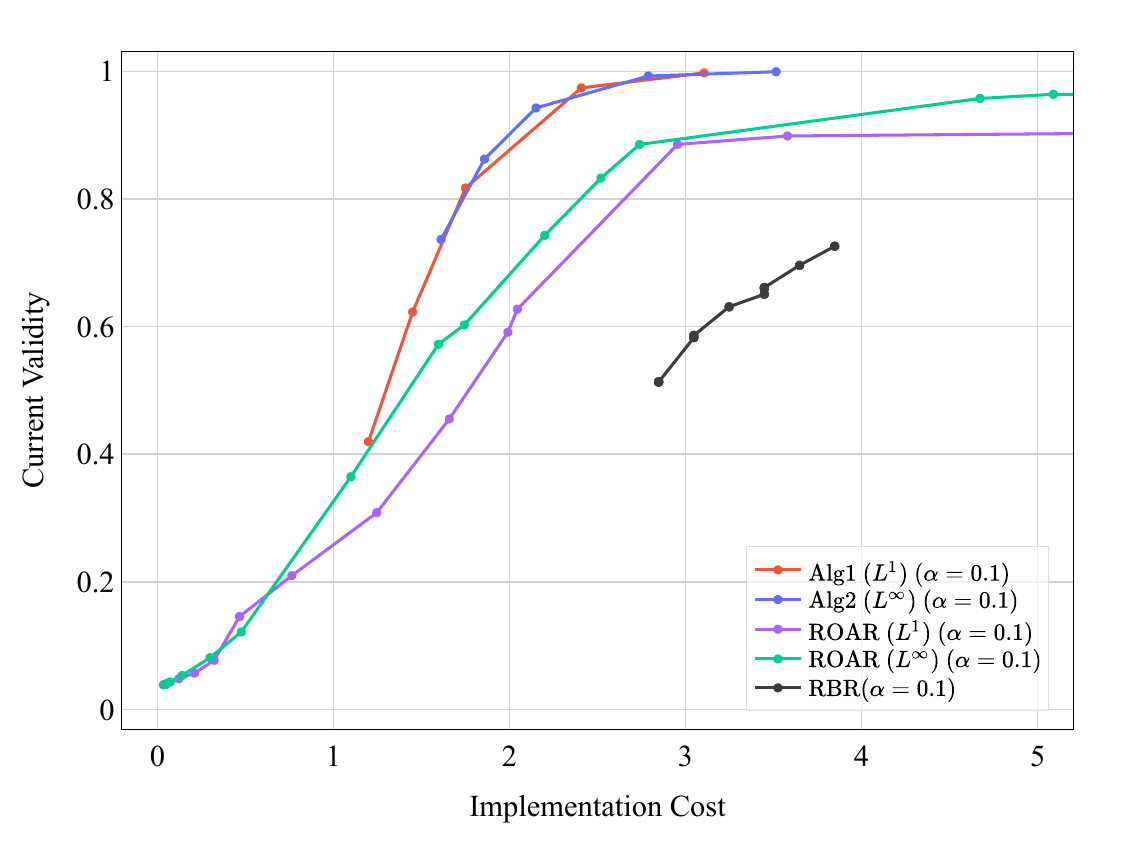}
        \caption{Current Validity, Logistic Regression}
        \label{fig:cost_validity_tradeoff_lr_sba_current_app}
    \end{subfigure}
    \begin{subfigure}[b]{0.45\textwidth}
        \centering
        \includegraphics[width=0.85\textwidth]{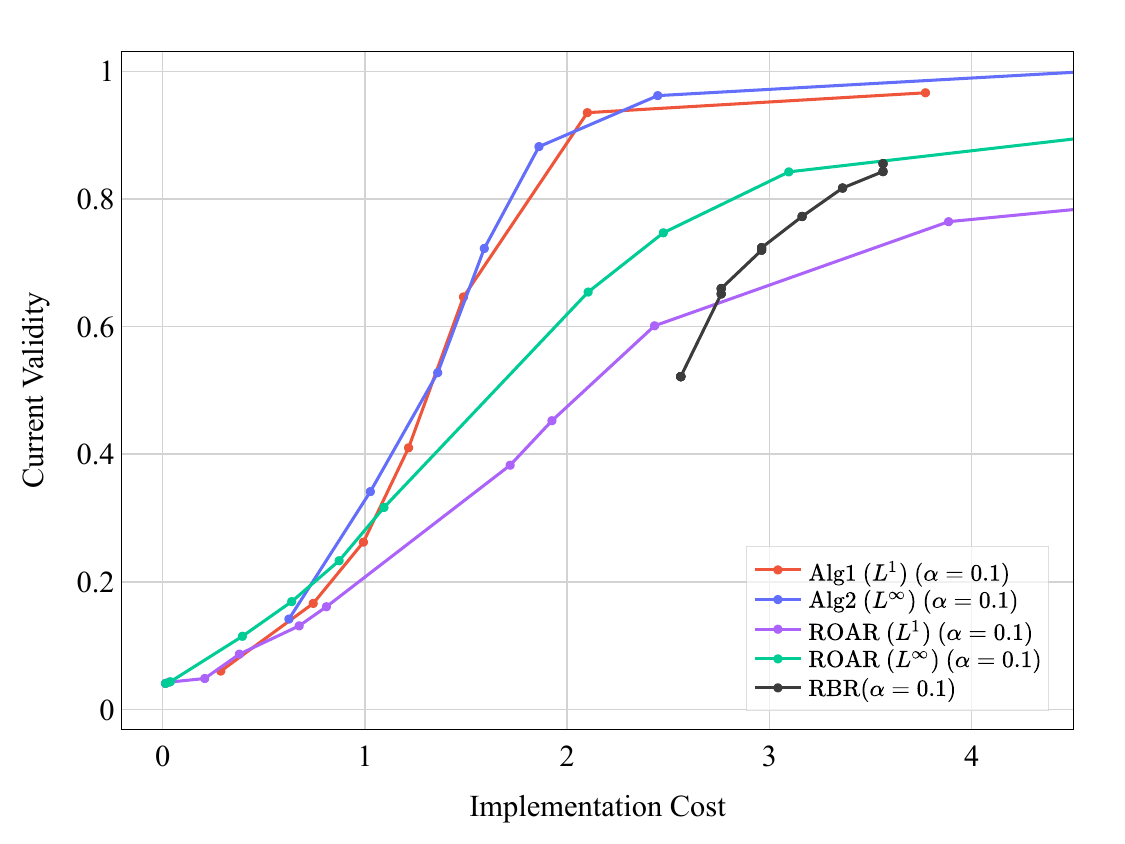}
        \caption{Current Validity, Neural Network}
        \label{fig:cost_validity_tradeoff_nn_sba_current_app}
    \end{subfigure}
    \begin{subfigure}[b]{0.45\textwidth}
        \centering
        \includegraphics[width=0.85\textwidth]{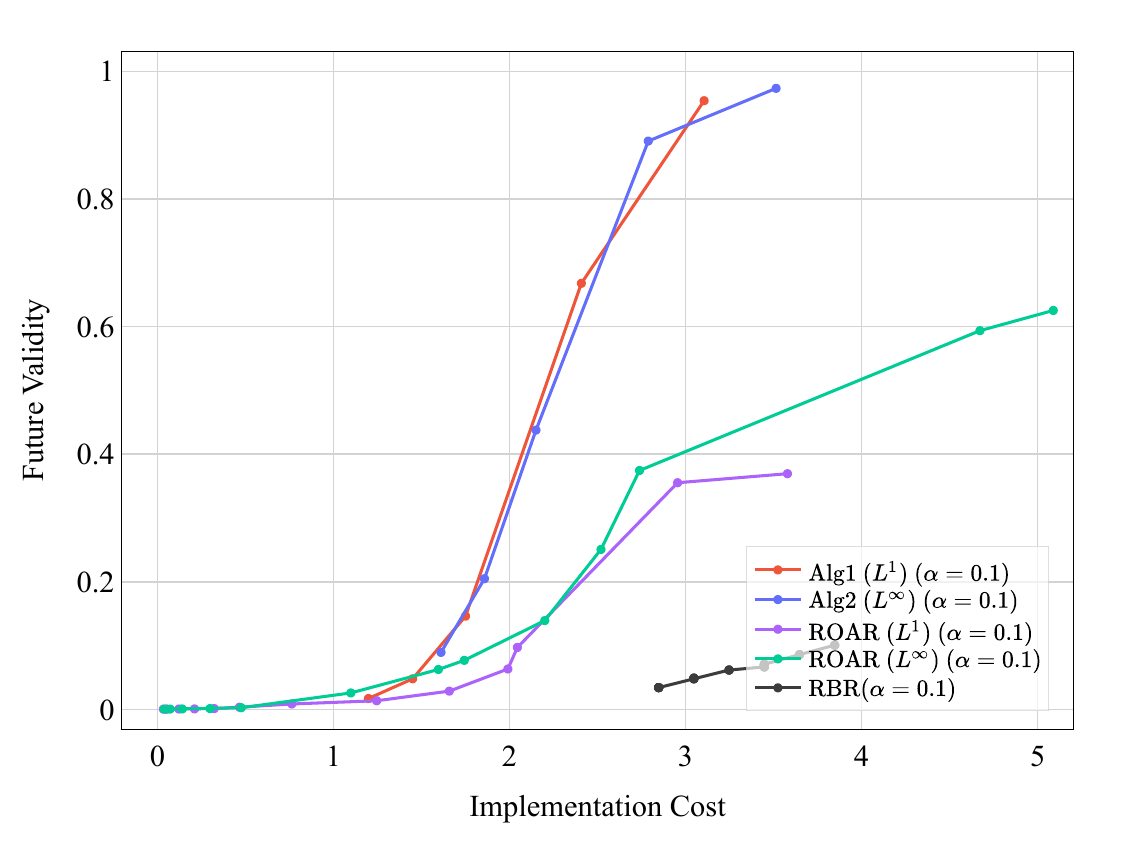}
        \caption{Future Validity, Logistic Regression}
        \label{fig:cost_validity_tradeoff_lr_sba_future_app}
    \end{subfigure}
    \begin{subfigure}[b]{0.45\textwidth}
        \centering
        \includegraphics[width=0.85\textwidth]{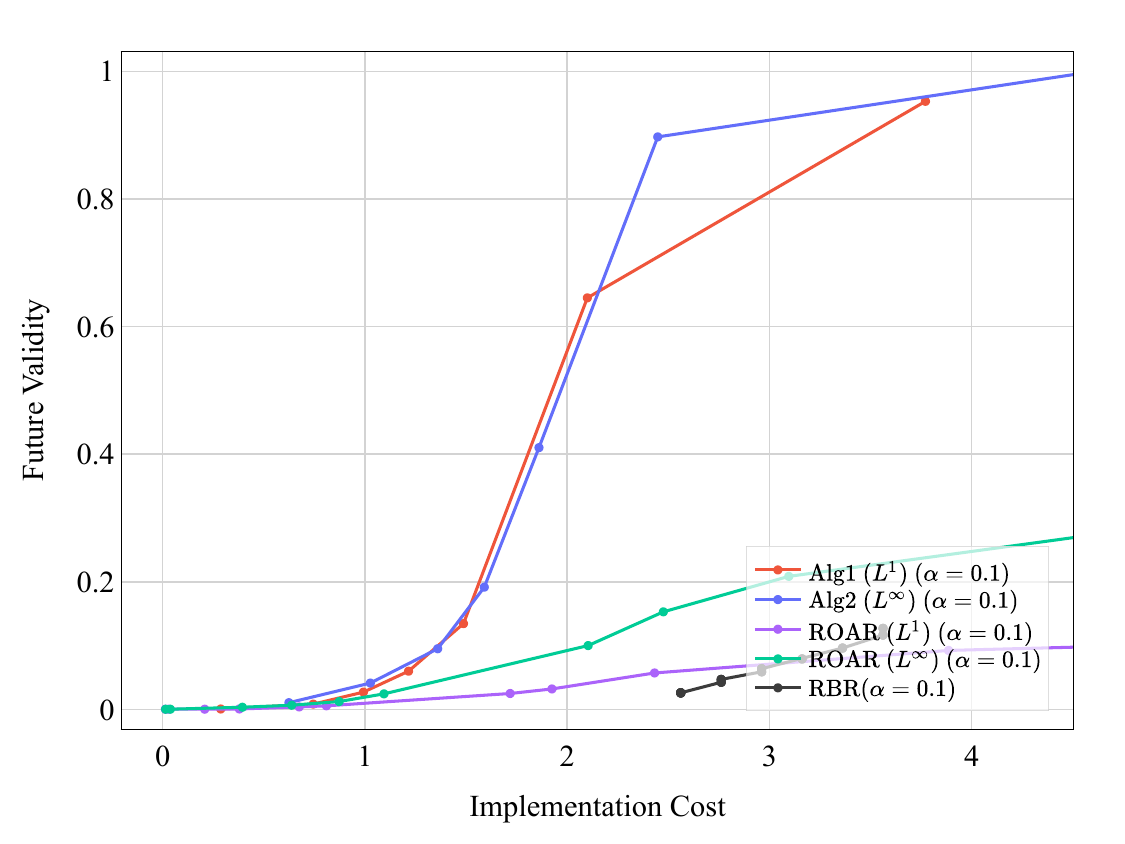}
        \caption{Future Validity, Neural Network}
        \label{fig:cost_validity_tradeoff_nn_sba_future_app}
    \end{subfigure}
    \caption{The frontier of the trade-off between validity and implementation cost on the Small Business Administration dataset with $\alpha=0.1$. The left and right columns correspond to logistic regression and neural network models. Each row corresponds to a different measure of validity. In each subfigure, curves show the trade-off for different algorithms. 
    \label{fig:cost_validity_tradeoff_sba_alpha_0.1_app}}
\end{figure*}

\begin{figure*}[ht!]
    \centering
    \begin{subfigure}[b]{0.45\textwidth}
        \centering
        \includegraphics[width=0.85\textwidth]{Figures/cost_validity-instance_wise-lr-sba-alpha_0.5-frontiers.pdf}
        \caption{Instance-Wise Validity, Logistic Regression}
        \label{fig:cost_validity_tradeoff_lr_sba_instance_alpha_0.5_app}
    \end{subfigure}
    \begin{subfigure}[b]{0.45\textwidth}
        \centering
        \includegraphics[width=0.85\textwidth]{Figures/cost_validity-instance_wise-nn-sba-alpha_0.5-frontiers.pdf}
        \caption{Instance-Wise Validity, Neural Network}
        \label{fig:cost_validity_tradeoff_nn_sba_instance_alpha_0.5_app}
    \end{subfigure}
    \begin{subfigure}[b]{0.45\textwidth}
        \centering
        \includegraphics[width=0.85\textwidth]{Figures/cost_validity-population_wise-lr-sba-alpha_0.5-frontiers.pdf}
        \caption{Population-Wise Validity, Logistic Regression}
        \label{fig:cost_validity_tradeoff_lr_sba_population_alpha_0.5_app}
    \end{subfigure}
    \begin{subfigure}[b]{0.45\textwidth}
        \centering
        \includegraphics[width=0.85\textwidth]{Figures/cost_validity-population_wise-nn-sba-alpha_0.5-frontiers.pdf}
        \caption{Population-Wise Validity, Neural Network}
        \label{fig:cost_validity_tradeoff_nn_sba_population_alpha_0.5_app}
    \end{subfigure}
    \begin{subfigure}[b]{0.45\textwidth}
        \centering
        \includegraphics[width=0.85\textwidth]{Figures/cost_validity-current-lr-sba-alpha_0.5-frontiers.pdf}
        \caption{Current Validity, Logistic Regression}
        \label{fig:cost_validity_tradeoff_lr_sba_current_alpha_0.5_app}
    \end{subfigure}
    \begin{subfigure}[b]{0.45\textwidth}
        \centering
        \includegraphics[width=0.85\textwidth]{Figures/cost_validity-current-nn-sba-alpha_0.5-frontiers.pdf}
        \caption{Current Validity, Neural Network}
        \label{fig:cost_validity_tradeoff_nn_sba_current_alpha_0.5_app}
    \end{subfigure}
    \begin{subfigure}[b]{0.45\textwidth}
        \centering
        \includegraphics[width=0.85\textwidth]{Figures/cost_validity-future-lr-sba-alpha_0.5-frontiers.pdf}
        \caption{Future Validity, Logistic Regression}
        \label{fig:cost_validity_tradeoff_lr_sba_future_alpha_0.5_app}
    \end{subfigure}
    \begin{subfigure}[b]{0.45\textwidth}
        \centering
        \includegraphics[width=0.85\textwidth]{Figures/cost_validity-future-nn-sba-alpha_0.5-frontiers.pdf}
        \caption{Future Validity, Neural Network}
        \label{fig:cost_validity_tradeoff_nn_sba_future_alpha_0.5_app}
    \end{subfigure}
    \caption{The frontier of the trade-off between validity and implementation cost on the Small Business Administration dataset with $\alpha=0.5$. The left and right columns correspond to logistic regression and neural network models. Each row corresponds to a different measure of validity. In each subfigure, curves show the trade-off for different algorithms. 
    \label{fig:cost_validity_tradeoff_sba_alpha_0.5_app}}
\end{figure*}

\subsection{Additional Details About Sparsity}
\label{sec:app-exp-sparse}
In this section, we provide the complete sparsity results for both datasets, models, and two distinct ways of measuring feature change. These results are presented in Figure~\ref{fig:sparsity_alpha_0.1_app} for $\alpha=0.1$ and Figure~\ref{fig:sparsity_alpha_0.5_app} for $\alpha=0.5$. 
\begin{figure*}[ht!]
    \centering
    \begin{subfigure}[b]{0.45\textwidth}
        \centering
        \includegraphics[width=0.85\textwidth]{Figures/sparsity-addictive-model_lr-dataset_german-alpha_0.1.pdf}
        \caption{German dataset, Logistic Regression, Additive}
        \label{fig:sparsity_add_lr_german_0.1_app}
    \end{subfigure}
    \begin{subfigure}[b]{0.45\textwidth}
        \centering
        \includegraphics[width=0.85\textwidth]{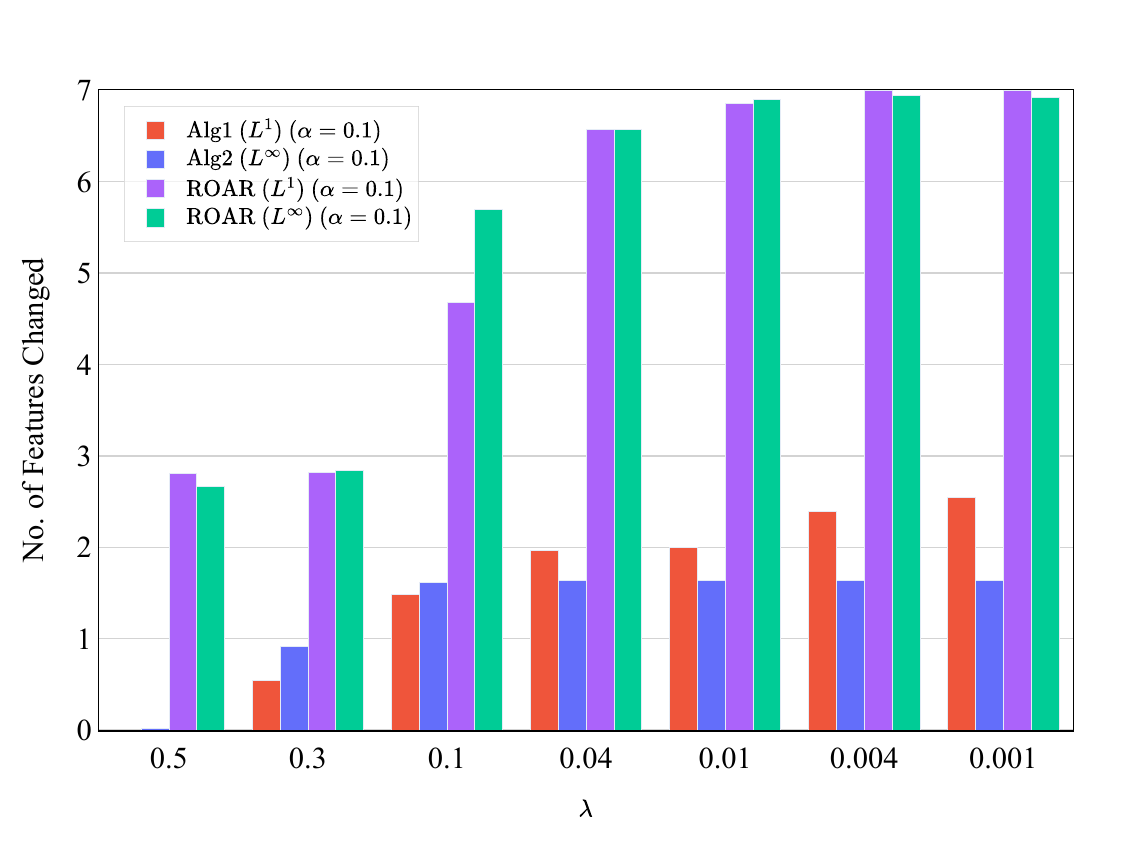}
        \caption{German dataset, Logistic Regression, Multiplicative}
        \label{fig:sparsity_multi_lr_german_0.1_app}
    \end{subfigure}
    \begin{subfigure}[b]{0.45\textwidth}
        \centering
        \includegraphics[width=0.85\textwidth]{Figures/sparsity-addictive-model_lr-dataset_sba-alpha_0.1.pdf}
        \caption{SBA Dataset, Logistic Regression, Additive}
        \label{fig:sparsity_add_lr_sba_0.1_app}
    \end{subfigure}
    \begin{subfigure}[b]{0.45\textwidth}
        \centering
        \includegraphics[width=0.85\textwidth]{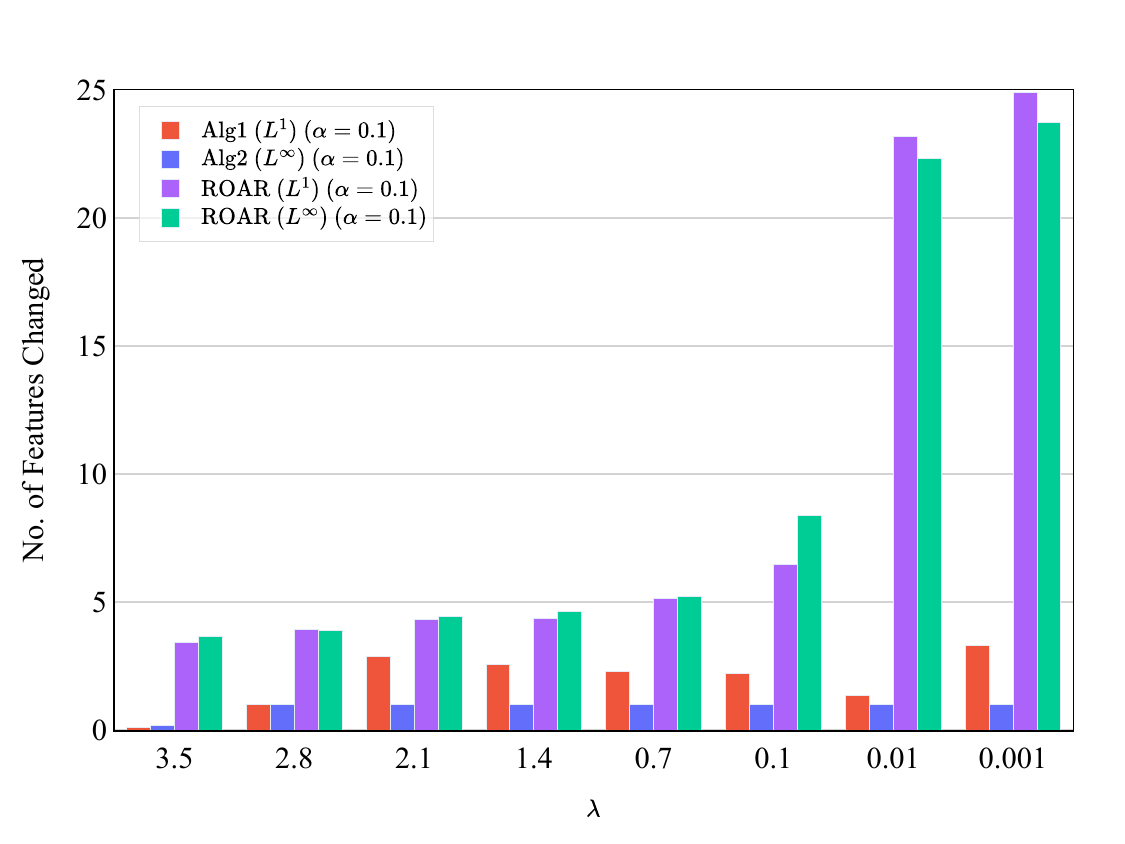}
        \caption{SBA Dataset, Logistic Regression, Multiplicative}
        \label{fig:sparsity_multi_lr_sba_0.1_app}
    \end{subfigure}
    \begin{subfigure}[b]{0.45\textwidth}
        \centering
        \includegraphics[width=0.85\textwidth]{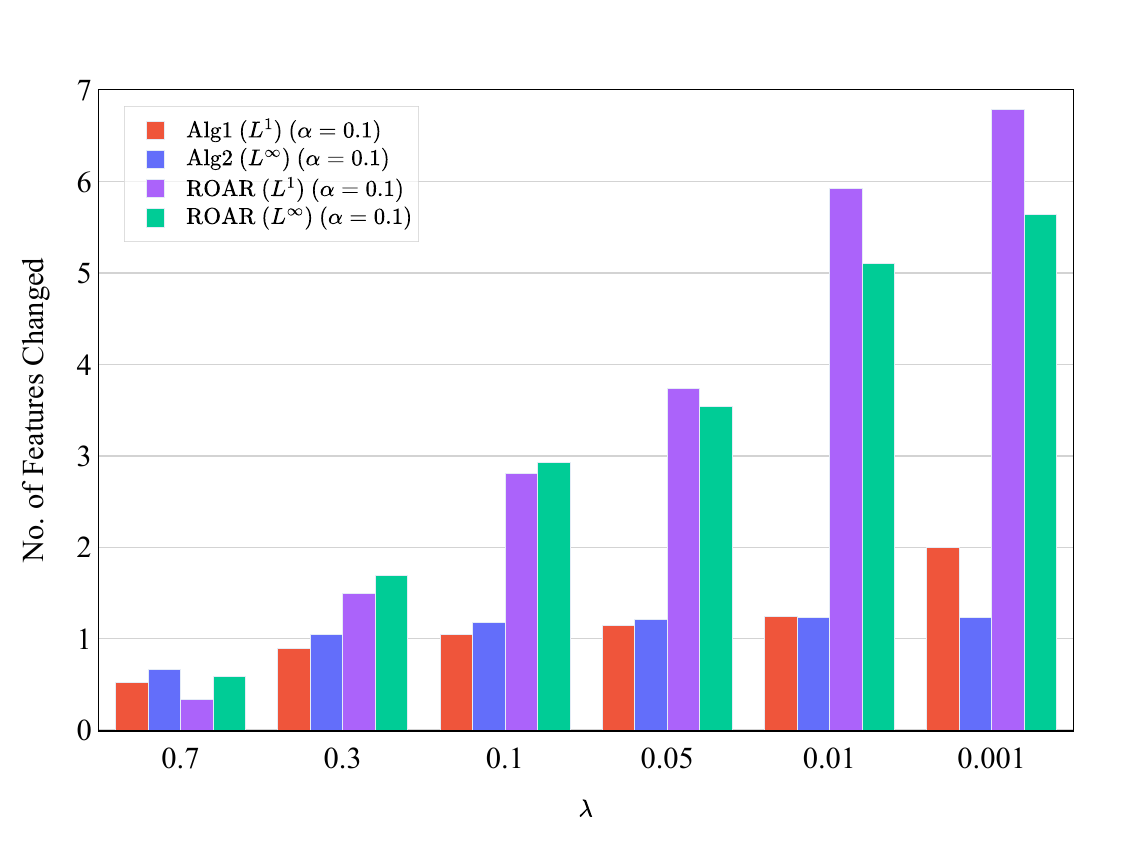}
        \caption{German dataset, Neural Network, Additive}
        \label{fig:sparsity_add_nn_german_0.1_app}
    \end{subfigure}
    \begin{subfigure}[b]{0.45\textwidth}
        \centering
        \includegraphics[width=0.85\textwidth]{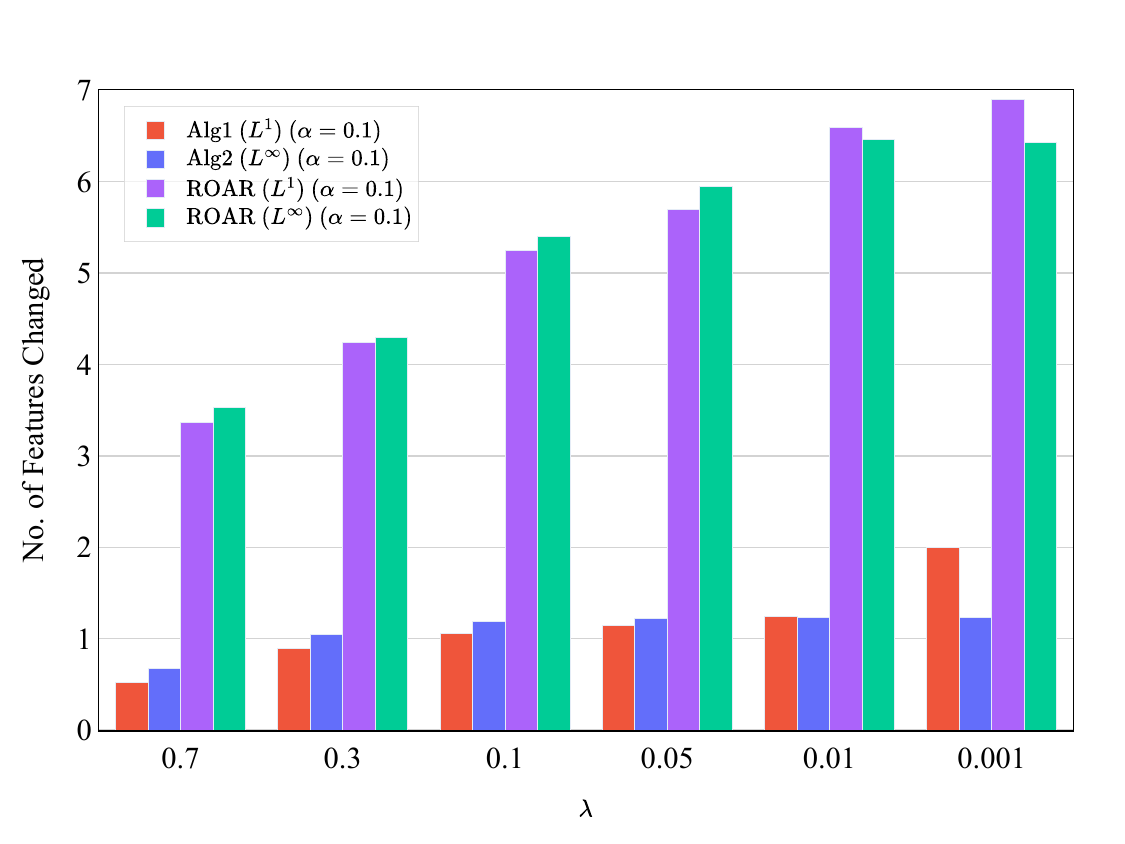}
        \caption{German dataset, Neural Network, Multiplicative}
        \label{fig:sparsity_multi_nn_german_0.1_app}
    \end{subfigure}
    \begin{subfigure}[b]{0.45\textwidth}
        \centering
        \includegraphics[width=0.85\textwidth]{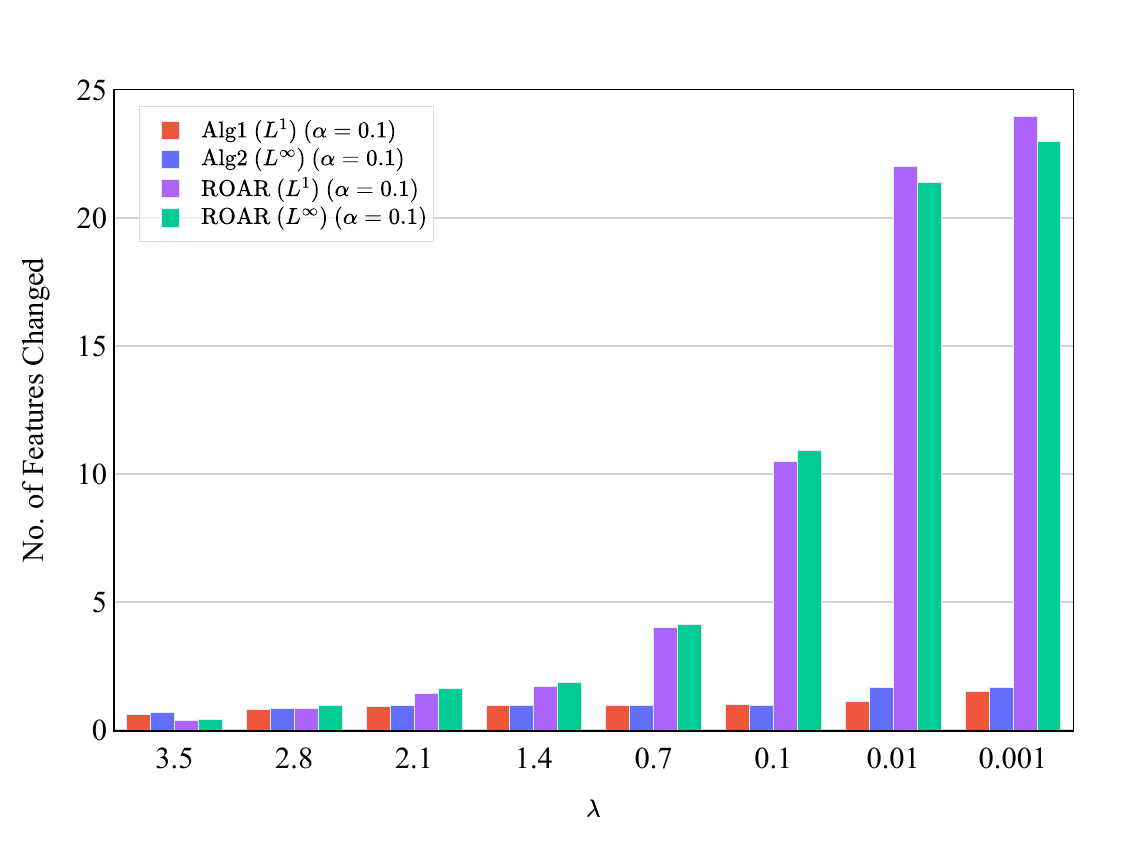}
        \caption{SBA Dataset, Neural Network, Additive}
        \label{fig:sparsity_add_nn_sba_0.1_app}
    \end{subfigure}
    \begin{subfigure}[b]{0.45\textwidth}
        \centering
        \includegraphics[width=0.85\textwidth]{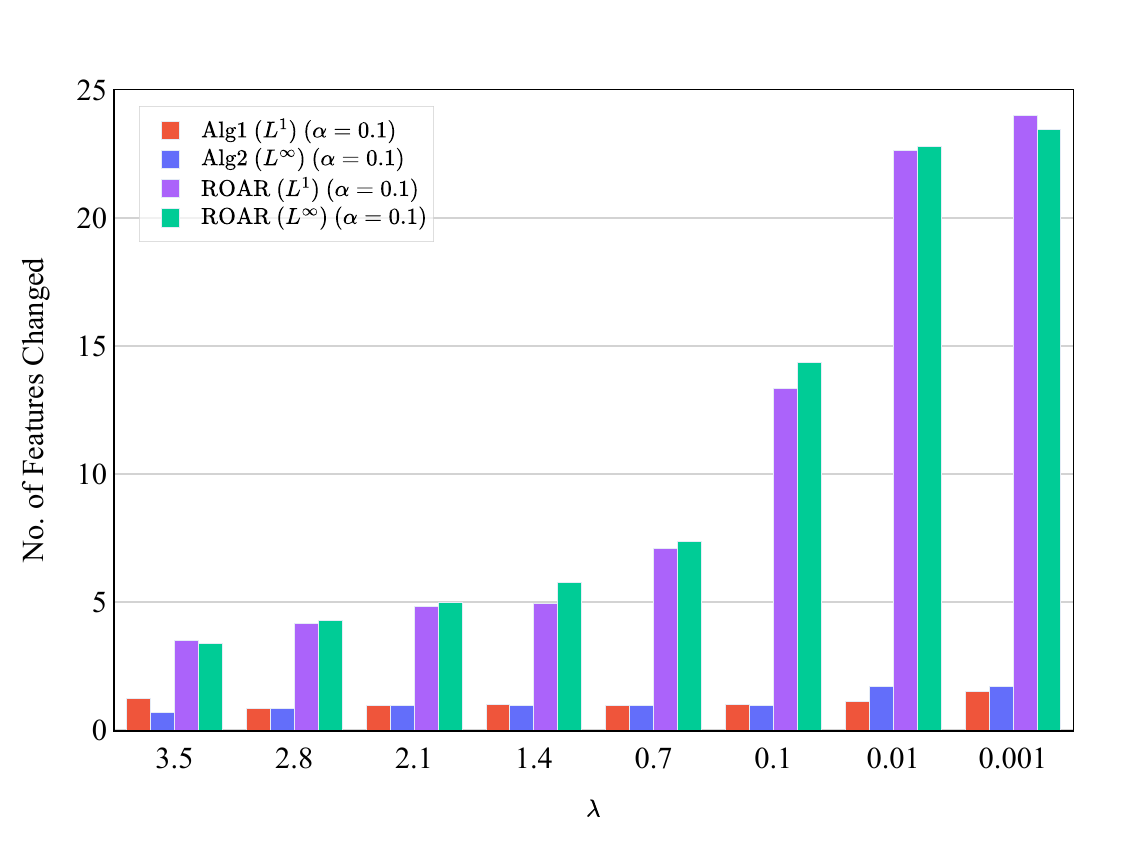}
        \caption{SBA Dataset, Neural Network, Multiplicative}
        \label{fig:sparsity_multi_nn_sba_0.1_app}
    \end{subfigure}
    \caption{Number of changed features for the German and Small Business Datasets with $\alpha=0.1$. Left and right columns correspond to measuring feature change in an additive and multiplicative manner. Each subfigure corresponds to a dataset and model combination. In each subfigure, bars depict the number of changed features for each of the algorithms at different $\lambda$ values. \label{fig:sparsity_alpha_0.1_app}}
\end{figure*}

\begin{figure*}[ht!]
    \centering
    \begin{subfigure}[b]{0.45\textwidth}
        \centering
        \includegraphics[width=0.85\textwidth]{Figures/sparsity-addictive-model_lr-dataset_german-alpha_0.5.pdf}
        \caption{German dataset, Logistic Regression, Additive}
        \label{fig:sparsity_add_lr_german_0.5_app}
    \end{subfigure}
    \begin{subfigure}[b]{0.45\textwidth}
        \centering
        \includegraphics[width=0.85\textwidth]{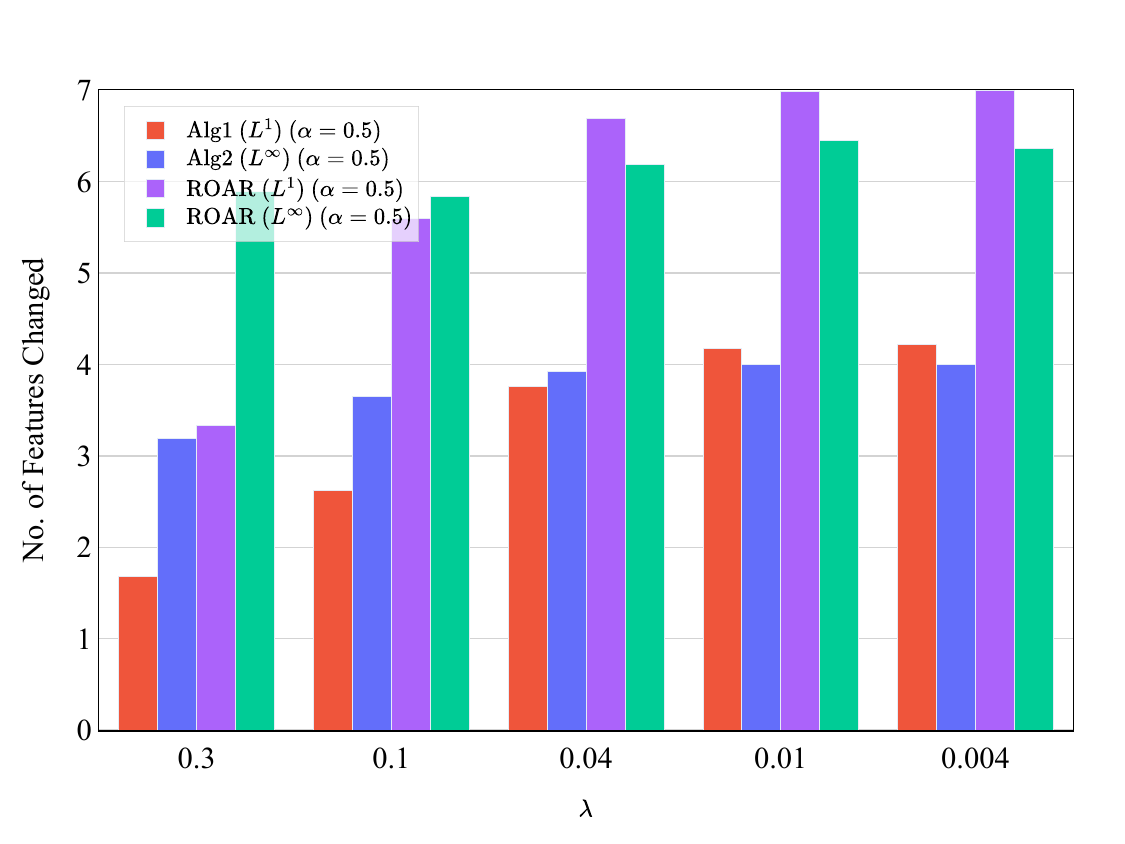}
        \caption{German dataset, Logistic Regression, Multiplicative}
        \label{fig:sparsity_multi_lr_german_0.5_app}
    \end{subfigure}
    \begin{subfigure}[b]{0.45\textwidth}
        \centering
        \includegraphics[width=0.85\textwidth]{Figures/sparsity-addictive-model_lr-dataset_sba-alpha_0.5.pdf}
        \caption{SBA Dataset, Logistic Regression, Additive}
        \label{fig:sparsity_add_lr_sba_0.5_app}
    \end{subfigure}
    \begin{subfigure}[b]{0.45\textwidth}
        \centering
        \includegraphics[width=0.85\textwidth]{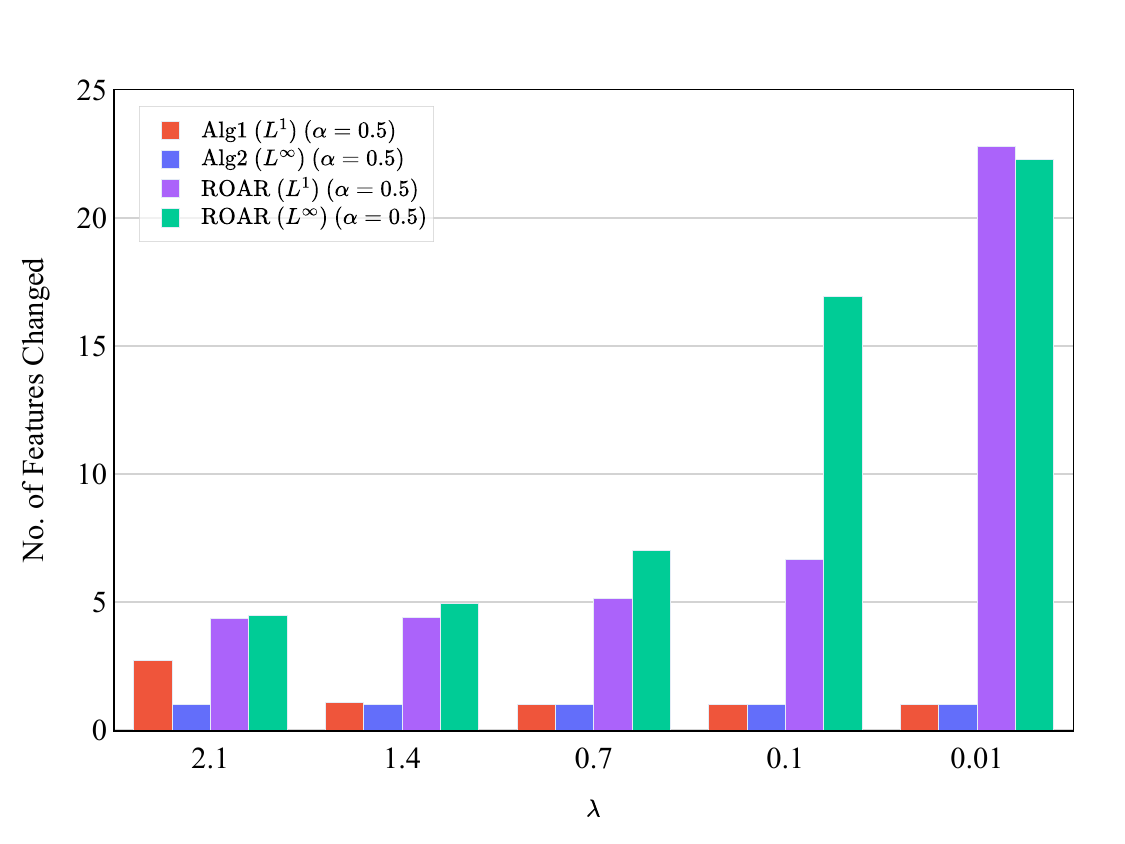}
        \caption{SBA Dataset, Logistic Regression, Multiplicative}
        \label{fig:sparsity_multi_lr_sba_0.5_app}
    \end{subfigure}
    \begin{subfigure}[b]{0.45\textwidth}
        \centering
        \includegraphics[width=0.85\textwidth]{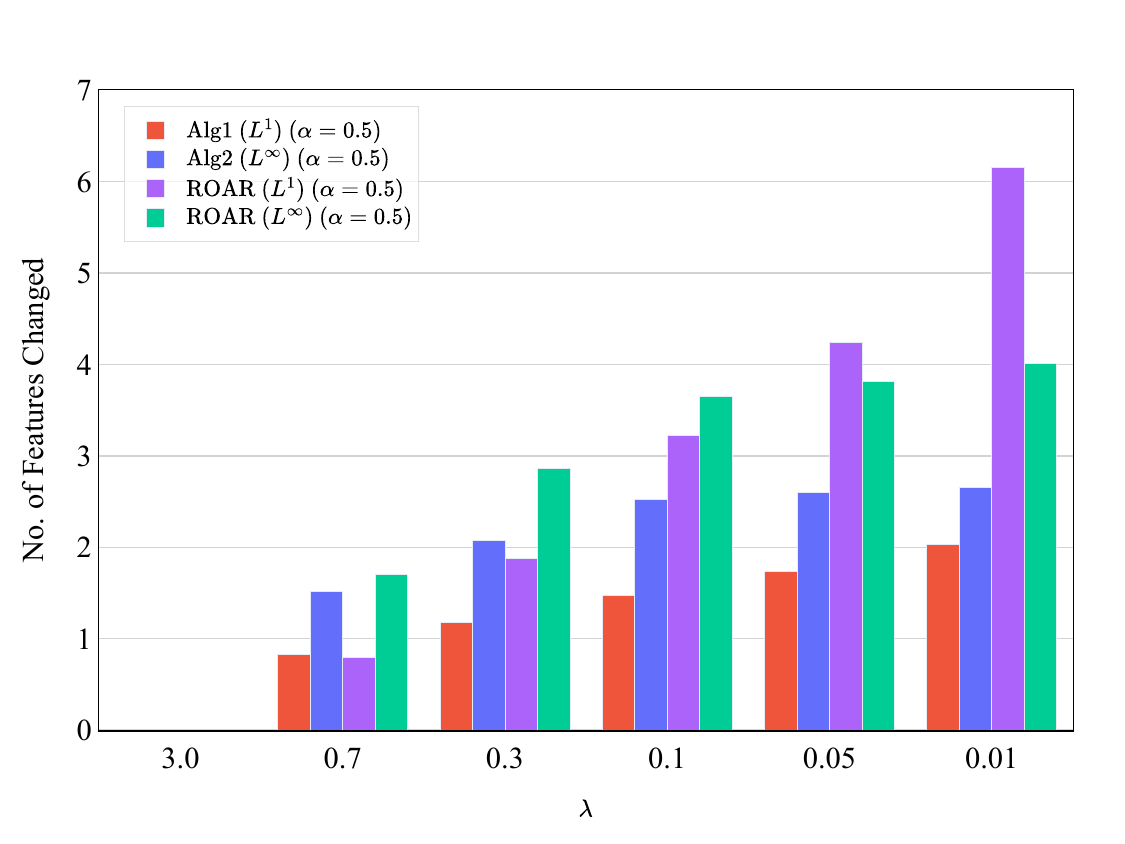}
        \caption{German dataset, Neural Network, Additive}
        \label{fig:sparsity_add_nn_german_0.5_app}
    \end{subfigure}
    \begin{subfigure}[b]{0.45\textwidth}
        \centering
        \includegraphics[width=0.85\textwidth]{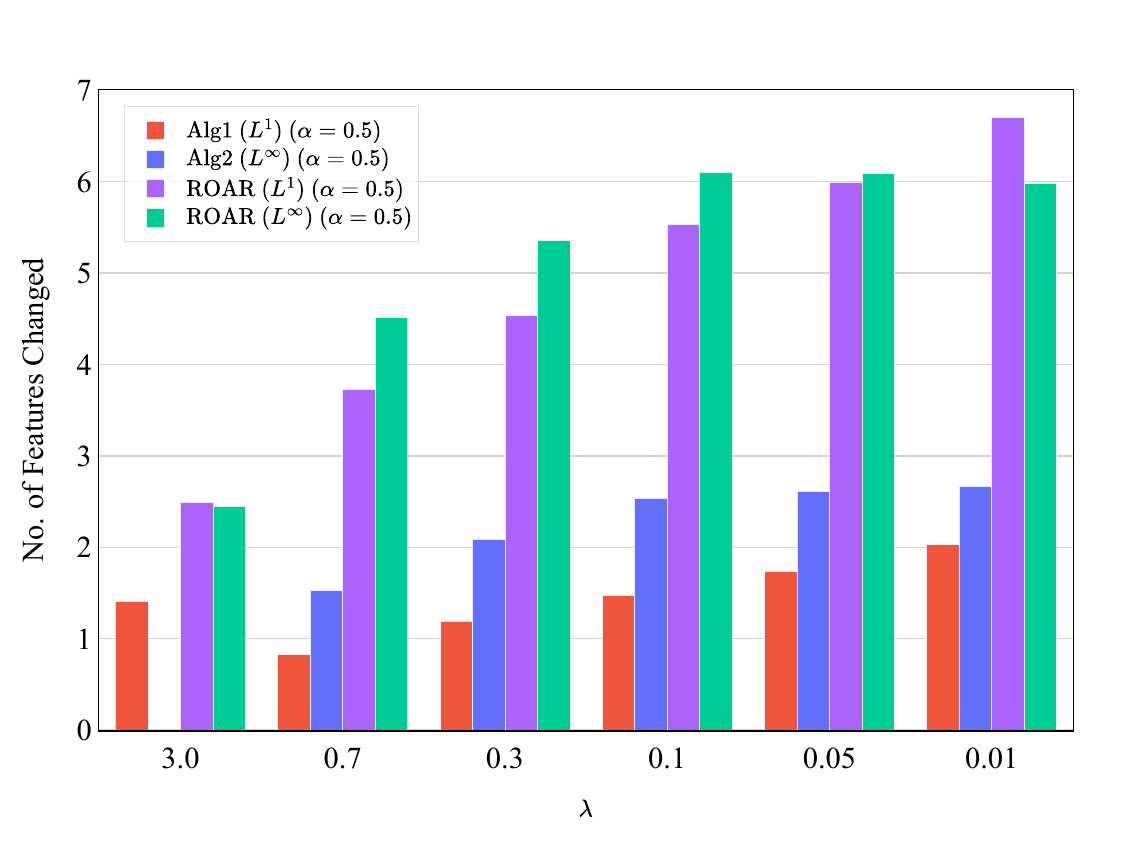}
        \caption{German dataset, Neural Network, Multiplicative}
        \label{fig:sparsity_multi_nn_german_0.5_app}
    \end{subfigure}
    \begin{subfigure}[b]{0.45\textwidth}
        \centering
        \includegraphics[width=0.85\textwidth]{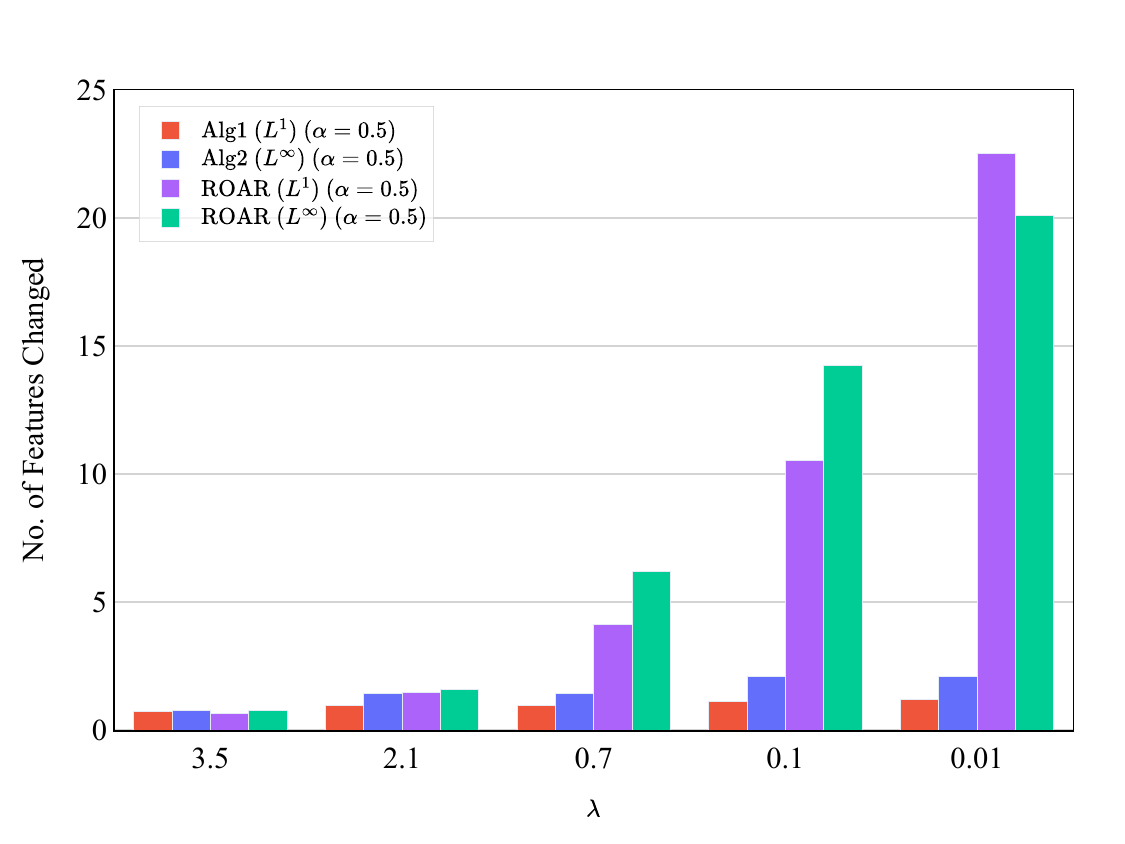}
        \caption{SBA Dataset, Neural Network, Additive}
        \label{fig:sparsity_add_nn_sba_0.5_app}
    \end{subfigure}
    \begin{subfigure}[b]{0.45\textwidth}
        \centering
        \includegraphics[width=0.85\textwidth]{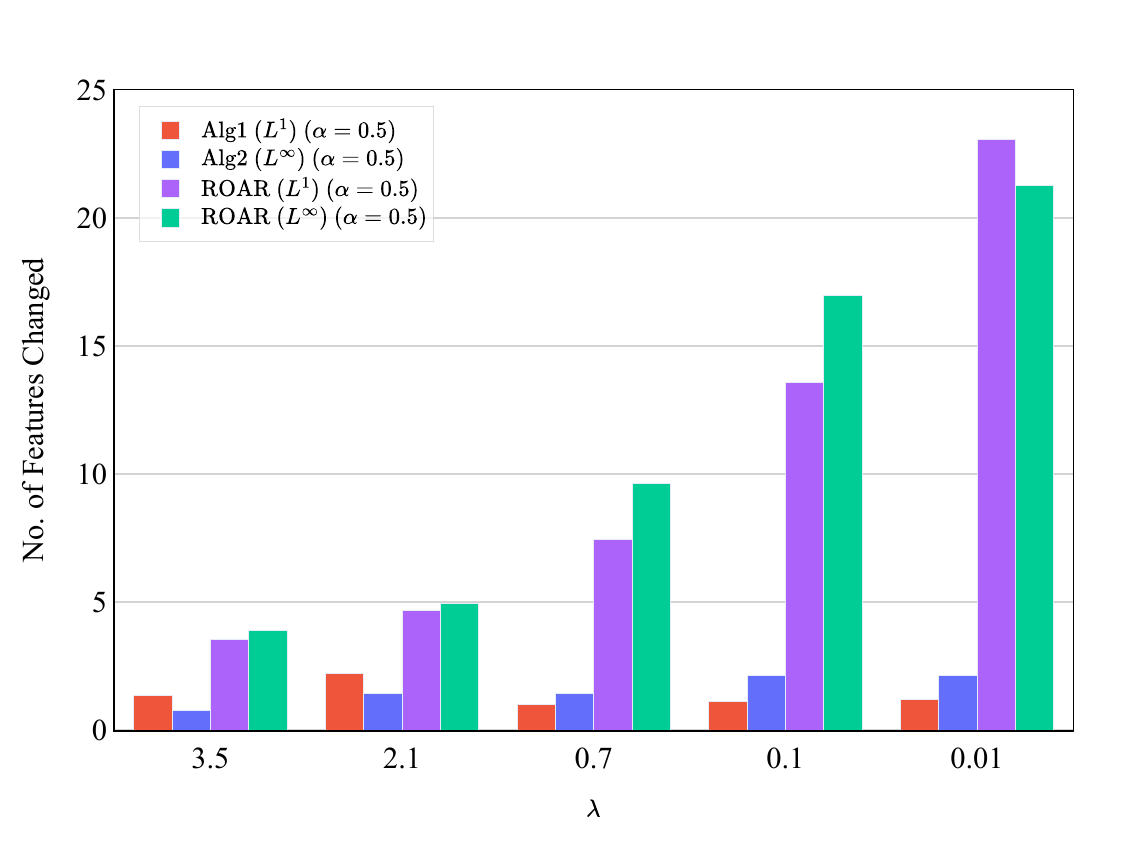}
        \caption{SBA Dataset, Neural Network, Multiplicative}
        \label{fig:sparsity_multi_nn_sba_0.5_app}
    \end{subfigure}
    \caption{Number of changed features for the German and Small Business Datasets with $\alpha=0.5$. Left and right columns correspond to measuring feature change in an additive and multiplicative manner. Each subfigure corresponds to a dataset and model combination. In each subfigure, bars depict the number of changed features for each of the algorithms at different $\lambda$ values. \label{fig:sparsity_alpha_0.5_app}}
\end{figure*}

\subsection{Additional Details About Feasibility}
\label{sec:app-exp-feasibility}
In this section, we provide the complete feasibility results for both datasets, models, and hardmax post-processing. These results are presented in Figure~\ref{fig:feasibility_alpha_0.1_app} for $\alpha=0.1$ and Figure~\ref{fig:feasibility_alpha_0.5_app} for $\alpha=0.5$. 

\begin{figure*}[ht!]
    \centering
    \begin{subfigure}[b]{0.45\textwidth}
        \centering
        \includegraphics[width=\textwidth]{Figures/cost_validity-instance_wise-lr-german-alpha_0.1-frontiers_feasible.pdf}
        \caption{German Credit dataset, Logistic Regression}
        \label{fig:feasibility_lr_german_alpha_0.1_app}
    \end{subfigure}
    \begin{subfigure}[b]{0.45\textwidth}
        \centering
        \includegraphics[width=\textwidth]{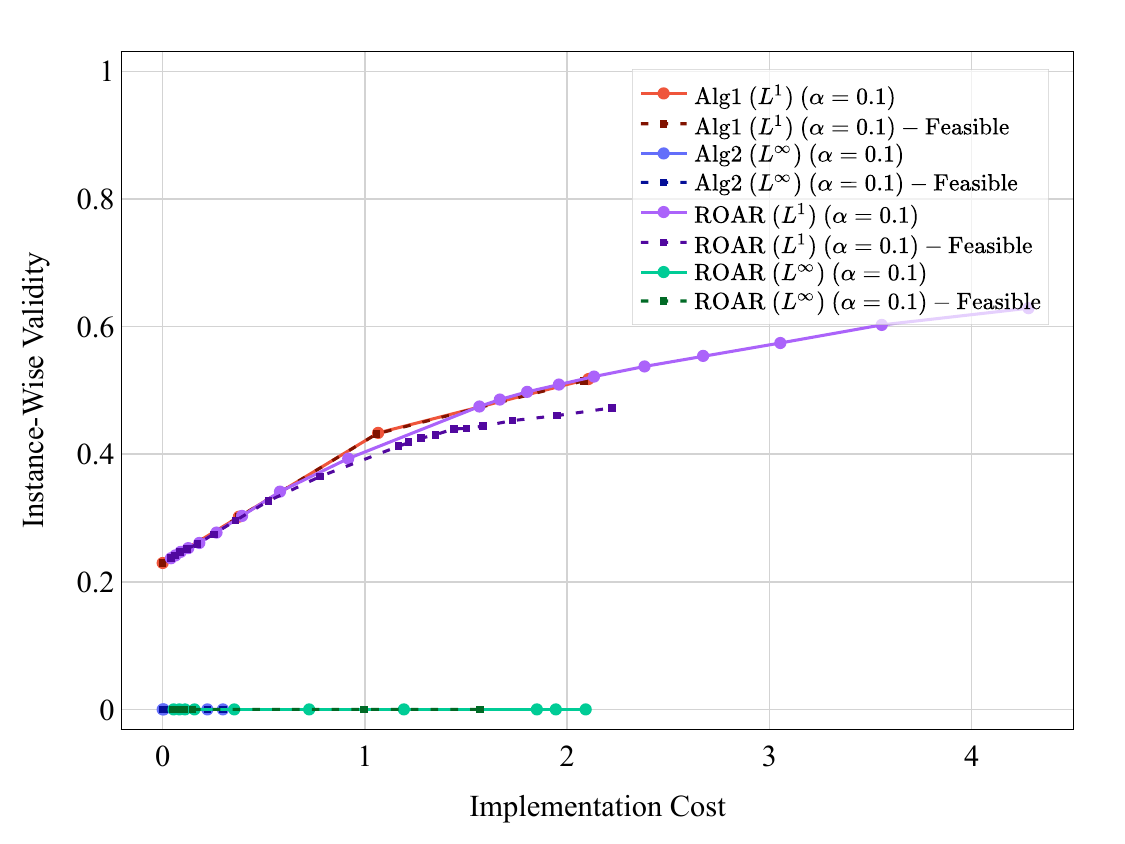}
        \caption{German Credit dataset, Neural Network}
        \label{fig:feasibility_nn_german_alpha_0.1_app}
    \end{subfigure}
    \begin{subfigure}[b]{0.45\textwidth}
        \centering
        \includegraphics[width=\textwidth]{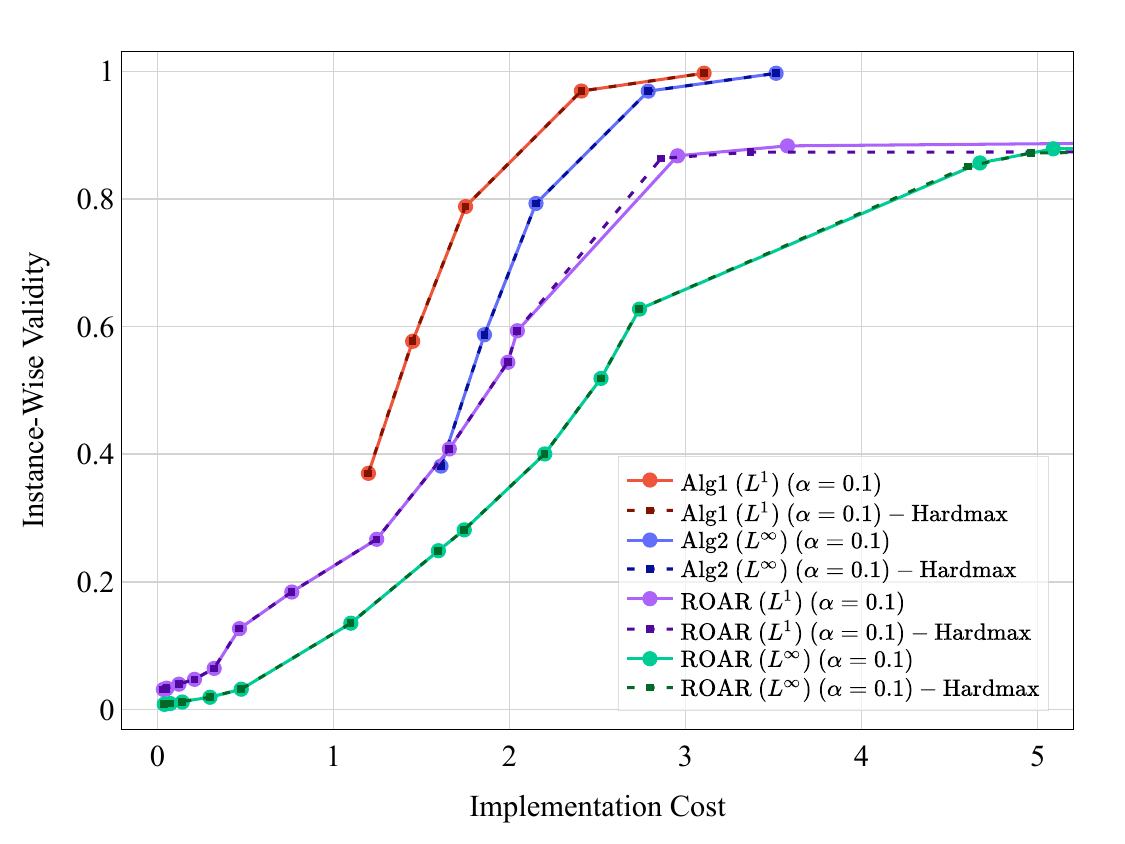}
        \caption{SBA Dataset, Logistic Regression}
        \label{fig:feasibility_lr_sba_alpha_0.1_app}
    \end{subfigure}
    \begin{subfigure}[b]{0.45\textwidth}
        \centering
        \includegraphics[width=\textwidth]{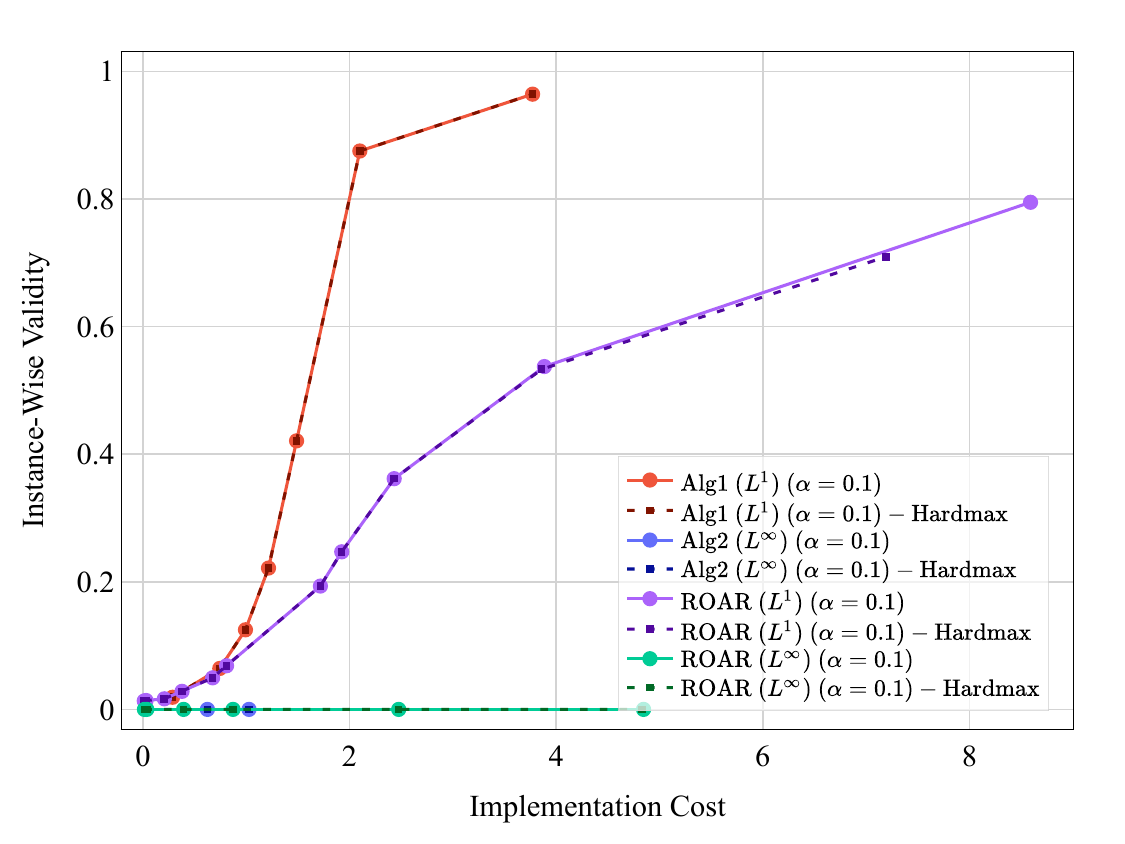}
        \caption{SBA Dataset, Neural Network}
        \label{fig:feasibility_nn_sba_alpha_0.1_app}
    \end{subfigure}
    \caption{The frontier of the trade-off between validity and implementation cost on the Small Business Administration dataset after post-processing with $\alpha=0.1$. The left and right columns correspond to logistic regression and neural network models. Each row corresponds to a different dataset. In each subfigure, curves show the trade-off for different algorithms. For each algorithm, solid and dashed lines depict the performance before and after hardmax post-processing is applied.
    \label{fig:feasibility_alpha_0.1_app}}
\end{figure*}

\begin{figure*}[ht!]
    \centering
    \begin{subfigure}[b]{0.45\textwidth}
        \centering
        \includegraphics[width=\textwidth]{Figures/cost_validity-instance_wise-lr-german-alpha_0.5-frontiers_feasible.pdf}
        \caption{German Credit dataset, Logistic Regression}
        \label{fig:feasibility_lr_german_alpha_0.5_app}
    \end{subfigure}
    \begin{subfigure}[b]{0.45\textwidth}
        \centering
        \includegraphics[width=\textwidth]{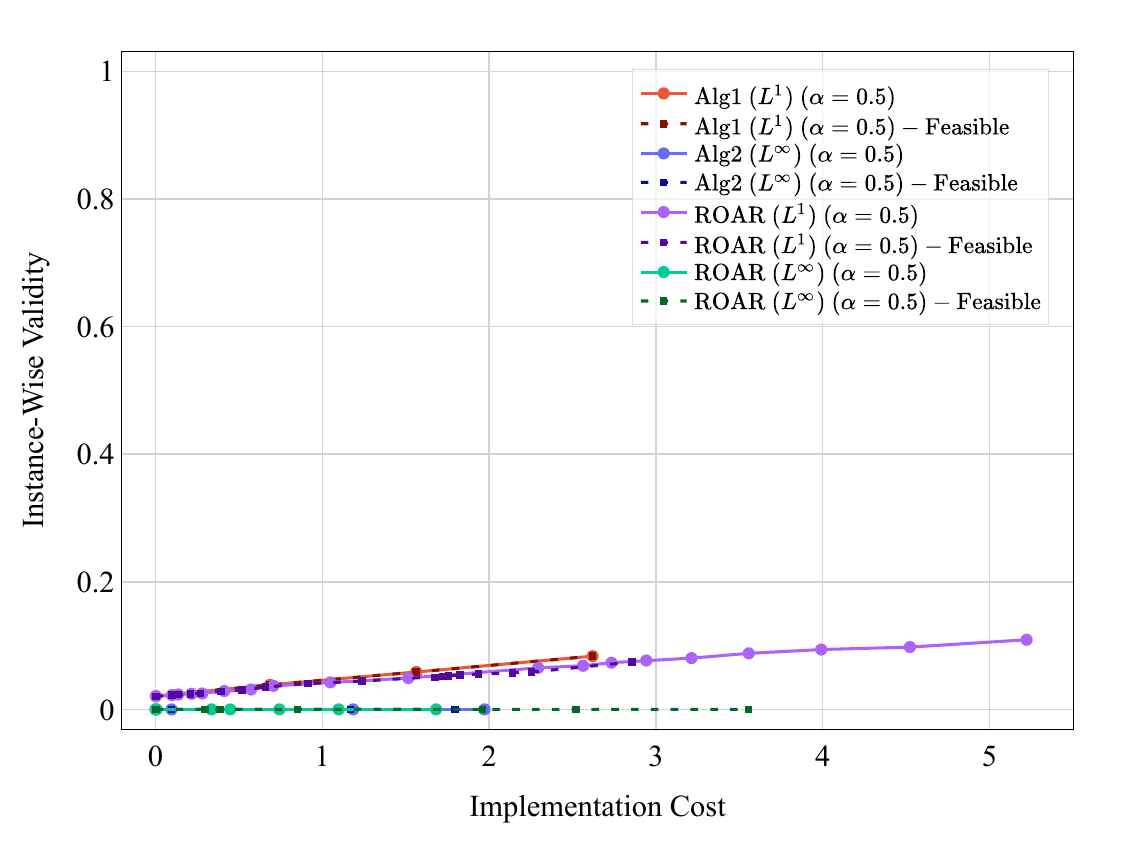}
        \caption{German Credit dataset, Neural Network}
        \label{fig:feasibility_nn_german_alpha_0.5_app}
    \end{subfigure}
    \begin{subfigure}[b]{0.45\textwidth}
        \centering
        \includegraphics[width=\textwidth]{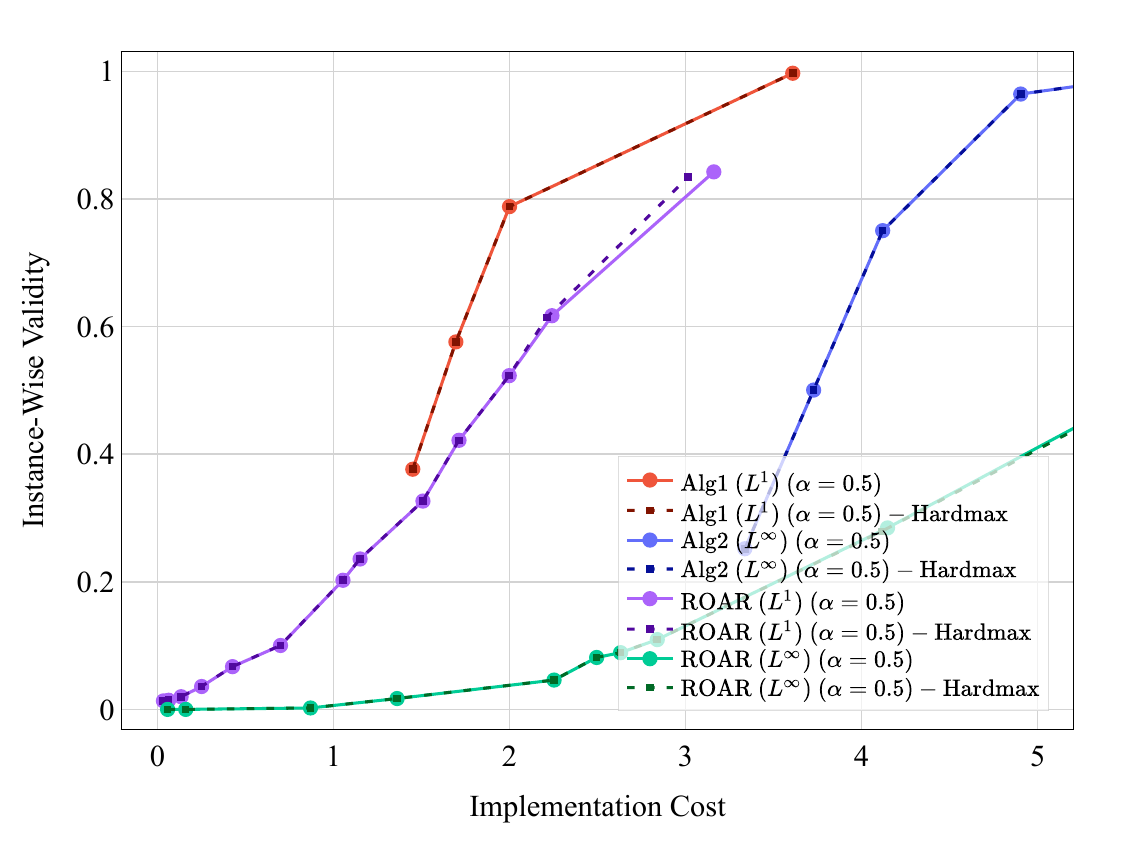}
        \caption{SBA Dataset, Logistic Regression}
        \label{fig:feasibility_lr_sba_alpha_0.5_app}
    \end{subfigure}
    \begin{subfigure}[b]{0.45\textwidth}
        \centering
        \includegraphics[width=\textwidth]{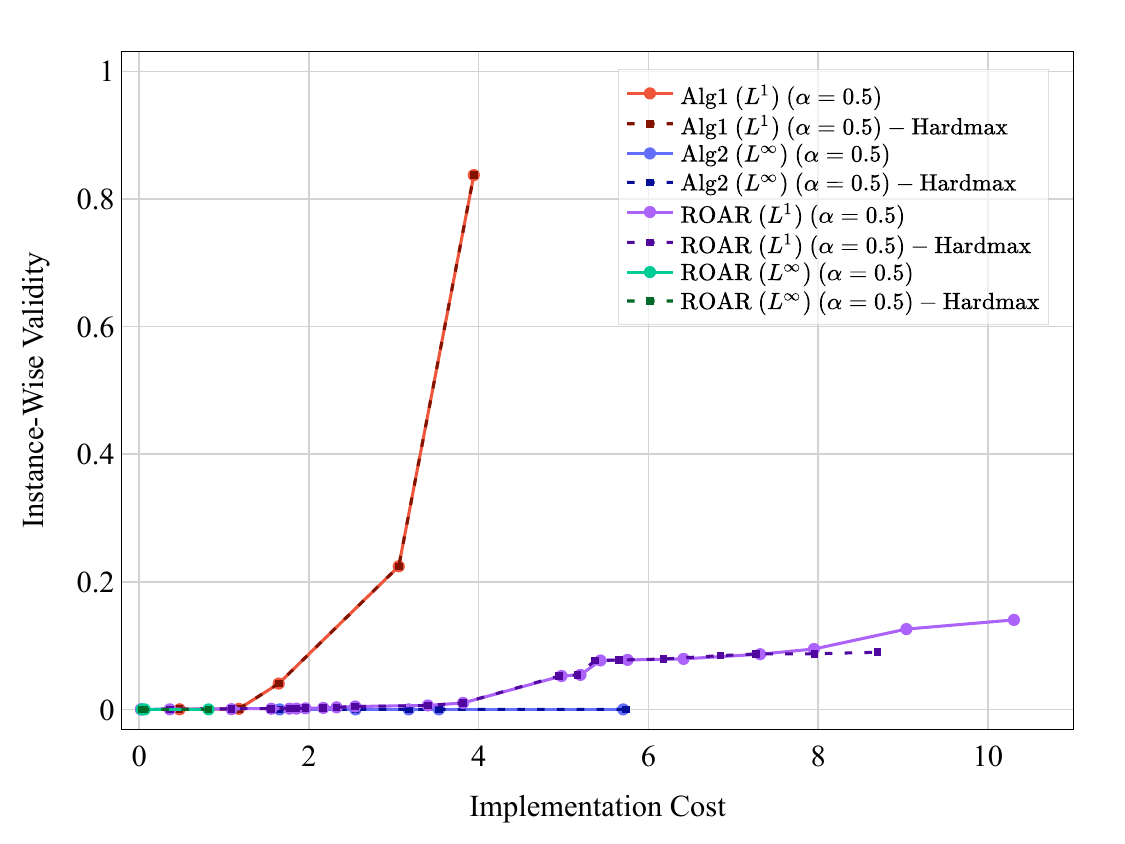}
        \caption{SBA Dataset, Neural Network}
        \label{fig:feasibility_nn_sba_alpha_0.5_app}
    \end{subfigure}
    \caption{The frontier of the trade-off between validity and implementation cost on the Small Business Administration dataset after post-processing with $\alpha=0.5$. The left and right columns correspond to logistic regression and neural network models. Each row corresponds to a different dataset. In each subfigure, curves show the trade-off for different algorithms. For each algorithm, solid and dashed lines depict the performance before and after hardmax post-processing is applied.
    \label{fig:feasibility_alpha_0.5_app}}
\end{figure*}

\end{document}